\documentclass[letterpaper]{article}
\usepackage{aaai2026}
\copyrighttext{Accepted for publication in the Proceedings of the 36th International Conference on Automated Planning and Scheduling (ICAPS 2026).}
\mathchardef\mhyphen="2D
\usepackage{amsmath}
\usepackage{amsfonts}
\usepackage{subcaption}
\usepackage{cuted}
\usepackage{algpseudocode}
\usepackage[english]{babel}
\usepackage{amsthm}
\usepackage{thmtools}
\usepackage{latexsym}

\theoremstyle{definition}

\theoremstyle{plain}
\newtheorem{theorem}{Theorem}
\newtheorem{corollary}{Corollary}
\newtheorem{proposition}{Proposition}
\newtheorem{lemma}{Lemma}

\theoremstyle{plain}

\usepackage{times}  
\usepackage{helvet}  
\usepackage{courier}  
\usepackage[hyphens]{url}  
\usepackage{graphicx} 
\usepackage{natbib}  
\usepackage{caption} 
\usepackage{algorithm}

\usepackage{algorithmicx}

\usepackage{newfloat}
\usepackage{listings}
\DeclareCaptionStyle{ruled}{labelfont=normalfont,labelsep=colon,strut=off} 
\floatstyle{ruled}
\newfloat{listing}{tb}{lst}{}
\floatname{listing}{Listing}

\title{Leveraging the Value of Information in POMDP Planning}
\author {
    Zakariya Laouar\textsuperscript{\rm 1},
    Qi Heng Ho\textsuperscript{\rm 2},
    Zachary Sunberg\textsuperscript{\rm 1}
}
\affiliations {
    \textsuperscript{\rm 1}Department of Aerospace Engineering Sciences, University of Colorado Boulder\\
    \textsuperscript{\rm 2}Kevin T. Crofton Department of Aerospace and Ocean Engineering, Virginia Tech\\
    zakariya.laouar@colorado.edu, qihengho@vt.edu, zachary.sunberg@colorado.edu
}

\begin{document}

\maketitle

\begin{abstract}
Partially observable Markov decision processes (POMDPs) offer a principled formalism for planning under state and transition uncertainty. Despite advances made towards solving large POMDPs, obtaining performant policies under limited planning time remains a major challenge due to the curse of dimensionality and the curse of history. For many POMDP problems, the value of information (VOI) --- the expected performance gain from reasoning about observations --- varies over the belief space. We introduce a dynamic programming framework that exploits this structure by conditionally processing observations based on the value of information at each belief. Building on this framework, we propose Value of Information Monte Carlo planning (VOIMCP), a Monte Carlo Tree Search algorithm that allocates computational effort more efficiently by selectively disregarding observation information when the VOI is low, avoiding unnecessary branching of observations. We provide theoretical guarantees on the near-optimality of our VOI reasoning framework and derive non-asymptotic convergence bounds for VOIMCP. Simulation evaluations demonstrate that VOIMCP outperforms baselines on several POMDP benchmarks.
\end{abstract}

\section{Introduction}
Decision making in uncertain environments presents a critical challenge. The partially observable Markov decision process (POMDP) framework provides a principled way to plan in such environments \citep{astrom1965optimal, smallwood1973optimal}. Their applications include robotics \cite{Lauri2023POMDProbotics}, resource management \cite{PAPAKONSTANTINOU2014maintenance}, human-computer interaction \cite{Chen2020trustpomdp}, and medical diagnosis \cite{ayer_or_2012}. However, finding even approximate solutions for POMDPs is computationally intractable \cite{madani2003undecidability} for two main reasons. 

Firstly, the computational complexity of the problem scales exponentially with the dimensionality of the state, action, and observation spaces. Secondly, since the state is not directly observable, optimal decisions may depend on the entire history of previous actions and observations, the space of which grows exponentially with the planning horizon. These two fundamental challenges are known as the \emph{curse of dimensionality} and \emph{curse of history}. In this work, we aim to alleviate the curse of history by developing a framework that enables planners to dynamically prune areas of the action-observation histories while maintaining performance guarantees.

To address the challenges of POMDP planning, many practical solvers have been proposed in the past few decades \citep{shani2013survey, silver2010pomcp, Kurniawati-RSS08-SARSOP, Smith2005HSVI2, despot, sunberg2018pomcpow}. Sampling-based approximate solvers such as \citet{silver2010pomcp, sunberg2018pomcpow, despot} have made good progress, especially regarding the curse of dimensionality in the state space, by employing particles in search and exploring a subset of histories in the search tree. Nonetheless, problems with large observation spaces still remain relatively difficult for state-of-the-art planners. In such problems, many planners tend to generate shallow trees, which lead to myopic plans that fail to capture long-horizon consequences of such plans.

For many POMDP problems, the value of information (VOI) \cite{wei2024value, flaspohler2020belief}, which is the performance gain from fully reasoning about all possible observations from a belief or history, varies significantly over the belief space. This suggests that it may not always be necessary to consider all possible observations at every decision point: when the VOI is small, ignoring observations may incur negligible performance loss while substantially reducing the effective branching on observations.

In this work, we introduce an adaptive VOI framework that formalizes this structure by selectively processing observations based on the VOI at a belief. When VOI is low, this framework disregards uninformative observations. We further derive an upper bound on the suboptimality incurred by this adaptive VOI framework relative to the original POMDP value function. This framework offers a principled mechanism that formalizes the trade-off between open-loop execution (disregarding observations) and closed-loop execution (reasoning over observations). Then, we introduce the VOI-POMDP, a structural transformation that encodes this meta-level choice directly into the problem dynamics.

Leveraging this theoretical framework, we propose value of information Monte Carlo planning (VOIMCP), an MCTS-based POMDP planning algorithm that selectively branches on observations when useful, enabling more efficient planning for POMDPs with lower value of information by implicitly pruning the history space and reducing the effective branching factor on observations. We prove the non-asymptotic theoretical properties of VOIMCP, and show that we can also recover the optimal POMDP policy. Simulation experiments across a series of benchmarks show that incorporating our VOI framework in online time-constrained planning leads to better performance by enabling long-horizon reasoning.
\section{Related Work}
Incorporating the value of information in decision making can be traced back to work on information value theory \citep{howard1966information}. In this line of work, the value of computation is treated such that computations are selected according to the expected improvement in decision quality resulting from their execution. \citet{russel1988game} and \citet{russel1991metareasoning} formulated a meta-level decision problem designed to enable optimal allocation of computation, while other works employ heuristic approximations \cite{hay2014selecting}. However, these approaches are limited to fully observable sequential decision making.

The use of the VOI for partially observable problems has been relatively less explored. \citet{wei2024value} discusses the value of information based on the difference between purely open-loop and purely closed-loop policies. In contrast, we consider a recursive formulation of VOI that enables adaptive switching between the two modalities.

A recent work by \citet{flaspohler2020belief} shows how VOI can be used to construct macro-actions in POMDPs. The authors propose a VOI framework similar to the one we introduce in this paper. However, while \citet{flaspohler2020belief} define a Bellman backup based on an absolute difference threshold between open-loop and closed-loop values, our formulation computes a relative percentage difference threshold of the closed-loop value. This formulation is more general and intuitive since the VOI reasoning does not depend on the specific POMDP instantiation, its reward structure or the magnitude of its optimal value. This enables adaptive reasoning. Additionally, we introduce a novel alternative POMDP representation that exposes the machinery relevant for VOI reasoning in planning. Finally, we leverage our framework in a variant of POMCP for sampling-based planning instead of a method to construct macro-actions.

A core challenge when using tree search methods to solve POMDPs lies in the search strategy: when to simulate empirically valuable actions vs. lesser explored actions. This exploration-exploitation tradeoff has been the subject of many works attempting to address the curse of history. Several approaches have incorporated heuristic information gathering procedures. \citet{alves2023ibpomcp, pokharel2024increasing} use belief and observation entropy heuristics to bias the policy towards entropy-reducing actions as a proxy for actions with high VOI. However, neither belief nor observation entropy alone can sufficiently characterize the true VOI at a belief. Instead, we tackle the exploration-exploitation tradeoff by explicitly reasoning over VOI. 

\citet{kim_refpomdp, Kim2025porp} have explored sampling actions based on a reference policy instead of fully expanding them at each node, showing promise in addressing challenges with long-horizon planning. While \citet{kim_refpomdp} focus on reducing the effective action branching by embedding a reference policy in the objective, our work can be seen as reducing the effective observation branching factor when the value gained by branching is low.

Our work is also related to abstraction \citep{ho2019value} and selective perception \cite{mccallum1996reinforcement}. In particular, taking an open-loop execution mode in our framework is equivalent to abstracting all observations into a single observation. In online POMDP planning, \citet{Wu2021adaOps} propose a belief packing procedure that merges similar beliefs, effectively creating a branch that corresponds to an abstracted observation. On the other hand, our methodology adaptively ignores observations based on the VOI rather than belief similarity. This distinction is critical because observations may generate statistically dissimilar beliefs that nonetheless support the same optimal policy. Whereas similarity-based metrics would force branching in these cases, our value-based approach identifies that the information is decision-irrelevant, enabling more aggressive pruning of the search space.
\section{Partially Observable Markov Decision Processes}
\label{sec: preliminaries}
A finite-horizon \emph{Partially Observable Markov Decision Process} (POMDP) can be formally represented as a tuple $\mathcal{P}$ = $(\mathcal{S}, \mathcal{A}, \mathcal{O}, \mathcal{T}, \mathcal{R}, \mathcal{Z}, D, b_0, \gamma)$, where:
$\mathcal{S}, \mathcal{A},$ and $\mathcal{O}$ are finite sets of states, actions, and observations, respectively.
$\mathcal{T} : \mathcal{S} \times \mathcal{A} \times \mathcal{S} \rightarrow [0,1]$ is the transition probability function,
$\mathcal{R} : \mathcal{S} \times \mathcal{A} \rightarrow [-R_{max}, R_
{max}]$, for some $R_{max} > 0$, is the immediate reward function,
$\mathcal{Z} : \mathcal{S} \times \mathcal{A} \times \mathcal{O} \rightarrow [0,1]$ is the probabilistic observation function, $D$ is the horizon of the problem, $b_0 \in \Delta(\mathcal{S})$ is the initial belief or probability distribution over states,
and $\gamma \in [0,1]$ is the discount factor.

A defining characteristic of POMDPs is that the state $s\in\mathcal{S}$ cannot be observed directly. Instead, after taking an action $a\in \mathcal{A}$, an observation $o\in \mathcal{O}$ is received which provides incomplete information about the state. Unlike in MDPs, where optimal actions depend on states alone, in POMDPs, the optimal action must be computed from a history $h_t \in \mathcal{H}$ at time $t$, which defines a sequence of action-observation pairs: $h_t \equiv \{b_0, a_1, o_1, ... , a_t, o_t\}$. Action-terminated histories are defined as $ha_{t+1} \equiv \{b_0, a_1, o_1, ... , a_t, o_t, a_{t+1}\}$. It is sufficient to summarize the information contained in $h_t$ as the belief $b_t(s) \equiv \mathbb{P}(s_t=s|h_t) \in \Delta(\mathcal{S})$. With a slight abuse of notation, we sometimes drop the subscript $t$ and express histories as $h$ and action-terminated histories as $ha$.

A general finite-horizon policy is a sequence $\pi = (\pi_D,\dots,\pi_{1})$ where each $\pi_d:\Delta(\mathcal{S}) \to \mathcal{A}$ describes how an agent behaves at any belief. 

The objective of the agent is to find a policy that maximizes the discounted sum of future rewards for each belief starting from an initial belief $b_0$. Formally, the optimal policy is defined as:
\begin{align}
  \pi^* \in \operatorname*{arg\,max}_{\pi} \mathbb{E}\Bigl[ \sum_{t=0}^{D-1}\gamma^t r(b_t, \pi_{D-t}(b_t)) \; \Big| \; b_0\Bigr],
\end{align}
where $b_t$ is the updated belief at time $t$.

Given an action and observation, a belief update can be performed using Bayes' rule:
\begin{align}
b'(s')  = \tau(b,a,o)(s') &=
  \frac{\mathcal{Z}(s',o,a)\;\sum_{s \in \mathcal{S}} \mathcal{T}(s, a, s')\, b(s)}
       {\mathbb{P}(o \mid b, a)}, \label{eq:belief_update_full}
\end{align}
where the probability of observing $o$ is
\begin{align}
    \mathbb{P}(o \mid b, a) = \sum_{s'\in \mathcal{S}}\mathcal{Z}(s',o,a) \sum_{s\in \mathcal{S}} \mathcal{T}(s,a,s')b(s).
\end{align}

The optimal value function, denoted $V_d^*(b)$, represents the maximum discounted expected reward obtainable from belief $b$ with $d$ steps remaining. It is defined recursively by the Bellman optimality equation:
\begin{align}
  V_d^*(b) &= \max_{a \in \mathcal{A}}
    \Bigl\{ r(b,a) 
      + \gamma \mathbb{E}[V_{d-1}^*(\tau(b,a,o))] \Bigr\}, \label{eq:optimal_value}
\end{align}
with $V_0^*(b) = 0$. Here, the expected immediate reward is $r(b,a) = \sum_{s\in\mathcal{S}} b(s)\,\mathcal{R}(s,a)$, and the expectation over future values is computed with respect to the observation probabilities:
\begin{align}
    \mathbb{E}[\cdot] = \sum_{o \in \mathcal{O}} \mathbb{P}(o \mid b,a)\, V_{d-1}^*(\tau(b,a,o)).
\end{align}

It is often useful to express the optimal value for a particular action at a belief using the $Q$-value function:
\begin{align}
  Q_d^*(b,a) &= r(b,a)
      + \gamma \mathbb{E}[V_{d-1}^*(\tau(b,a,o))].
\end{align}

Consequently, the optimal decision rule for the horizon step $d$ can be retrieved directly from the $Q$-values:
\begin{equation}
  \pi_d^*(b) \in \operatorname*{arg\,max}_{a\in \mathcal{A}}\;Q_d^*(b,a).
\end{equation}
\section{Value of Information for POMDPs}
The \textit{curse of history} renders exact planning intractable for non-trivial horizons and problem sizes. However, the standard formulation implicitly assumes that every observation is critical for decision-making, necessitating reasoning over all possible reachable histories. In practice, the utility of observation information is rarely uniform across the belief space; there are often regions where an agent can act effectively without immediate observation feedback. In this section, we introduce our adaptive value of information (VOI) reasoning framework to exploit this non-uniformity. By formalizing the trade-off between open-loop execution (disregarding observations) and closed-loop execution (branching on observations), we expose a principled mechanism that enables an agent to selectively prune the history space while maintaining bounds on optimality. This lays the fundamental groundwork for our algorithm that exploits this tradeoff for improved search efficiency.

\subsection{Adaptive Value of Information}

As discussed in the previous section, the optimal solution to a POMDP satisfies the Bellman equation given in \eqref{eq:optimal_value}. Here, the agent performs a Bayesian belief update \eqref{eq:belief_update_full} after every action and observation pair. An agent that integrates feedback at every step is said to act in a \emph{closed-loop} manner: by acting, observing, and updating its belief, it closes the feedback loop prior to every subsequent decision. The corresponding closed-loop value function is
\begin{eqnarray}
        V_d^{CL}(b) = \max_{a \in \mathcal{A}} 
        \Bigl\{ r(b,a) + \gamma \mathbb{E}_{o}\bigl[V_{d-1}^{CL}(\tau(b,a,o)) \bigr] \Bigr\}.
        \label{eq:vcl}
\end{eqnarray}

From a computational complexity perspective, the number of distinct histories at depth $d$ is
\begin{align}
    |\mathcal{H}_d|^{CL} = (|\mathcal{A}|\cdot |\mathcal{O}|)^d.
\end{align}

Since the branching factor depends on the size of the observation space, the number of histories grows exponentially with the planning horizon. This relationship directly highlights the \emph{curse of history}.

Alternatively, an agent may choose to ignore observations, planning based purely on expected beliefs. This is referred to as \emph{open-loop} reasoning. Formally, an agent employs an \emph{open-loop belief update} by marginalizing over the observation space in \eqref{eq:belief_update_full}, such that
\begin{eqnarray}
    \tau(b,a)(s') = \mathbb{E}_o[\tau(b,a,o)] = \sum_{s} \mathcal{T}(s,a,s')\, b(s). \label{eq:belief_update_open}
\end{eqnarray}
The resulting \emph{open-loop value function} is defined as
\begin{eqnarray}
        V_d^{OL}(b) = \max_{a \in \mathcal{A}} 
        \Bigl\{ r(b,a) + \gamma V_{d-1}^{OL}(\tau(b,a)) \Bigr\},
        \label{eq:vol}
\end{eqnarray}
with $V^{OL}_0(b) = 0$. The computational advantage is immediately apparent: the number of distinct histories at depth $d$ when acting open-loop drops by a factor of $|\mathcal{O}|$ to
\begin{align}
    |\mathcal{H}_{d}|^{OL} = |\mathcal{A}|^d \ll|\mathcal{H}_{d}|^{CL}.
\end{align}

However, acting open-loop is generally suboptimal. Since the value function is convex with respect to the belief state \citep{smallwood1973optimal}, Jensen's inequality implies that the value of the expected belief is upper bounded by the expected value of the posterior beliefs:
\begin{align}
        \mathbb{E}_o[V_d^*(\tau(b,a,o))] &\geq V_d^*(\mathbb{E}_o[\tau(b,a,o)]),\\
        V^{CL}_d &\geq V^{OL}_d.
\end{align}

We can quantify this suboptimality as the simple \emph{value of information} (VOI), defined as the difference between the closed-loop and open-loop values \citep{wei2024value}:
\begin{eqnarray}
    VOI_d(b) & = & V_{d}^{CL}(b) - V_d^{OL}(b).
    \label{eq:voi_simple}
\end{eqnarray}

The simple VOI presents a static dichotomy: it compares a policy that \emph{always} observes against one that \emph{never} observes over the horizon $d$. However, this fails to capture the potential for adaptive reasoning over the value of information. In many POMDPs, the performance gain from reasoning about observations varies over the belief space. For instance, an agent may justifiably act in an open-loop manner while observations are uninformative, but must switch to closed-loop planning as uncertainty accumulates. Therefore, a more nuanced treatment of VOI requires evaluating the value of information locally at each belief state, allowing the agent to dynamically switch between open-loop and closed-loop modalities.

In what follows, we outline a framework designed to enable this selective reasoning. We begin by defining a new value function, denoted as $\hat{V}_d^*$, which is the value of a policy that selects between open-loop and closed-loop reasoning strategies based on the criterion defined below.

The core of this framework is our proposed \emph{adaptive value of information} criterion. At each decision step, the agent computes the value difference between the two strategies and acts closed-loop only if the value gain exceeds a threshold parameter $\kappa$. We define the adaptive VOI as:
\begin{eqnarray}
    \hat{VOI}^\kappa_d(b) & = & \hat V_{d}^{CL}(b) - \hat V_{d}^{OL}(b),
    \label{eq:voi_adaptive}
\end{eqnarray}
where 
\begin{align}
        \hat{V}_{d}^{OL}(b) & = \max_{a \in \mathcal{A}} 
        \Bigl\{ r(b,a) + \gamma \hat{V}_{d-1}^*(\tau(b,a))\Bigr\},
        \label{eq:vol_adaptive} \\[2mm]
        \hat{V}_{d}^{CL}(b) & = \max_{a \in \mathcal{A}} 
        \Bigl\{ r(b,a) + \gamma \mathbb{E}_{o}\bigl[\hat{V}_{d-1}^*(\tau(b,a,o))\bigr]\Bigr\}.
        \label{eq:vcl_adaptive}
\end{align}

Crucially, unlike the simple VOI, where open-loop and closed-loop values are computed independently, our components $\hat{V}_d^{OL}$ and $\hat{V}_d^{CL}$ are coupled recursively through the optimal $\kappa\mhyphen$adaptive value function $\hat{V}_{d-1}$, defined as
\begin{align}
    \hat{V}_{d}^*(b)=
    \begin{cases}
        \hat{V}_d^{OL}(b), & \text{if }\; \hat{V}_d^{OL}(b)\ge \,\hat{V}_d^{CL}(b) - \kappa \lvert \hat{V}_d^{CL}(b)\rvert,\\[2mm]
        \hat{V}_d^{CL}(b), & \text{otherwise,}
    \end{cases}
    \label{eq:optimal_value_cond}
\end{align}
for $\kappa\in[0,1]$.

With $\hat V^*_0=0$, it is clear by induction on $d$ that $\hat{V}^*_d$ is bounded and exists for any $d \in \{0, \dots, D\}$. \eqref{eq:optimal_value_cond} represents a Bellman-like update under the operator $\mathbb{B}_\kappa$, where
\begin{eqnarray}
    \hat{V}^*_{d}(b)\;=\; \mathbb{B}_\kappa \hat{V}^*_{d-1}(b).
\end{eqnarray}

$\hat{V}^*$ defines a decision rule at each belief $b$ based on $\kappa$. If the open-loop backup value at $b$ is within $\kappa$ of the closed-loop value, the $\hat{VOI}^\kappa_d(b)$ is sufficiently \emph{low}. Therefore, the agent performs the open-loop backup, which ignores redundant information. On the other hand, if the $VOI^\kappa_d(b)$ is relatively \emph{high}, there is too much value to be lost by disregarding observations; therefore, the agent performs the full closed-loop backup. This adaptive reasoning is applied at every decision step, enabling the agent to \emph{selectively} incur the cost of closed-loop reasoning only when the adaptive VOI justifies it. As a result, the policy interpolates between purely open-loop behavior (when $\kappa$ is large and VOI is generally low) and fully closed-loop behavior (when $\kappa$ is small and VOI is high), reducing unnecessary reasoning over future observations in low-VOI regions while preserving near-optimal performance where observations are valuable.

\subsection{Bounded suboptimality}

Here, we analyze the suboptimality of the adaptive VOI value. We define the regret $\rho$ of the adaptive VOI value for $d$ steps at a belief $b$ as
\begin{eqnarray}
        \rho_d(b) = \left| V_d^*(b) - \hat{V}_d^*(b) \right|.
        \label{eq:regret}
\end{eqnarray}

In the following theorem, we prove that the worst-case regret is bounded by a function of $\kappa$. 

\begin{theorem}[Bounded Regret]
    \label{thm:suboptimality}
    Let $\kappa\in[0,1]$. Then, for any $b$ and $d \geq 1$,
    \begin{align*}
        \rho_{d}(b) \leq \kappa \frac{R_{max}}{1-\gamma} \left(\frac{1-\gamma^d}{1-\gamma}\right).
    \end{align*}
\end{theorem}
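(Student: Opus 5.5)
We argue by induction on $d$, comparing $\hat V^*_d$ against the closed-loop optimum $V^*_d$ through the intermediate quantity $\hat V^{CL}_d$. We write $\epsilon_d = \sup_b \rho_d(b)$ and aim to establish the recursion
\begin{align*}
  \epsilon_d \le \kappa \frac{R_{max}}{1-\gamma} + \gamma\, \epsilon_{d-1}, \qquad \epsilon_0 = 0.
\end{align*}
Unrolling this recursion gives
\begin{align*}
  \epsilon_d \le \kappa \frac{R_{max}}{1-\gamma}\sum_{k=0}^{d-1}\gamma^k = \kappa\frac{R_{max}}{1-\gamma}\cdot\frac{1-\gamma^d}{1-\gamma},
\end{align*}
which is exactly the claimed bound. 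The base case is immediate because $V^*_0=\hat V^*_0=0$.

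For the inductive step, fix $b$ and use the triangle inequality
\begin{align*}
  |V^*_d(b)-\hat V^*_d(b)| \le |V^*_d(b)-\hat V^{CL}_d(b)| + |\hat V^{CL}_d(b)-\hat V^*_d(b)|.
\end{align*}

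\textbf{First term.} Both $V^*_d$ and $\hat V^{CL}_d$ are maxima over $a$ of $r(b,a)+\gamma\,\mathbb{E}_o[\,\cdot\,(\tau(b,a,o))]$, applied to $V^*_{d-1}$ and $\hat V^*_{d-1}$ respectively. Since a maximum over actions is nonexpansive in the sup norm and the expectation over $o$ is an average, this term is at most $\gamma\epsilon_{d-1}$.

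\textbf{Second term.} This term is zero whenever the rule \eqref{eq:optimal_value_cond} selects the closed-loop branch. When it selects the open-loop branch, the condition together with $\hat V^{OL}_d \le \hat V^{CL}_d$ gives
\begin{align*}
  0 \le \hat V^{CL}_d(b)-\hat V^{OL}_d(b) \le \kappa|\hat V^{CL}_d(b)|.
\end{align*}
Even without the lower bound $0$, the upper bound by $\kappa|\hat V^{CL}_d(b)|$ alone suffices for the absolute value. The inequality $\hat V^{OL}_d \le \hat V^{CL}_d$ holds because $\hat V^*_{d-1}$ is convex in $b$, so Jensen's inequality applies to the open-loop update \eqref{eq:belief_update_open}.

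It then remains to bound $|\hat V^{CL}_d(b)|$. Since every reward lies in $[-R_{max},R_{max}]$, induction shows $|\hat V^{OL}_d|,|\hat V^{CL}_d|,|\hat V^*_d| \le R_{max}\sum_{k<d}\gamma^k \le R_{max}/(1-\gamma)$. Hence the second term is at most $\kappa R_{max}/(1-\gamma)$, and combining the two terms gives the recursion.

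The main obstacle is justifying $\hat V^{OL}_d\le \hat V^{CL}_d$, which requires convexity of $\hat V^*_{d-1}$. That function is a piecewise selection between two convex functions and need not itself be convex. The plan avoids relying on this: the threshold condition bounds $\hat V^{CL}_d - \hat V^{OL}_d$ from above by $\kappa|\hat V^{CL}_d|$ regardless. If $\hat V^{OL}_d > \hat V^{CL}_d$, then the open-loop branch is chosen and the difference is negative, so its magnitude must be controlled separately. In that case, I would compare $\hat V^{OL}_d$ directly to $V^*_d$ using convexity of the true $V^*_{d-1}$:
\begin{align*}
  \hat V^{OL}_d(b) \le \max_a\{r(b,a)+\gamma V^*_{d-1}(\tau(b,a))\} + \gamma\epsilon_{d-1} \le V^*_d(b)+\gamma\epsilon_{d-1}.
\end{align*}
On the other side,
\begin{align*}
  \hat V^{OL}_d(b) > \hat V^{CL}_d(b) \ge V^*_d(b)-\gamma\epsilon_{d-1}.
\end{align*}
Together these give $|V^*_d(b)-\hat V^*_d(b)|\le\gamma\epsilon_{d-1}$ in this case, which is within the target recursion. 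So the proof splits into three cases: the closed-loop branch, the open-loop branch with $\hat V^{OL}_d\le\hat V^{CL}_d$, and the open-loop branch with $\hat V^{OL}_d>\hat V^{CL}_d$. Each case satisfies the recursion.
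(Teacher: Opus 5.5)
Your proof is correct and follows the paper's argument. It uses the same triangle inequality through $\hat V^{CL}_d=\mathbb{B}\hat V^*_{d-1}$, nonexpansiveness for the first term, the threshold rule plus $|\hat V^{CL}_d|\le R_{max}/(1-\gamma)$ for the second, and unrolling $\epsilon_d\le\kappa R_{max}/(1-\gamma)+\gamma\epsilon_{d-1}$. Your third case (open-loop selected with $\hat V^{OL}_d>\hat V^{CL}_d$, handled via convexity of the true $V^*_{d-1}$) covers a situation the paper's proof silently assumes away: its bound on the second term needs $\hat V^{OL}_d\le\hat V^{CL}_d$, which can fail because $\hat V^*_{d-1}$ need not be convex. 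For the same reason, you should delete your earlier assertions that $\hat V^*_{d-1}$ is convex and that the one-sided bound alone controls the absolute value; both are false, and the case split is what actually carries the argument.
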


\begin{proof}
Let $\rho_d(b)= \left| V_d^*(b) - \hat V_d^*(b)\right|$. Let $\mathbb{B}$ be the Bellman operator for \eqref{eq:optimal_value}. Then, by the triangle inequality,
\begin{align*}
    \rho_d(&b)  = \left| \mathbb{B}V_{d-1}^*(b) - \mathbb{B}_\kappa \hat V_{d-1}^*(b) \right| \\
    & \leq \left| \mathbb{B}V_{d-1}^*(b) - \mathbb{B} \hat V_{d-1}^*(b) \right| + \left| \mathbb{B}\hat V_{d-1}^*(b) - \mathbb{B}_\kappa \hat V_{d-1}^*(b) \right|
\end{align*}

The first term is a $\gamma\mhyphen$contraction. For the second term, \eqref{eq:optimal_value_cond} implies the single-step suboptimality is at most $\kappa\,\lvert \hat V^{CL}_d(b) \rvert \leq \kappa\,\frac{R_{max}}{1-\gamma}$, and since $V^*_0 = \hat{V}^*_0$, we have that
\begin{align*}
    \rho_d(b) \leq \sum_{t=1}^{d} \gamma^{d-t} (\kappa \lvert \hat V^{CL}_d(b) \rvert)
    &\leq \sum_{t=1}^{d} \gamma^{d-t} (\kappa\,\frac{R_{max}}{1-\gamma})\\
    &= \kappa \frac{R_{max}}{1-\gamma} \left(\frac{1-\gamma^d}{1-\gamma}\right)
\end{align*}
\end{proof}


Theorem~\ref{thm:suboptimality} shows that the adaptive VOI reasoning provides a sound and bounded approximation of $V_d^*$. In particular, the adaptive VOI regret scales linearly with the parameter $\kappa$, ensuring that the approximation error remains controlled. This result motivates the use of the adaptive VOI to efficiently allocate planning computation in the pursuit of mitigating the curse of history, especially in settings where the intrinsic VOI is low and the resulting regret is small.

\section{POMDP Planning using Value of Information}
\label{sec:voipomdp}
This section demonstrates how the adaptive VOI reasoning framework can be integrated into a planning algorithm to effectively allocate computational effort. Optimizing the adaptive value function $\hat{V}^*$ directly would traditionally require evaluating and comparing the open-loop \eqref{eq:vol_adaptive} and closed-loop \eqref{eq:vcl_adaptive} value functions at every step. Rather than performing this comparison as an external step, we introduce the \textit{Value of Information POMDP} (VOI-POMDP). This meta-level representation explicitly encodes the choice between open-loop and closed-loop execution modes as distinct actions within the problem structure. By unifying these modalities into a single decision space, we enable standard solvers to automatically perform VOI reasoning during the search process. Subsequently, we propose Value of Information Monte Carlo Planning (VOIMCP), a solver based on Monte Carlo Tree Search (MCTS) that exploits the VOI-POMDP structure for efficient planning.

\subsection{VOI-POMDP Representation}
The adaptive VOI framework establishes a mathematical basis for disregarding observations. Consequently, computing $\hat{V}^*$ can be viewed as a meta-level choice between open-loop and closed-loop execution modalities at each belief. The following POMDP transformation encodes this meta-level choice as distinct actions directly into the problem structure.

Given a POMDP $\mathcal{P} = (\mathcal{S}, \mathcal{A}, \mathcal{O}, \mathcal{T}, \mathcal{R}, \mathcal{Z}, D, b_0,  \gamma)$, we define the VOI-POMDP as the tuple $\mathcal{P'} = (\mathcal{S}, \mathcal{A'}, \mathcal{O'}, \mathcal{T}', \mathcal{R}', \mathcal{Z'}, D, b_0,  \gamma)$. The state space $\mathcal{S}$, initial belief $b_0$, horizon $D$, and discount factor $\gamma$ remain identical to the original problem. We construct the augmented components as follows. Let $\mathcal{A}$ be the original action space. We duplicate each action and define an augmented action space with an open-loop action set and a closed-loop action set:
\begin{equation*}
    \begin{aligned}
      &\quad \mathcal{A}_{\mathrm{OL}} 
          = \{\,a_{\mathrm{OL}}\mid a\in\mathcal{A}\},\quad
          \mathcal{A}_{\mathrm{CL}} 
          = \{\,a_{\mathrm{CL}}\mid a\in\mathcal{A}\},\\
      &\quad \mathcal{A}' 
          = \mathcal{A}_{\mathrm{OL}}\cup\mathcal{A}_{\mathrm{CL}}
          = \{(a,m)\mid a\in\mathcal{A},\,m\in\{\mathrm{OL},\mathrm{CL}\}\}.\\
    \end{aligned}
\end{equation*}

Further, we augment the observation space with a \textit{null} (or uninformative) observation
\begin{equation*}
    \mathcal{O'} := \mathcal{O} \cup \{o_{\mathrm{null}}\},
\end{equation*}
where $o_{null}$ is received with probability $1$ under open-loop actions while observations under closed-loop actions behave as in the original POMDP:
\begin{equation}
    \mathcal Z'(s',o, a_m) =
    \begin{cases}
        \mathbf{1}\{o = o_{\mathrm{null}}\}, & m = \mathrm{OL}, \\[6pt]
        Z(s',o, a), & m = \mathrm{CL}.
    \end{cases}
\end{equation}
\noindent $\mathcal{T}'$ and $\mathcal{R}'$ map both open-loop and closed-loop action variants to the same values:
\begin{align}
    \mathcal{T}'(s,a_m,s')&=\mathcal{T}(s,a,s'), \\
    \mathcal{R}'(s,a_m)&=\mathcal{R}(s,a).
\end{align}

\noindent The belief update for a VOI-POMDP depends on the action taken. For open-loop actions, the computed belief update marginalizes over observations, as in \eqref{eq:belief_update_open}. Closed-loop actions allow for the full belief update as in \eqref{eq:belief_update_full}. Thus,
\begin{align}
    b'(s') = 
    \begin{cases}
        \tau(b,a_m,o_{\mathrm{null}}) = \sum_{s} \mathcal T'(s,a,s')\, b(s), & m = \mathrm{OL}, \\
        \tau(b,a_m,o) = \tau(b,a,o)  , & m = \mathrm{CL}.
    \end{cases}
\end{align}

The adaptive VOI value function $\hat{V}'_d$ for $\mathcal{P}'$ is computed recursively for $d \in \{0, \dots, D\}$, starting from the base case $\hat{V}'_0 \equiv \hat{V}_0 =  0$. To simplify notation, we denote the optimal value function for $\mathcal{P}'$ as $\hat{V}'_d$, dropping the superscript $*$ (i.e., $\hat{V}'_d = \hat{V}_d'^*$). The value backup is defined as:

\begin{equation}
    \label{eq:voipomdp_value_backup}
    \hat{V}_d'(b) = Q'_d(b, \hat{\pi}_d^*(b)), \quad \forall d \in \{0, \dots, D\},
\end{equation}
where $Q'_d$ is the standard action-value for $\mathcal{P}'$, computed as:
\begin{equation}
    Q'_d(b, a) = r(b, a) + \gamma \mathbb{E} \left[ \hat{V}'_{d-1}(\tau(b, a, o)) \right].
\end{equation}
and $\hat{\pi}_d^*$ denotes the optimal adaptive VOI-based policy at depth $d$ that satisfies for every belief $b$

\begin{equation}
    \hat{\pi}_d^*(b) = \arg\max_{a\in\mathcal{A}'} 
    \begin{cases} 
    Q'_d(b, a), & a \in \mathcal{A}_{OL}, \\
    Q'_d(b, a) - \kappa |Q'_d(b, a)|, & a \in \mathcal{A}_{CL}.
    \end{cases}
\end{equation}

We see that the VOI-POMDP representation preserves the same (optimal) adaptive VOI value function as \eqref{eq:optimal_value_cond}.
\begin{proposition}
    \label{prop: pprime optimal value}
    Let $\mathcal P$ be the original POMDP and let $\mathcal P'$ be the corresponding transformed VOI-POMDP defined above. We have that for every depth $d\in\{0,\dots,D\}$ and belief $b\in \Delta(\mathcal{S})$,
    \begin{align}
        \hat V_{d}'(b) = \hat V_d^*(b).
    \end{align}
\end{proposition}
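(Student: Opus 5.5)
The proof is by induction on the depth $d$, comparing the VOI-POMDP backup \eqref{eq:voipomdp_value_backup} with the $\kappa$-adaptive backup \eqref{eq:optimal_value_cond} term by term. For the base case $d=0$, both value functions are defined to be $0$. For the inductive step, I assume $\hat V'_{d-1}=\hat V^*_{d-1}$ on all of $\Delta(\mathcal{S})$ and show $\hat V'_d(b)=\hat V^*_d(b)$ for an arbitrary belief $b$.

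The first step evaluates the $Q'$-values of the two action families. Since $\mathcal R'(s,a_m)=\mathcal R(s,a)$, we have $r(b,a_m)=r(b,a)$ for both modes.

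For an open-loop action $a_{\mathrm{OL}}$, the observation $o_{\mathrm{null}}$ occurs with probability one, and the update is the marginalized one \eqref{eq:belief_update_open}. The expectation therefore collapses to a single term, and the inductive hypothesis gives
\[ Q'_d(b,a_{\mathrm{OL}})=r(b,a)+\gamma \hat V^*_{d-1}(\tau(b,a)). \]
For a closed-loop action $a_{\mathrm{CL}}$, the observation model and belief update coincide with those of $\mathcal P$, so
\[ Q'_d(b,a_{\mathrm{CL}})=r(b,a)+\gamma\,\mathbb{E}_o\bigl[\hat V^*_{d-1}(\tau(b,a,o))\bigr]. \]
Comparing with \eqref{eq:vol_adaptive} and \eqref{eq:vcl_adaptive} yields
\[ \max_{a\in\mathcal A_{\mathrm{OL}}}Q'_d(b,a)=\hat V^{OL}_d(b), \qquad \max_{a\in\mathcal A_{\mathrm{CL}}}Q'_d(b,a)=\hat V^{CL}_d(b). \]

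The second step analyses the penalized argmax defining $\hat\pi^*_d$. The key observation is that $f(x)=x-\kappa|x|$ is nondecreasing for $\kappa\in[0,1]$, since its slopes are $1+\kappa$ for $x<0$ and $1-\kappa$ for $x>0$, both nonnegative. Hence
\[ \max_{a\in\mathcal A_{\mathrm{CL}}} f\bigl(Q'_d(b,a)\bigr)=f\bigl(\hat V^{CL}_d(b)\bigr), \]
and this maximum is attained at a closed-loop action maximizing the unpenalized $Q'_d$. The policy therefore selects an open-loop action exactly when $\hat V^{OL}_d(b)\ge \hat V^{CL}_d(b)-\kappa|\hat V^{CL}_d(b)|$, and otherwise a closed-loop one. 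The value $Q'_d(b,\hat\pi^*_d(b))$ is then $\hat V^{OL}_d(b)$ or $\hat V^{CL}_d(b)$ respectively, which reproduces the two cases of \eqref{eq:optimal_value_cond}.

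I expect the main obstacle to be ties in the argmax, where the statement holds only under an explicit tie-breaking convention. The issue arises because $f$ is only weakly monotone. When $\kappa=1$, $f$ is identically $0$ on $[0,\infty)$, so several closed-loop actions can share the maximal penalized score while having different $Q'$-values. A careless argmax could then return a closed-loop action whose $Q'_d$ is strictly below $\hat V^{CL}_d(b)$. Likewise, an exact tie between the open-loop and penalized closed-loop scores must resolve to the open-loop branch, matching the ``$\ge$'' in \eqref{eq:optimal_value_cond}. I would therefore state the convention as part of the argument: ties are broken first in favour of open-loop actions, and then among closed-loop actions in favour of the largest unpenalized $Q'_d$. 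With this convention in place, the induction closes for every $d\in\{0,\dots,D\}$.
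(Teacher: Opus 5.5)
Your proof is correct. The paper gives no proof of this proposition: it only asserts ``We see that the VOI-POMDP representation preserves\ldots'' and later invokes the result in the supplementary proof of the fixed-$\kappa$ convergence theorem. Your induction on $d$ is the argument the paper leaves implicit. It first identifies $\max_{\mathcal A_{\mathrm{OL}}}Q'_d$ and $\max_{\mathcal A_{\mathrm{CL}}}Q'_d$ with $\hat V^{OL}_d$ and $\hat V^{CL}_d$, then pushes the maximum through the nondecreasing map $x\mapsto x-\kappa|x|$. That same monotonicity is what the paper uses in its supplementary bandit lemmas.

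Your tie-breaking caveat is a genuine correction, because the paper writes $\hat\pi^*_d(b)=\arg\max$ as if the maximizer were unique. Without a convention the identity can fail. For example, if $\hat V^{CL}_d(b)=10$, $\kappa=0.1$ and $\hat V^{OL}_d(b)=9$, the scores tie, and selecting a closed-loop action gives $\hat V'_d(b)=10\neq 9=\hat V^*_d(b)$. For $\kappa=1$, a closed-loop maximizer of the penalized score can also have $Q'_d<\hat V^{CL}_d(b)$.

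With your convention the induction closes. The second rule (largest unpenalized $Q'_d$ among closed-loop actions) only matters when $\kappa=1$. The convention also matches the uniqueness-of-optimal-arm assumption the paper imposes in its bandit analysis.
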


The VOI-POMDP representation has distinct benefits in planning. By mapping the meta-level choice to the action and value backup space, we allow standard solvers to optimize for $\hat{V}^*$ using minor modifications to existing search heuristics. The representation structurally simplifies the search process. Although duplicating the action space doubles the nominal action branching ($|\mathcal{A}'| = 2|\mathcal{A}|$) and raises the worst-case branching factor from $O((|\mathcal{A}||\mathcal{O}|)^{d})$ to $O((|\mathcal{A}||\mathcal{O}|+|\mathcal{A}|)^{d})$, this cost is offset by the collapse of the observation space under open-loop actions. Under open-loop actions, the action-observation branching factor drops to $1$. In practice, this prevents expanding dense observation branches of a search tree whenever VOI is low, significantly reducing the effective branching factor and enabling deeper planning under the same computational budget.

\subsection{Value of Information Monte Carlo Planning}
To leverage the advantages of VOI reasoning for POMDP planning, we build on PO-UCT \cite{silver2010pomcp}, a widely used Monte Carlo tree search (MCTS) algorithm for large POMDPs. PO-UCT extends the upper confidence trees (UCT) algorithm to partially observable problems by building a search tree over the space of histories instead of states, relying solely on a generative model $\mathcal{G}$ of the problem.

Our algorithm, called Value of Information Monte Carlo Planning (VOIMCP), models adaptive VOI reasoning by solving for $\hat{V}^*_d$ on $\mathcal{P}'$. The procedure for VOIMCP is outlined in Algorithm \ref{alg:voi-mcp}. The global parameters include the number of tree queries $n$, the exploration bonus $\beta$, the problem horizon $D$, the discount factor $\gamma$, and $\kappa$. The $\textsc{Search}$ procedure serves as the entry point, taking as input a belief $b$ and repeatedly calling the $\textsc{Simulate}$ procedure to build out the tree $T$. At each history-action node $T(ha)$ in the tree, the visitation count $N(ha)$, and the estimated value $\bar Q(ha)$ are stored. These statistics are all initialized to $0$. While our theoretical VOIMCP formulation has every simulation iteration reaching the full depth $D$, our practical implementation follows the standard PO-UCT approach for efficiency (see attached Supplementary). Specifically, when the \textsc{Simulate} procedure expands a new history node $h \notin T$, it terminates the iteration and invokes a $\textsc{ValueEstimate}$ procedure to estimate the value of initialized leaf nodes. A common evaluation approach is to simulate with a heuristic rollout policy.

We adopt the polynomial upper confidence bound (UCB) from \citet{shah2022polyuct}. Our primary contribution is a modified polynomial UCB, which implements the adaptive VOI backup in \eqref{eq:voipomdp_value_backup}. We approximate the adaptive VOI backup operator by replacing the exact values $Q'_d$ with Monte Carlo estimates $\bar{Q}$ and applying a penalty $\kappa$ directly to the closed-loop exploitation term. For open-loop actions, the selection metric follows the standard optimistic principle:
\begin{eqnarray}
    UCB_{VOI}(ha_{OL}) = \bar{Q}(ha_{OL}) + B_N(ha_{OL}),
    \label{eq:voiucb_ol}
\end{eqnarray}
where $\bar{Q}(\cdot)$ denotes the mean value estimate of simulations that visit a history-action pair, and $B_N(\cdot)$ denotes the polynomial bonus term proposed by \citet{shah2022polyuct}:
\begin{align}
    B_{N}(ha_m) = \beta^{1/\xi} \cdot \frac{ N(h)^{\alpha/\xi}}{N(ha_{m})^{1-\eta}}, \quad\forall m\in\{OL,CL\},
\end{align}
parameterized by $\beta, \xi, \alpha$, and $\eta$.

For closed-loop actions, we leverage the adaptive VOI objective to enforce a $\kappa$-dependent bias against unnecessary observation branching. This is achieved by deflating the exploitation term in the UCB calculation:
\begin{equation}
    \begin{aligned}
        UCB_{VOI}(ha_{CL}) = & \;\bar{Q}(ha_{CL})\,- \,\kappa \lvert \bar{Q}(ha_{CL}) \rvert \\ 
         & + B_N(ha_{CL}).
    \end{aligned}
    \label{eq:voiucb_cl}
\end{equation}

This deflation biases the search towards open-loop actions whenever the potential VOI is insufficient to meet relative tolerance $\kappa$, effectively steering the search toward the computationally cheaper open-loop branches. By integrating this VOI-guided reasoning into our selection policy, VOIMCP focuses computation on actions and observations that are likely to yield significant value, enabling deeper tree search while maintaining near-optimal policy quality.

\subsubsection{Annealing for Global Convergence} While a fixed $\kappa$ prunes unnecessary observation branching based on a constant threshold (converging to $\hat{V}_D^*$), recovering the optimal POMDP value $V^*_D$ requires this bias to vanish over time. To achieve this, we employ an annealing schedule where $\kappa_N$ decays as a function of the visitation counts. Specifically, we require $\kappa$ to decay as a function of the confidence width, i.e., $0 \leq \kappa_{i}(t, s_i) \leq \frac{1}{c_\kappa R_{max}}B_{t, s_i}$ where $c_\kappa>1$. This ensures that the selection bias is always dominated by the confidence width; as $B_N \to 0$, the bias vanishes and the algorithm recovers the optimal value function in the limit.

\begin{algorithm}[t]
\caption{VOIMCP}
\label{alg:voi-mcp}
\small
\vspace{0pt}
\begin{algorithmic}[1]
\Procedure{Search}{$b$}
  \ForAll{$i = 1,\dots,n$}
    \State $s\sim b$
    \State $h_0 = \{b\}$
    \State \Call{Simulate}{$s,h_0,0$}
  \EndFor
  \State \Return $\displaystyle
    \arg\max_{a\in\mathcal{A}'}\bar Q(h_0a)$   
  
\EndProcedure
\vspace{2pt}
\Procedure{Simulate}{$s,h,depth$}
  \If{$depth=D$} \State \Return $0$ \EndIf
  \If{$h\notin T$}
    \ForAll{$a\in\mathcal{A'}$}
      \State $T(ha)\leftarrow(0,0)$
    \EndFor
  \EndIf
  \State $a^*\!\in\underset{a\in\mathcal{A'}}{\!\arg\max}\; UCB_{VOI}(ha)$
  \If{$a^* \in \mathcal{A}_{CL}$}
  \State $(s',o,r)\sim \mathcal{G}(s,a^*)$
  \Else
  \State $(s',r)\sim \mathcal{G}(s,a^*)$
  \State $o\leftarrow o_{null}$ 
  \EndIf
  \State $N(h)\leftarrow N(h)+1$; $N(ha^*)\leftarrow N(ha^*)+1$
  \State $R\leftarrow r+\gamma\Call{Simulate}{s',ha^*o,depth{+}1}$
  \State $\bar Q(ha^*)\leftarrow \bar Q(ha^*)+\dfrac{R - \bar Q(ha^*)}{N(ha^*)}$
  \State \Return $R$
\EndProcedure
\end{algorithmic}
\end{algorithm}

\subsection{Convergence Analysis of VOIMCP}

Here, we analyze the non-asymptotic properties of VOIMCP. First, we prove that for a fixed $\kappa$, VOIMCP converges polynomially to the optimal VOI-value function $\hat{V}_D^*$. Let $\bar{V}_n(b_0)=\frac{1}{n}\sum_{a\in\mathcal{A}'}N(h_0a)\bar Q(h_0a)$ denote the value estimate at the root after $n$ simulations, where $h_0=\{b_0\}$. Then we have the following result.

\begin{theorem}
\label{thm:VOIMCP convergence}
Consider a POMDP $\mathcal{P}$ with $D \geq 1$ and a fixed $\kappa \in [0,1]$. There exists a valid configuration of the algorithm's depth dependent parameters (specifically, depth dependent $\xi, \alpha, \beta$ detailed in the Supplementary) such that for any $\eta \in [0.5, 1)$ and any initial belief $b_0$, the following claim holds for the output $\bar{V}_n(b_0)$ of VOIMCP with $n$ simulations:
\begin{align}
    |E[\bar{V}_n(b_0)] - \hat{V}_D^*(b_0)| \leq O(n^{\eta-1}).
\end{align}
\end{theorem}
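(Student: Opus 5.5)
The proof follows the inductive, bottom-up argument of \citet{shah2022polyuct} for polynomial-UCB MCTS. We adapt it to the VOI-POMDP $\mathcal{P}'$, where closed-loop arms carry the deflated exploitation term $\bar Q-\kappa|\bar Q|$.

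The first step reduces the problem to a tree over histories of $\mathcal{P}'$. Each history node $h$ at depth $d$ is treated as a non-stationary multi-armed bandit with arms $a\in\mathcal{A}'$. The reward of arm $a$ is $r+\gamma\cdot$(the return of the subtree rooted at $hao$), where $o=o_{\mathrm{null}}$ for open-loop arms and $o\sim\mathbb{P}(\cdot\mid b,a)$ for closed-loop arms. Because $\mathcal{S},\mathcal{A},\mathcal{O}$ are finite, there are finitely many histories, and the returns are bounded by $R_{max}(1-\gamma^D)/(1-\gamma)$.

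The target value at node $h$ is $\hat V'_d(b_h)=Q'_d(b_h,\hat\pi^*_d(b_h))$. By Proposition~\ref{prop: pprime optimal value}, this equals $\hat V^*_d(b_h)$, so it suffices to prove convergence to $\hat V'_D(b_0)$. The induction hypothesis, proved from the leaves (depth $D$, return $0$) upward, is the concentration property of \citet{shah2022polyuct}. It states that for each node at depth $d$, the empirical mean $\bar V$ of the returns satisfies
\[
\bigl|\mathbb{E}[\bar V_n]-\hat V'_{D-d}\bigr|\le O(n^{\eta-1}),
\]
together with the tail bound
\[
\mathbb{P}\bigl(n|\bar V_n-\hat V'_{D-d}|\ge n^{\eta}z\bigr)\le \beta z^{-\xi}
\]
for depth-dependent parameters $(\alpha,\xi,\beta)$.

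The inductive step is a modified version of their non-stationary bandit lemma. Define the arm index values
\[
\mu_a=Q'(b,a)\ \text{for } a\in\mathcal{A}_{OL},\qquad \mu_a=Q'(b,a)-\kappa|Q'(b,a)|\ \text{for } a\in\mathcal{A}_{CL},
\]
and let the best arm be $\hat\pi^*_d(b)$.

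The key observation is that the map $x\mapsto x-\kappa|x|$ is $(1+\kappa)$-Lipschitz, with $1+\kappa\le 2$. Hence the tail bound on each arm's empirical mean $\bar Q$ (inherited from the children by the induction hypothesis) transfers to the deflated index with the constant $\beta$ inflated by at most $2^{\xi}$. With this, the standard argument goes through. Each suboptimal arm (ranked by $\mu$) is pulled only $O(n^{\alpha/\xi\cdot 1/(1-\eta)})$ times, up to the choice of parameters, using the separation $\Delta=\min_{a\ne a^*}(\mu_{a^*}-\mu_a)>0$. Ties are handled by treating tied optimal arms as one, since they share the same $\mu$; then only the arms with $Q'=\hat V'$ can matter. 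Consequently the optimal arm receives $n-O(n^{\eta})$ pulls.

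The node's empirical value $\bar V_n=\frac1n\sum_a N(ha)\bar Q(ha)$ is therefore dominated by the arm $\hat\pi^*$. Its mean converges to $Q'_d(b,\hat\pi^*)=\hat V'_d(b)$, which is the undeflated value, matching the backup \eqref{eq:voipomdp_value_backup}. The bias is $O(n^{\eta-1})$: the suboptimal arms contribute $O(n^{\eta})\cdot 2R_{max}/(1-\gamma)$, which after dividing by $n$ is $O(n^{\eta-1})$. The optimal arm's child error is $O(n^{\eta-1})$ by induction. Re-deriving the tail bound for the node yields the new depth-dependent parameters $(\alpha_{d},\xi_{d},\beta_{d})$ through the recursion of \citet{shah2022polyuct}, which fixes the valid configuration claimed in the theorem.

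For closed-loop arms, the child value is an average over observation branches. We need each observation child $hao$ to be visited $\Theta(N(ha))$ times. This follows because observations are sampled i.i.d.\ with fixed probabilities $\mathbb{P}(o\mid b,a)$, so the closed-loop arm's estimate is a mixture of the child estimates. We handle this with a Hoeffding-type bound on the empirical observation frequencies, which adds only a term of order $n^{-1/2}\le n^{\eta-1}$ because $\eta\ge 1/2$. This is where the requirement $\eta\ge 0.5$ is used.

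The main obstacle is this inductive bandit step with the deflated index. The concentration has to be verified for the non-smooth transform $|\cdot|$, including near $Q'=0$, where the Lipschitz argument still holds. We must also check that the index ordering by $\mu$, rather than by $Q'$, is consistent with which arm's undeflated value is reported. At the root, applying the induction hypothesis with $d=0$ gives $|\mathbb{E}[\bar V_n(b_0)]-\hat V^*_D(b_0)|\le O(n^{\eta-1})$.
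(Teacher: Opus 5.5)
Your skeleton matches the paper's proof. Each history node of $\mathcal{P}'$ is treated as a non-stationary bandit over $\mathcal{A}'$. The deflation $x\mapsto x-\kappa|x|$ is handled through its monotonicity and $(1+\kappa)$-Lipschitz slope; the paper's two tail lemmas give the same $(1+\kappa)^{\xi}\le 2^{\xi}$ inflation and a threshold $A_i(t)$ with factor $2(1+\kappa_i)/\Delta^U_i$. The node value is centred at the \emph{raw} mean of the arm that maximizes the deflated index, and Proposition~\ref{prop: pprime optimal value} turns $\hat V'_D$ into $\hat V^*_D$.

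Your real departure is in closed-loop branching. You control it by concentration of the empirical observation frequencies. The paper instead invokes the meta-node construction of Shah et al.\ (their Lemma A.1) and the Silver--Veness derived history-MDP equivalence. Your route is more self-contained; the paper's is shorter but rests entirely on citation. A minor point: $\eta\ge 1/2$ is already needed at the leaves for concentration of i.i.d.\ bounded rewards, so the frequency step is not the only place it enters.

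The step that fails is your treatment of ties. You claim tied arms ``share the same $\mu$'', so that ``only the arms with $Q'=\hat V'$ can matter''. Equal deflated index does not imply equal raw value, because the deflation is applied only to closed-loop arms.

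\begin{itemize}
\item Take $\kappa=0.1$, with best open-loop arm $Q'=9$ and best closed-loop arm $Q'=10$. Both have index $9$.
\item Take $\kappa=1$. Every closed-loop arm with $Q'\ge 0$ has index $0$, whatever its raw value.
\end{itemize}

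In such cases the UCB rule splits pulls between the tied arms in a data-dependent way, and $\bar V_n$ is a random mixture of different raw values. Then $\mathbb{E}[\bar V_n]$ need not approach $\hat V'_d(b)$, which \eqref{eq:optimal_value_cond} resolves to the open-loop value, at rate $O(n^{\eta-1})$ or at all.

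The paper avoids this by assuming, as Shah et al.\ do, that the maximizer of the deflated index is unique at every node, i.e.\ $\Delta^{U}_{\min}>0$. You need to adopt the same assumption, or an equivalent gap condition, rather than argue ties away. With that assumption the rest of your argument goes through.
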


\begin{proof}[Proof Sketch]
    We defer the detailed derivation to the Supplementary. The proof structure adapts the analysis of \cite{shah2022polyuct} and models the search tree as a hierarchy of non-stationary $\kappa\mhyphen$biased Multi-Armed Bandit (MAB) problems. Using backward induction from the leaf nodes to the root, we prove polynomial concentration and convergence to the VOI-optimal value. Finally, we transfer this result to VOIMCP via a history-MDP equivalence.
\end{proof}

We now demonstrate that by annealing $\kappa$ as a function of the confidence bonus, the selection bias vanishes sufficiently fast to guarantee convergence to $V_D^*$.

\begin{theorem}
    \label{thm: voimcp global convergence}
   Consider a POMDP $\mathcal{P}$ with $D \geq 1$ and $0 \leq \kappa_{i}(t, s_i) \leq \frac{1}{c_\kappa R_{max}}B_{t, s_i}$ for some $c_\kappa > 1$. There exists a valid configuration of the algorithm's depth dependent parameters (specifically, depth dependent $\xi, \alpha, \beta$ detailed in the Supplementary) such that for any $\eta \in [0.5, 1)$ and any initial belief $b_0$, the following claim holds for the output $\bar{V}_n(b_0)$ of VOIMCP with $n$ simulations: 
   \begin{align}
       |E[\bar{V}_n(b_0)] - V_{D}^*(b_0)| \leq O(n^{\eta-1}).
   \end{align}
\end{theorem}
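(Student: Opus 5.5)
The plan is to reuse the backward-induction argument behind Theorem~\ref{thm:VOIMCP convergence}, which adapts the non-stationary bandit analysis of \citet{shah2022polyuct}, while changing the target value at each node. Instead of the $\kappa$-adaptive value $\hat V^*$, the target becomes the ordinary optimal value of the VOI-POMDP $\mathcal{P}'$ without the deflation. The first step is to observe that this unbiased optimum equals $V^*_D$. Open-loop and closed-loop copies of an action share rewards and transitions. For a closed-loop action, $Q'$ with an unbiased continuation is exactly the original $Q^*$. For an open-loop action, convexity of $V^*$ and Jensen's inequality (as in the discussion preceding \eqref{eq:voi_simple}) give $Q'(b,a_{OL}) \le Q^*(b,a)$. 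Hence $\max_{a\in\mathcal{A}'} Q'_d = V^*_d$ by induction on $d$, and it suffices to show that VOIMCP with annealed $\kappa$ converges polynomially to the unbiased optimum of $\mathcal{P}'$. Via the history-MDP equivalence used in the sketch of Theorem~\ref{thm:VOIMCP convergence}, this becomes an MCTS convergence statement on a finite-depth tree MDP.

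The second step is to handle a single node, viewed as a non-stationary multi-armed bandit. Each arm's empirical mean is assumed, by the induction hypothesis from the layer below, to satisfy a polynomial concentration inequality of the form in \citet{shah2022polyuct}, with parameters $(\beta,\xi,\eta)$. The selection index for a closed-loop arm is $\bar Q - \kappa_i(t,s_i)|\bar Q| + B_{t,s_i}$. Since $|\bar Q| \le R_{max}$ (after rescaling to the appropriate horizon bound, which can be absorbed into $c_\kappa$), the hypothesis $\kappa_i \le B_{t,s_i}/(c_\kappa R_{max})$ makes the perturbation at most $B_{t,s_i}/c_\kappa$. The index therefore lies between $\bar Q + (1-1/c_\kappa)B$ and $\bar Q + B$. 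This is a UCB with bonus rescaled by a constant factor in $[1-1/c_\kappa,1]$, equivalently a modified $\beta$ that stays strictly positive because $c_\kappa>1$.

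The third step repeats the counting argument of \citet{shah2022polyuct} with the sandwiched index. A suboptimal arm $i$ is pulled at time $t$ only if its index exceeds that of the optimal arm. Using the lower bonus $(1-1/c_\kappa)B$ for the optimal arm and the upper bonus $B$ for arm $i$, one gets the usual bound $\mathbb{E}[N_i(t)] = O(t^{\alpha/(\xi(1-\eta))}\cdot\text{const})$, together with a $\Delta_i$-dependent term. From this follow (a) convergence of the node's mean reward to $\mu^*$ at rate $O(t^{\eta-1})$, and (b) a polynomial concentration inequality for the node's empirical average, which feeds the next level up. Depth-dependent $\xi,\alpha,\beta$ are then chosen as in Theorem~\ref{thm:VOIMCP convergence} so the induction closes, now with $\beta$ replaced by an effective $\beta(1-1/c_\kappa)^{\xi}$.

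The main obstacle is the second and third steps: verifying that the bandit lemma of \citet{shah2022polyuct} tolerates an index perturbation that is proportional to the bonus rather than vanishing independently. I would handle this by proving a generalized lemma with two-sided bonus constants $c_1 B \le \text{index}-\bar Q \le c_2 B$ and tracking the constants through the concentration step. The concern is that the optimal arm's reduced optimism could break the $\Pr(\text{optimal index} < \mu^*)$ tail bound. The plan is to absorb this into the concentration hypothesis by requiring the tail bound at level $(1-1/c_\kappa)B$, which only rescales $\beta$. Applying the resulting bound at the root gives $|\mathbb{E}[\bar V_n(b_0)] - V^*_D(b_0)| \le O(n^{\eta-1})$.
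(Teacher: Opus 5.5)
Your proposal is correct and follows the paper's proof essentially step for step. The paper likewise identifies the unbiased optimum of $\mathcal{P}'$ with $V^*_D$ because closed-loop backups weakly dominate open-loop ones. It bounds a suboptimal arm's index above by $\bar X_{i,s_i}+B_{t,s_i}$ and the optimal arm's index below by $\bar X_{*,s_*}+(1-1/c_\kappa)B_{t,s_*}$, and absorbs the latter into $\beta$ via $\beta^{1/\xi}\ge c_\kappa/(c_\kappa-1)$, at the cost of a tail constant $(c_\kappa/(c_\kappa-1))^{\xi}$. It then reruns the Shah et al.\ induction and the history-MDP transfer.

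Two caveats apply equally to the paper. First, your ``absorb into $c_\kappa$'' remark only works if the condition on $\kappa$ is read with each node's return bound $\tilde R^{(d)}_{\max}$ in place of the one-step $R_{max}$; otherwise the penalty is no longer dominated by the bonus once $\tilde R^{(d)}_{\max}\ge c_\kappa R_{max}$. Second, under the unbiased target the open- and closed-loop copies of an optimal action tie, and they always do at depth $D-1$, where they have identical rewards and zero continuation. So the unique-optimal-arm bookkeeping borrowed from Shah et al.\ must be adapted to bound only strictly suboptimal arms.
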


\begin{proof}[Proof Sketch]
    We defer the detailed derivation to the Supplementary. The proof follows the same backward induction structure established in Theorem \ref{thm:VOIMCP convergence}. The distinction lies in the selection bias for closed-loop actions: by enforcing $\kappa_{i}(t, s_i) \leq \frac{1}{c_\kappa R_{max}}B_{t, s_i}$, the VOI penalty effectively acts as a vanishing exploration noise rather than a fixed bias. For every node, as visitation count $\to \infty$, this bias term shrinks to zero and is asymptotically dominated by the suboptimality gap of the actions. Propagating this result from leaves to the root, we demonstrate that the value estimates at each depth concentrate around the true optimal value function $V^*$.
\end{proof}
\begin{figure*}[ht]
  \centering
  \begin{subfigure}[b]{0.33\textwidth}
    \includegraphics[width=\linewidth]{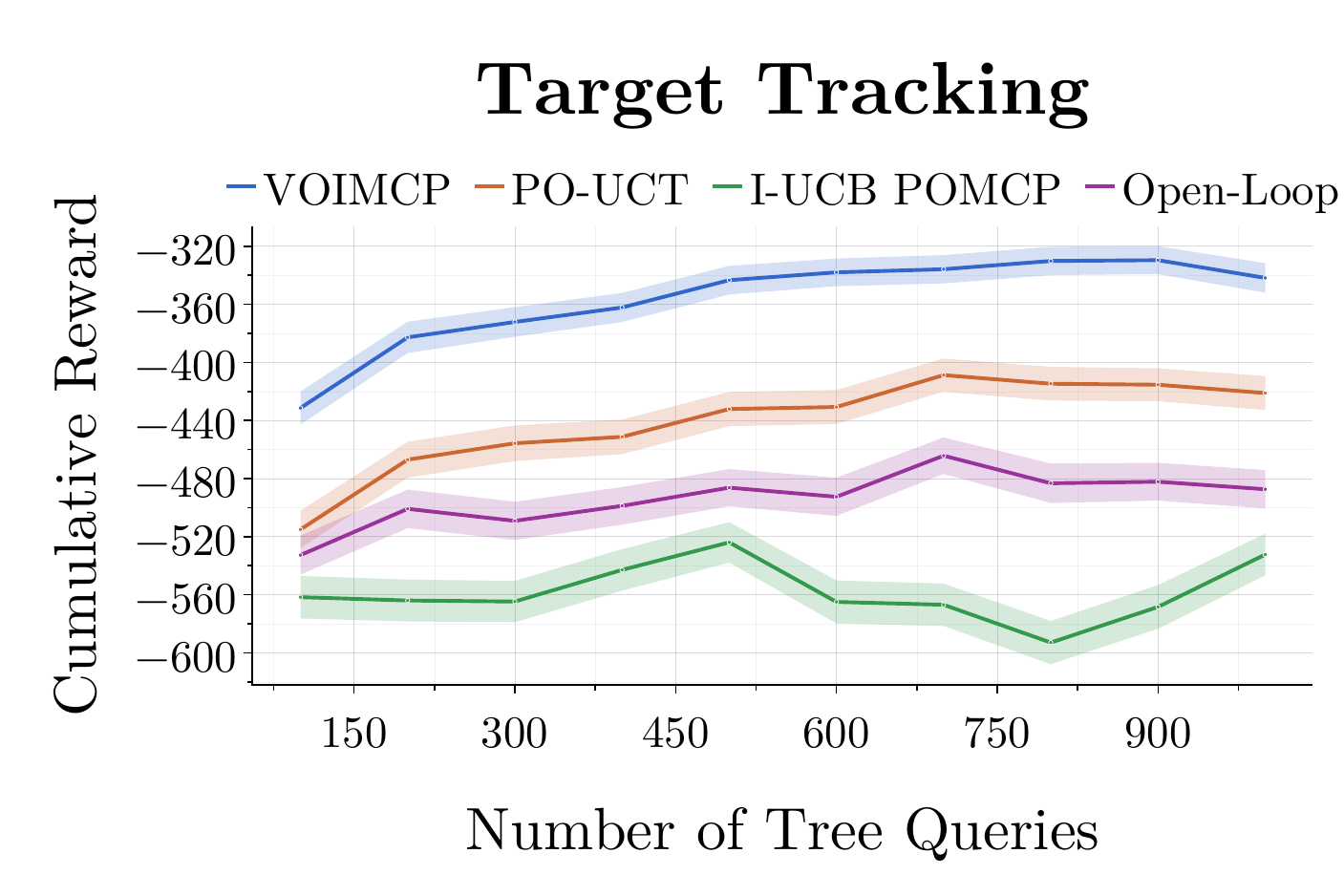}
    \end{subfigure}
  \begin{subfigure}[b]{0.33\textwidth}
    \includegraphics[width=\linewidth]{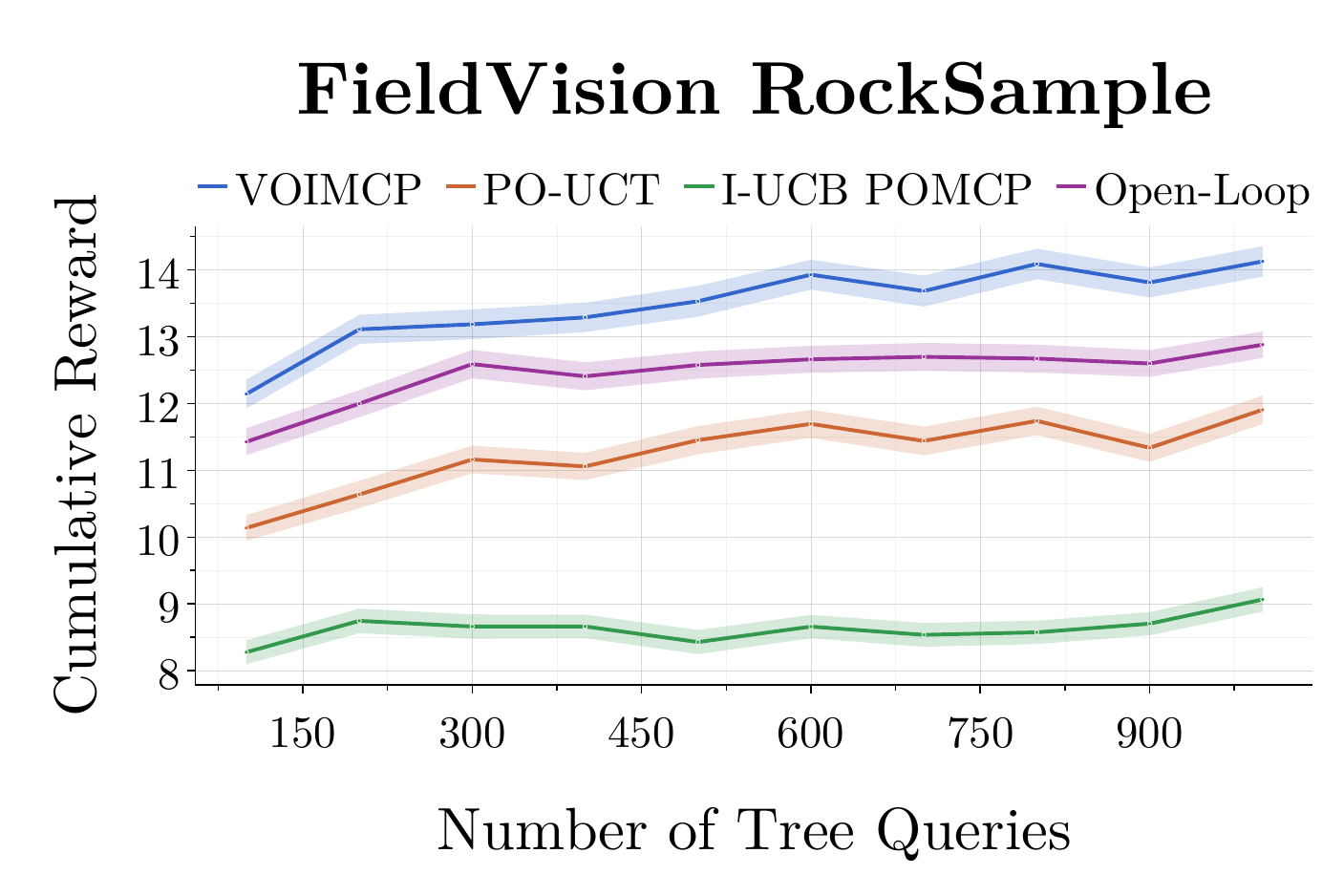}
    \end{subfigure}
  \begin{subfigure}[b]{0.33\textwidth}
    \includegraphics[width=\linewidth]{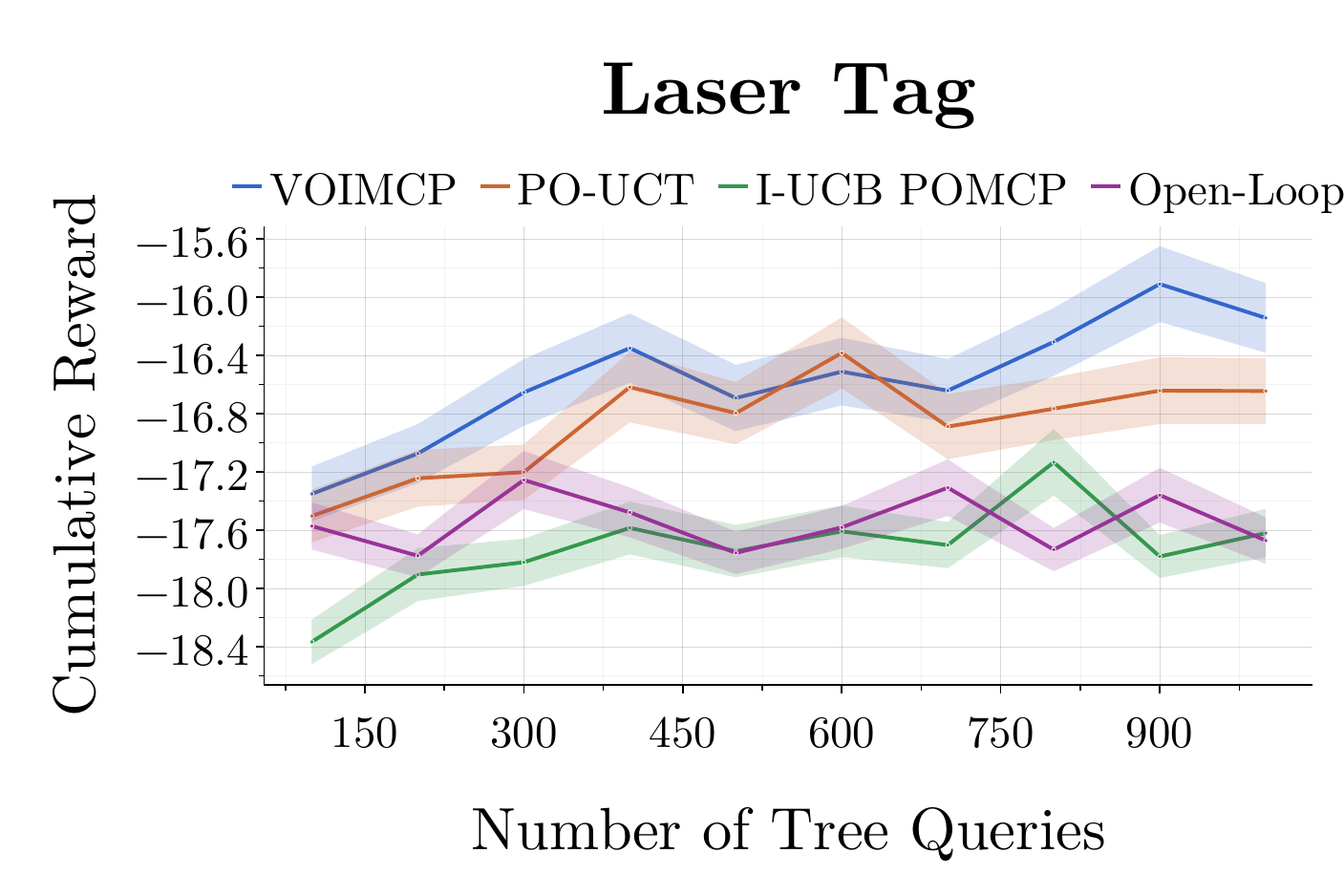}
    \end{subfigure}
  \caption{Comparative benchmark results presenting discounted cumulative reward against the number of tree queries over $1000$ trials. The lighter colored ribbons around the Monte Carlo mean display the $95\%$ confidence interval.}
  \label{fig:benchmark results}
\end{figure*}

\section{Empirical Evaluation}
In this section, we evaluate our adaptive VOI approach.

\subsubsection{Domains}
We evaluate VOIMCP with a fixed $\kappa$\footnote{\label{fn:supp2}Additional results for annealing experiments are available in the Supplementary.} on three POMDP benchmark domains with large observation spaces:
\begin{enumerate}
    \item \textbf{Target Tracking} \cite{flaspohler2020belief}: An agent aims to track a moving target on a $10\times10$ grid. The agent’s own state is fully observable, while the target is only partially observable through noisy measurements.
    \item \textbf{FieldVision RockSample} \cite{ross2008online}: A variant of RockSample with a much larger observation space. A robot explores a grid to collect valuable rocks while avoiding bad rocks. Rock quality is observed noisily through sensor readings after each action.
    \item \textbf{Laser Tag} \cite{despot}: A robot aims to tag a moving target on a 7$\times$11 grid with eight randomly placed obstacles. The robot receives noisy distance measurements to the nearest obstacle in each direction, yielding an observation space of roughly $1.5\times10^6$ observations.
\end{enumerate}

\subsubsection{Comparison Algorithms}
 We evaluate VOIMCP against the following baselines:
\begin{enumerate}
    \item \textbf{PO-UCT} \cite{silver2010pomcp}: The standard MCTS solver for POMDPs, which uses the original UCB1 selection policy. This also serves as an ablation study, as VOIMCP reduces to PO-UCT if the meta-level action space is removed and standard selection policy is used.
    \item \textbf{I-UCB POMCP} \cite{alves2023ibpomcp}: A variant of POMCP that augments the UCB term with an observation entropy heuristic.
    \item \textbf{Open-Loop}: A VOIMCP variant that acts purely in an open-loop manner by only expanding null observations. This baseline essentially plans a sequence of actions without reasoning about future observations.
\end{enumerate}

To isolate the impact of the search component in the planning algorithms, we standardize the belief update mechanism across all methods by using a Sequential Importance Resampling (SIR) particle filter \citep{Gordon1993sir}. Similarly, we estimate leaf node values using a random rollout policy for all approaches. 

\subsubsection{Implementation Details}
We tuned the key parameters of each approach and report the best performance. The horizon $D$ for each algorithm was optimized from the set $\{20,40,60,80\}$, and the exploration constant $c$ for POMCP and VOIMCP was optimized from the set $\{1,10,100,1000\}$. For VOIMCP, $\beta^{1/\xi}=c$ and $\eta=\frac{1}{2}$ for all nodes in the tree, resulting in a bonus term $B_N$ that scales as $N(h)^\frac{1}{4} / \sqrt{N(ha)}$, consistent with \citet{shah2022polyuct}. For I-UCB POMCP, we adopt the parameters recommended by \citet{alves2023ibpomcp}. Detailed parameters used for the experiments are outlined in the supplementary material. We evaluate all approaches under computational budgets ranging from $100$ to $1000$ tree queries, averaged over $1000$ trials. All algorithms were implemented in Julia and executed single-threaded on a computer equipped with an Intel Xeon(R) W-1370P processor.

\begin{figure*}[ht]
    \centering
    \begin{subfigure}[b]{0.33\textwidth}
        \includegraphics[width=\linewidth]{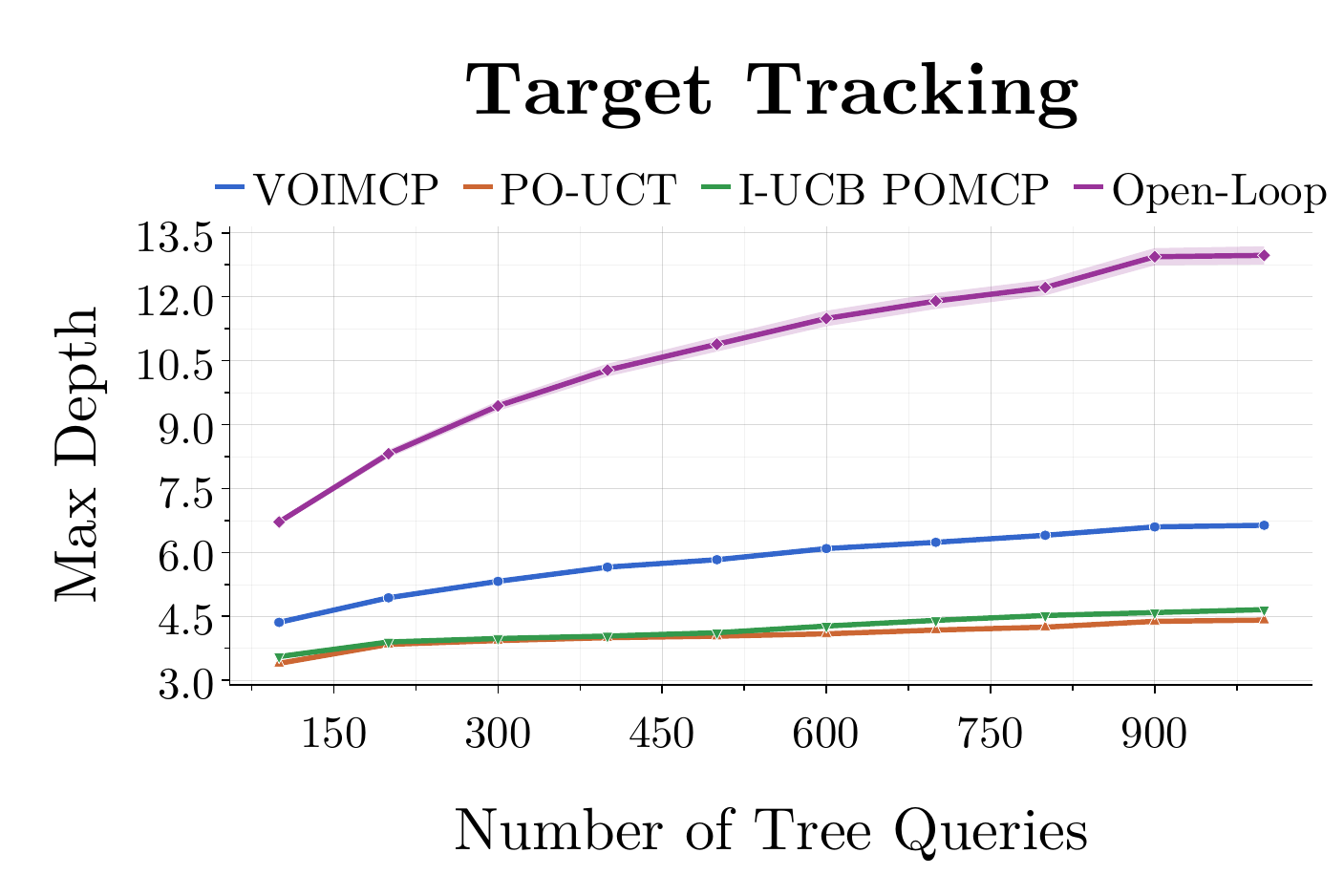}
        \end{subfigure}
    \begin{subfigure}[b]{0.33\textwidth}
        \includegraphics[width=\linewidth]{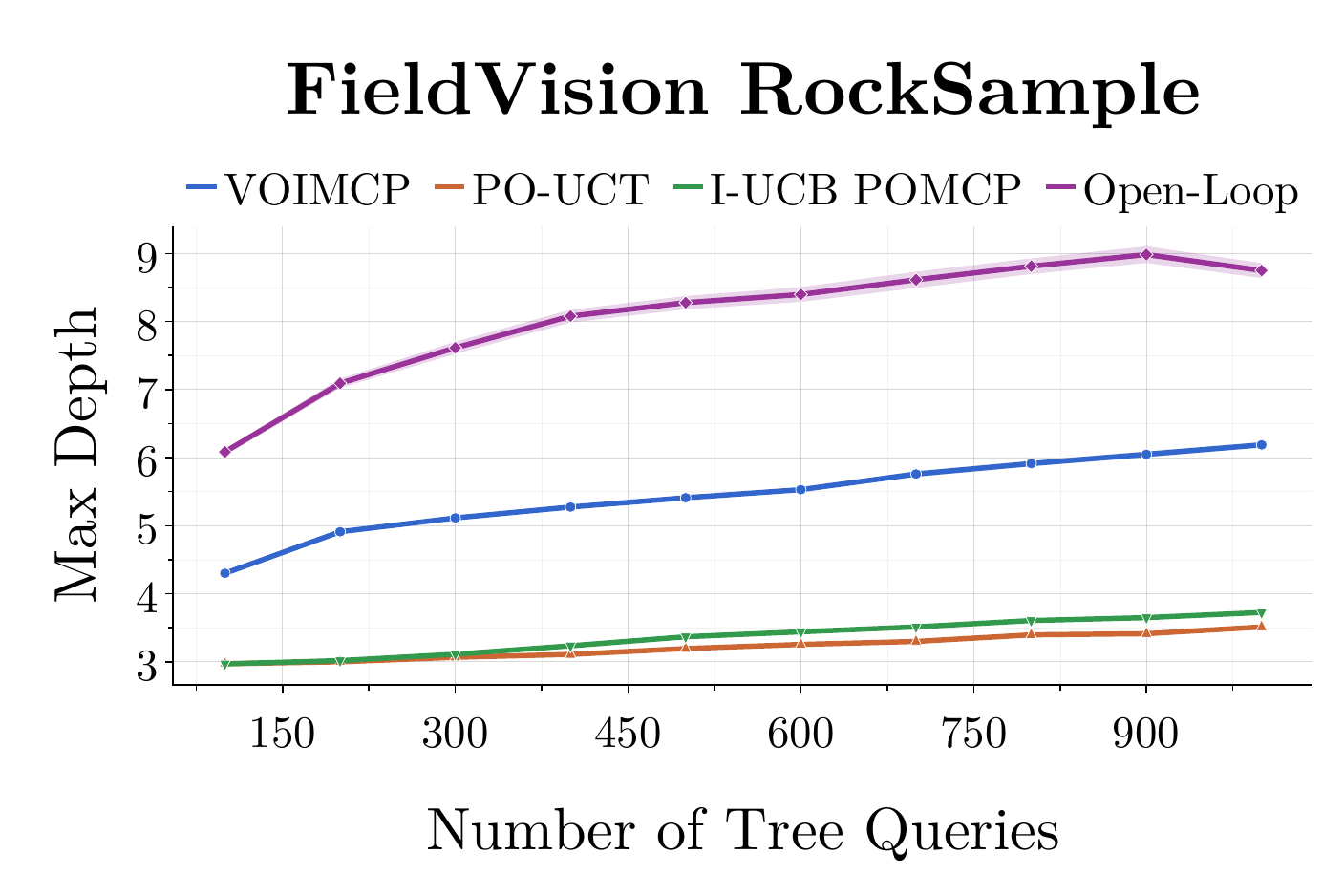}
        \end{subfigure}
    \begin{subfigure}[b]{0.33\textwidth}
        \includegraphics[width=\linewidth]{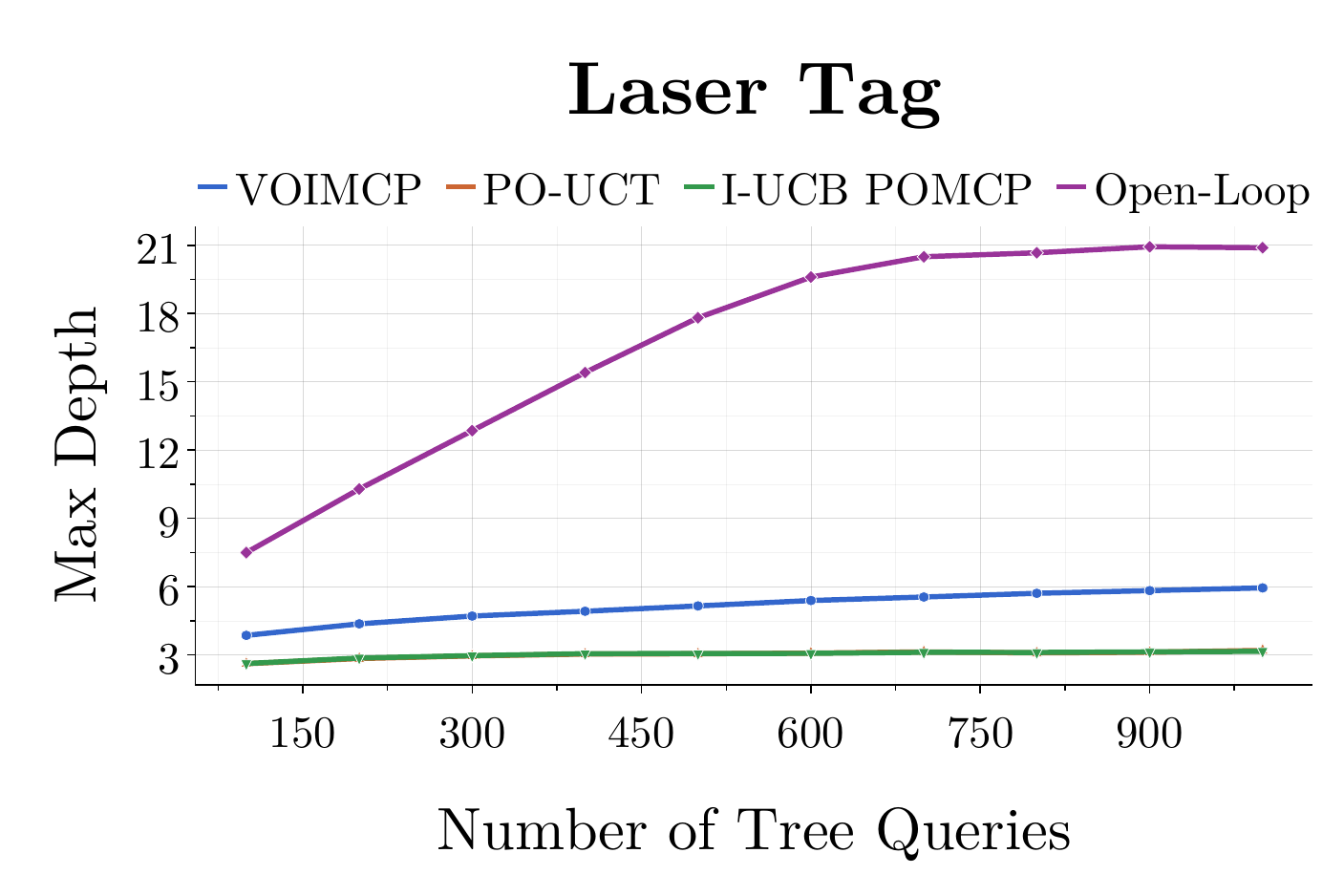}
        \end{subfigure}

  \begin{subfigure}[b]{0.33\textwidth}
        \includegraphics[width=\linewidth]{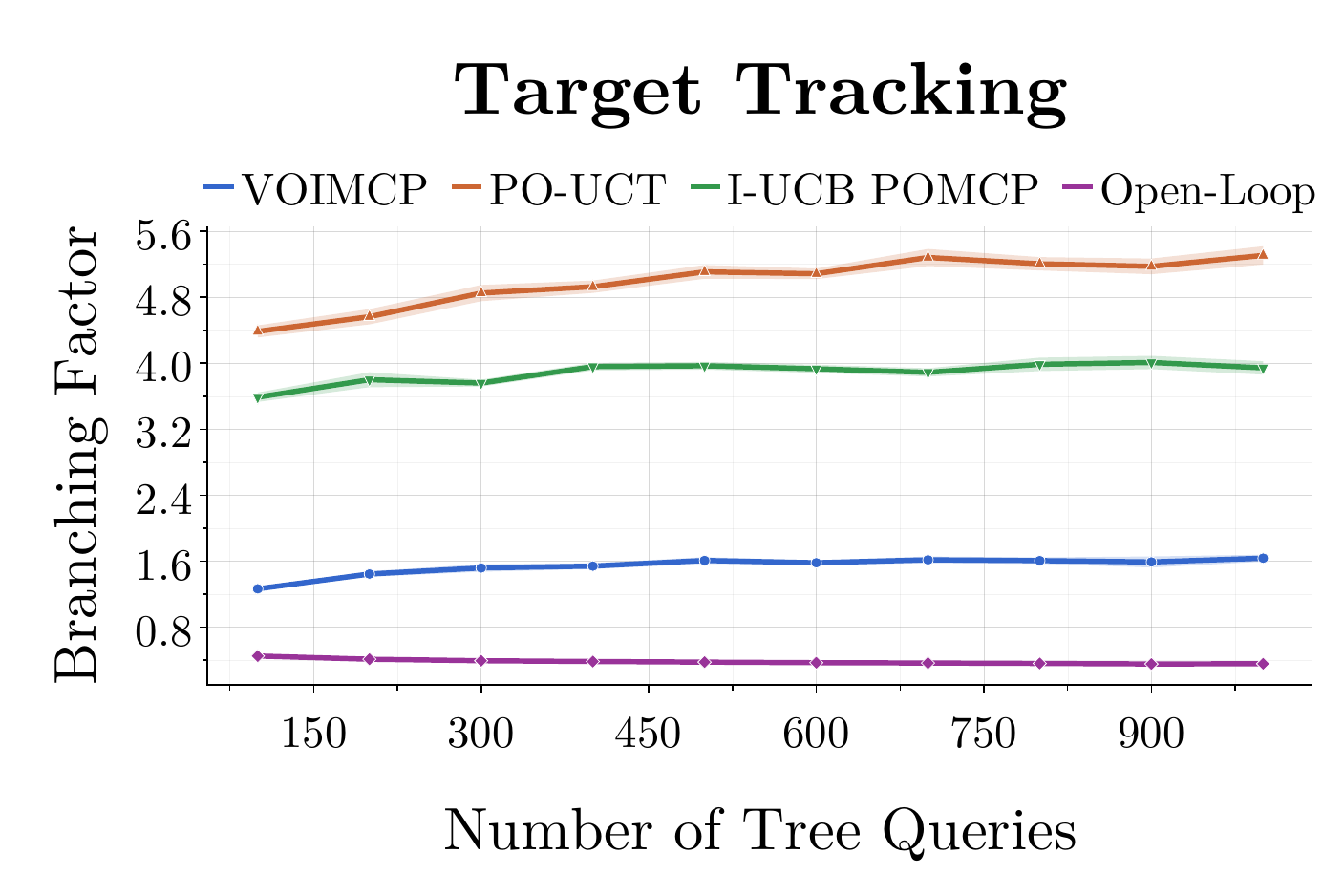}
    \end{subfigure}
    \begin{subfigure}[b]{0.33\textwidth}
        \includegraphics[width=\linewidth]{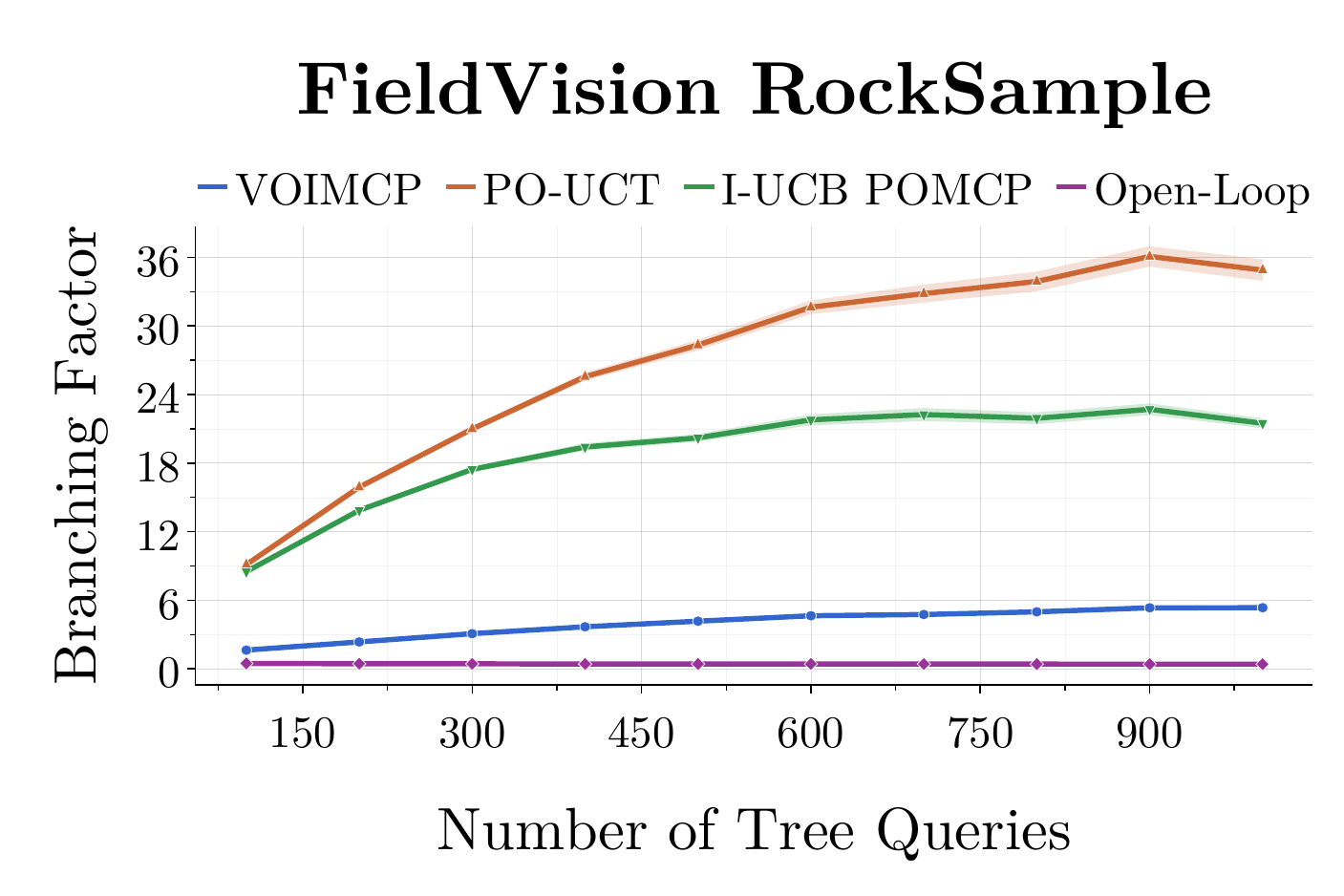}
        \end{subfigure}
    \begin{subfigure}[b]{0.33\textwidth}
        \includegraphics[width=\linewidth]{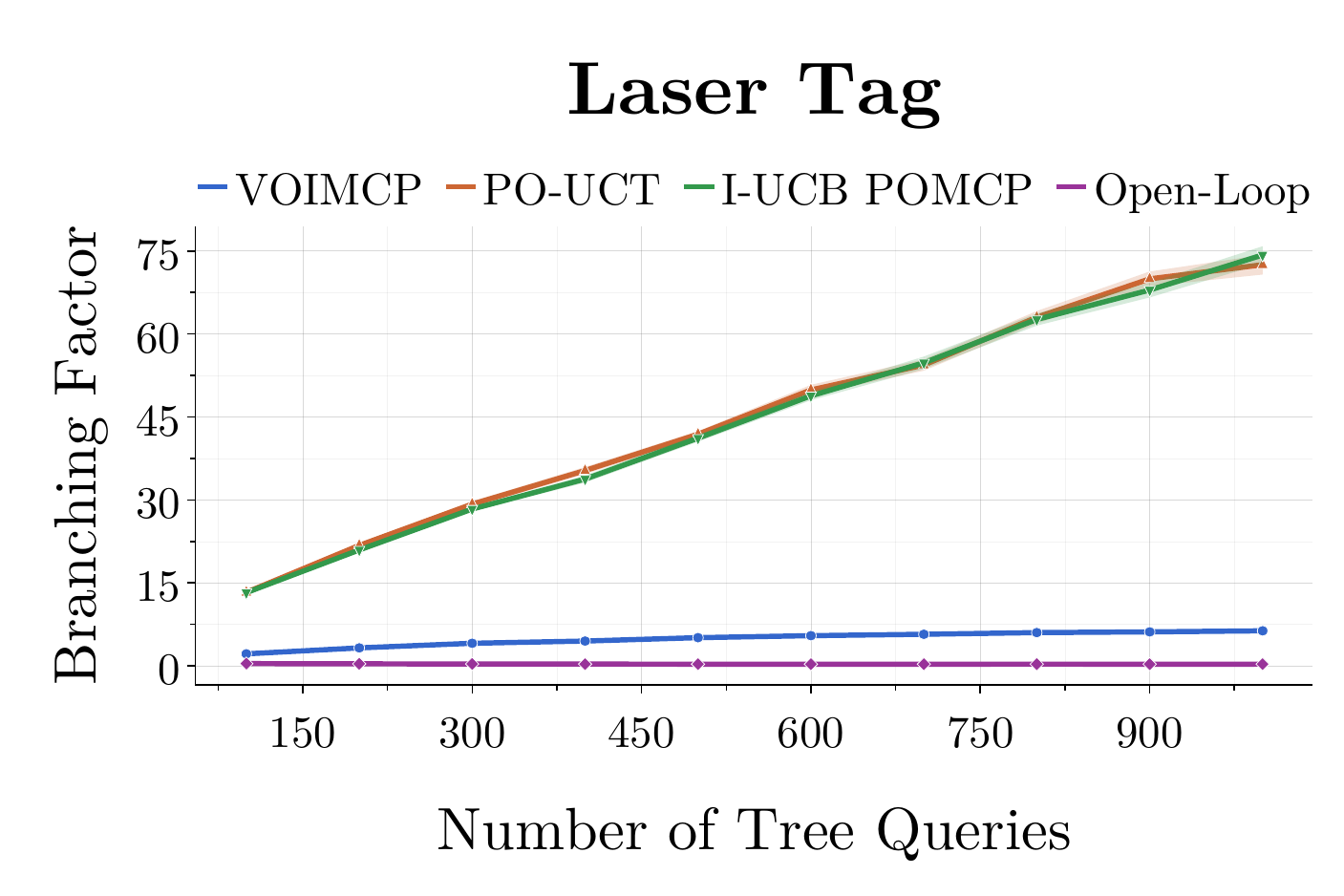}
        \end{subfigure}
    \caption{Tree growth statistics. (Top Row) Maximum tree depth vs. number of tree queries. (Bottom Row) Effective action-observation branching factor vs. number of tree queries. Statistics are computed over 100 trials.}
    \label{fig:statistics}
\end{figure*}

\subsection{Results and Discussion}

In Figure~\ref{fig:benchmark results}, we report the discounted cumulative reward against the number of tree queries. We also report the mean maximum tree depth and effective branching factor in Figure~\ref{fig:statistics}. We define the maximum tree depth as the greatest depth reached by any history node in the search tree, and the effective branching factor as the average number of action-observation branches expanded per visited node.

VOIMCP outperforms all baselines in mean discounted cumulative reward across all the benchmark problems. The statistics in Figure~\ref{fig:statistics} reveal the mechanism behind this success: with the exception of the Open-Loop policy which only has $1$ observation branch per action, VOIMCP consistently achieves the highest maximum tree depth while maintaining the lowest effective branching factor. These results confirm that the adaptive VOI framework enables VOIMCP to ``take shortcuts'' via selectively avoiding expanding dense observation branches (closed-loop actions) whenever the value of information is low. By selectively reducing the computational overhead of observation processing, VOIMCP allocates limited planning resources to enable deeper search. This alleviates the curse of history and facilitates a more efficient exploration, leading to superior performance.

In domains with large observation spaces, the performance of PO-UCT degrades. This occurs because the original UCB heuristic distributes exploration across the observations, leading to shallower search trees than VOIMCP. Interestingly, the Open-Loop policy performs better than I-UCB POMCP in both the Target Tracking and FieldVision RockSample problems, and better than PO-UCT in the latter. This counter-intuitive result provides an insight: when observation spaces are large and noisy, the computational cost of fully processing observations may outweigh the benefit. By always ignoring observations, the Open-Loop policy eliminates observation branching and searches deeply, but cannot reason about observation information. VOIMCP bridges this gap through the adaptive VOI, acting open-loop when observations are noisy or irrelevant, while retaining the reactivity of closed-loop planning when necessary.

While the I-POMCP algorithm has demonstrated superior performance on some domains \cite{alves2023ibpomcp}, its planning component I-UCB POMCP was the worst-performing baseline across all three benchmarks. I-UCB augments the traditional UCB with an observation entropy heuristic. This term is designed to induce information-gathering behavior by rewarding actions that heuristically reduce uncertainty. However, our results demonstrate that the observation entropy is an insufficient proxy for the value of information. I-UCB may prioritize high-entropy observations that are irrelevant to the task rewards. In contrast, VOIMCP directly searches based on the value of information, ensuring that the agent only seeks observations that contribute to reward maximization.

The effectiveness of VOIMCP relies on the search-depth gains from avoiding observation branching outweighing the increase of the augmented action space. For many POMDPs, such as our case studies, the existence of open-loop shortcuts enables more efficient search. However, for problems in which the value of information is high but poorly captured during search, the selection policy may persistently select open-loop actions that are counterproductive for high-value policies. We posit that VOI-guided pruning is most effective in domains where the computational curse of history is the primary bottleneck to discovering high-value policies.
\section{Conclusion}

This paper presents a recursive framework to reason about the VOI. We propose VOIMCP, an algorithm that selectively disregards observation information when the VOI is low. We prove the theoretical properties of both the framework and our algorithm, showing bounded regret and non-asymptotic convergence. This work shows that value of information reasoning can reduce the effective branching factor, alleviating the curse of history in POMDPs. Future work includes investigating fundamental structures of POMDPs that are most amenable to VOI reasoning, designing better schemes for tuning and annealing $\kappa$, and integrating the adaptive VOI reasoning into other POMDP solution techniques.

\section{Acknowledgments}

This work was partially supported by the National Science Foundation, Grants 2340958 and 2137269 and the members of the Center for Autonomous Air Mobility and Sensing (CAAMS) IUCRC.

\bibliography{aaai2026}
\section{Proof of Theorem \ref{thm:VOIMCP convergence}}

Our analysis relies on a sequence of theoretical results analogous to Lemmas $1-7$ and Theorems $1$ and $3$ in \citet{shah2022polyuct}. As the algebraic structure remains largely invariant, we omit repetitive derivations. The key deviations arise from our definition of an optimal conditional VOI action and the selection strategy, both of which depend on $\kappa$, propagating changes to the constants in the final results. We provide the theoretical statements with updated variables below, but detail the full proofs only where the deviation is non-trivial.

We first start with the analysis of a class of nonstationary multi-armed bandit as defined in \citet{shah2022polyuct}. Let there be $K \geq 1$ arms or actions. Let $\bar{X}_{i,n} = \frac{1}{n}\sum_{t=1}^{n}X_{i,t}$ denote the empirical average of choosing arm $i$ for $n$ times, and let $\mu_{i,n}$ be its expectation. For each arm $i \in [K]$, the reward $X_{i,t}$ is bounded in $[-R, R]$ for some $R > 0$, with the reward sequence being a nonstationary process satisfying:

\noindent \textbf{A. Convergence}: The expectation $\mu_{i,n}$ converges to a value $\mu_i$, that is,
\begin{align}
    \mu_i = \lim_{n\rightarrow\infty}\mathbb{E}[\bar{X}_{i,n}]\label{eq: MAB convergence}
\end{align}
\textbf{B. Concentration}: There exist constants $\beta>1$, $\xi>0$, and $1/2 \le \eta < 1$ such that for any $n\ge1$ and $z\ge1$:
\begin{align}
\mathbb{P}\big(n(\bar X_{i,n}-\mu_i) \ge n^\eta z\big) \le \beta z^{-\xi},\\
\mathbb{P}\big(n(\bar X_{i,n}-\mu_i) \le - n^\eta z\big) \le \beta z^{-\xi} \label{eq: MAB concentration}.
\end{align}

Distinct from \citet{shah2022polyuct}, we keep track of the modified arm mean biased by a negative fraction $\kappa$ of its absolute value: 
\begin{align}
\mu_i^U \equiv \mu_i - \kappa_i \lvert \mu_i \rvert,
\label{eq:voi_mean}
\end{align}
We define the new conditional VOI optimal value with respect to the $\kappa$-biased converged expectation
\begin{align}
    \mu^U_* \equiv \max_{i \in [K]}\mu_i^U,
    \label{eq:optimal_voi_mean}
\end{align}
where $\kappa_i \in [0,1]$ may be different for each arm. This will be useful for our VOI-POMDP since it allows open-loop actions to be optimal if their expected reward is within a $\kappa\mhyphen$fraction of their closed-loop counterparts. We define $i^* \in \arg\max_{i \in [K]}\mu_i^U$ and assume the optimal arm is unique. Let the conditional VOI-based suboptimality gap for a suboptimal arm $i$ be:
\begin{align}
    \Delta_i^U \equiv \mu^U_* - \mu_i^U > 0.
\end{align}
and the minimum gap $\Delta_{min}^U=min_{i\in[K],i\neq i^*}\Delta_i^U$. 

Consider the VOI-UCB:
\begin{align}
\label{eq: generalized UCB}
U_i(s_i,t) \equiv \bar X_{i,s_i} - \kappa_i \lvert \bar X_{i,s_i} \rvert+ B_{t,s_i}.
\end{align}
The bonus or \textit{bias} $B_{t,s_i}$ is specified as:
\begin{align}
    B_{t,s_i} = \frac{\beta^{1/\xi} \cdot t^{\alpha/\xi}}{s_i^{1-\eta}},
\end{align}
where $t$ is the total number of pulls and $s_i$ is the number of pulls of arm $i$. Then we define the VOI-UCB algorithm as:

\begin{align}
    I_t \in \arg\max_{i \in [K]} U_i(s_i,t),
    \label{eq:action_selection_policy}
\end{align}
where a tie between arms is broken arbitrarily. Let $\delta_{i^*,n}=\mu_{i^*,n} - \mu_{i^*}$, which measures how fast the mean of the VOI-optimal arm converges. For simplicity, we sometimes drop the arm index $i$ from quantities related to the optimal arm $i^*$, e.g. $\delta_{i^*,n}=\delta_{*,n}$.

With the following useful Lemma, we have that the probability that a suboptimal arm or action has a large upper confidence is polynomially small.

\begin{lemma}
\label{lemma: VOI-UCB bounded probability}
Let $i \in [K]$ be a suboptimal arm. Define
\begin{align}
    \label{eq:newAi}
    A_i(t) \equiv & \min_{u \in \mathbb{N}}\Big\{B_{t,u} \le \frac{\Delta_i^U}{2(1+\kappa_i)} \Big\} \nonumber \\ = & 
    \left\lceil \left(\frac{2(1+\kappa_i)}{\Delta^U_i} \cdot \beta^{1/\xi} \cdot t^{\alpha/\xi}\right)^{\frac{1}{1-\eta}} \right\rceil
\end{align}
For any $t \ge 1$, $t \ge s_i \ge A_i(t)$, we have
\begin{align}
    \mathbb{P}\big(U_i(s_i,t) > \mu^U_*\big) \le t^{-\alpha}.
\end{align}
\end{lemma}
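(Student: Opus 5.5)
The plan is to reduce the event $\{U_i(s_i,t) > \mu^U_*\}$ to a one-sided deviation event for the empirical mean $\bar X_{i,s_i}$, and then apply the concentration assumption B. This mirrors the corresponding lemma of \citet{shah2022polyuct}; the only new ingredient is handling the nonlinear map $x \mapsto x - \kappa_i|x|$.

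\textbf{Step 1: Lipschitz reduction.} Let $f_i(x) = x - \kappa_i |x|$. Since $\kappa_i \in [0,1]$, the map $f_i$ is $(1+\kappa_i)$-Lipschitz: $|f_i(x)-f_i(y)| \le |x-y| + \kappa_i\big||x|-|y|\big| \le (1+\kappa_i)|x-y|$. Hence
\begin{align*}
U_i(s_i,t) = f_i(\bar X_{i,s_i}) + B_{t,s_i} \le \mu_i^U + (1+\kappa_i)\,|\bar X_{i,s_i} - \mu_i| + B_{t,s_i}.
\end{align*}
Because $s_i \ge A_i(t)$ and $B_{t,u}$ is decreasing in $u$, we have $B_{t,s_i} \le \Delta_i^U/(2(1+\kappa_i)) \le \Delta_i^U/2$. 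On the event $U_i(s_i,t) > \mu^U_* = \mu_i^U + \Delta_i^U$, it therefore follows that $(1+\kappa_i)|\bar X_{i,s_i}-\mu_i| > \Delta_i^U - B_{t,s_i}$. Using $(1+\kappa_i)B_{t,s_i} \le \Delta_i^U/2$, this gives $|\bar X_{i,s_i}-\mu_i| > \Delta_i^U/(1+\kappa_i) - B_{t,s_i} \ge B_{t,s_i}$. The last inequality holds because $\Delta_i^U/(1+\kappa_i) \ge 2B_{t,s_i}$, which is exactly how $A_i(t)$ was defined.

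\textbf{Step 2: Concentration.} The event is now $s_i|\bar X_{i,s_i}-\mu_i| > s_i B_{t,s_i} = s_i^{\eta}\, z$ with $z = \beta^{1/\xi} t^{\alpha/\xi} \ge 1$, since $\beta>1$ and $t\ge 1$. Applying assumption B at $n=s_i$ gives a probability of at most $\beta z^{-\xi} = \beta\cdot\beta^{-1} t^{-\alpha} = t^{-\alpha}$ per tail.

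\textbf{Main obstacle.} The absolute value produces a two-sided deviation, which would naively cost a factor $2$ (giving $2t^{-\alpha}$). Recovering the stated bound $t^{-\alpha}$ requires a one-sided argument. I would case on monotonicity. On the event $\bar X_{i,s_i} \le \mu_i$, we have $f_i(\bar X_{i,s_i}) \le f_i(\mu_i)$ because $f_i$ is nondecreasing (its slopes are $1\pm\kappa_i \ge 0$). Then $U_i \le \mu_i^U + B_{t,s_i} < \mu^U_*$, so this case is impossible. Hence the event forces $\bar X_{i,s_i} > \mu_i$ together with $f_i(\bar X_{i,s_i}) - f_i(\mu_i) \le (1+\kappa_i)(\bar X_{i,s_i}-\mu_i)$. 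Only the upper-tail inequality of assumption B is then needed, and it yields $\mathbb{P}(U_i(s_i,t)>\mu^U_*) \le t^{-\alpha}$.

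Finally, I would check the ceiling formula for $A_i(t)$ by solving $\beta^{1/\xi}t^{\alpha/\xi}u^{-(1-\eta)} \le \Delta_i^U/(2(1+\kappa_i))$ for $u$.
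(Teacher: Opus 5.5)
Your proposal is correct and matches the paper's proof once your ``main obstacle'' refinement replaces the two-sided bound of Step 1: monotonicity of $f_i(x)=x-\kappa_i|x|$ forces $\bar X_{i,s_i}>\mu_i$, and the one-sided slope bound $1+\kappa_i$ together with $B_{t,s_i}\le \Delta_i^U/(2(1+\kappa_i))$ yields $\bar X_{i,s_i}-\mu_i>B_{t,s_i}$, after which only the upper-tail concentration assumption is used. Your explicit choice $z=\beta^{1/\xi}t^{\alpha/\xi}\ge 1$, giving $\beta z^{-\xi}=t^{-\alpha}$, fills in the step the paper leaves implicit.
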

\begin{proof}
Fix $t \ge 1$ and $s_i \ge A_i(t)$. Define $f(x) = x - \kappa_i |x|$ and note that $\mu_i^U = f(\mu_i)$ and the exploitation component of $U_i$ satisfies $\bar{X}_{i,s_i} - \kappa_i|\bar{X}_{i,s_i}| = f(\bar{X}_{i,s_i})$. We can decompose
\begin{align*}
    U_i(s_i,t) - \mu^U_* 
    &= \big[f(\bar{X}_{i,s_i}) - f(\mu_i)\big] + B_{t,s_i} - \Delta_i^U.
\end{align*}
Since $s_i \ge A_i(t)$, by definition $B_{t,s_i} \le \Delta_i^U / 2(1+\kappa_i) \le \Delta_i^U/2$, so
\begin{align*}
    B_{t,s_i} - \Delta_i^U \le -\frac{\Delta_i^U}{2}.
\end{align*}
Therefore, $U_i(s_i,t) > \mu^U_*$ requires $f(\bar{X}_{i,s_i}) - f(\mu_i) > \Delta_i^U / 2 > 0$. Since $f$ is non-decreasing, the strict inequality 
$f(\bar{X}_{i,s_i}) - f(\mu_i) > \Delta_i^U/2 > 0$ implies 
$\bar{X}_{i,s_i} > \mu_i$. Furthermore, for any $y > x$, we have the Lipschitz bound
\begin{align*}
    f(y) - f(x) \le (1+\kappa_i)(y - x),
\end{align*}
since the steepest slope of $f$ is $1+\kappa_i$. Applying this with $y = \bar{X}_{i,s_i}$ and $x = \mu_i$ yields
\begin{align*}
    f(\bar{X}_{i,s_i}) - f(\mu_i) \le (1+\kappa_i)(\bar{X}_{i,s_i} - \mu_i).
\end{align*}
Thus, whenever $U_i(s_i,t) > \mu^U_*$ we must have
\begin{align*}
    (1+\kappa_i)(\bar{X}_{i,s_i} - \mu_i) > \frac{\Delta_i^U}{2},
\end{align*}
i.e.,
\begin{align*}
    \bar{X}_{i,s_i} - \mu_i > \frac{\Delta_i^U}{2(1+\kappa_i)} \ge B_{t,s_i},
\end{align*}
where the last inequality follows from $s_i \ge A_i(t)$. Therefore,
\begin{align*}
    \mathbb{P}\big(U_i(s_i,t) > \mu^U_*\big) 
    &\le \mathbb{P}\Big(\bar{X}_{i,s_i} - \mu_i > B_{t,s_i}\Big) 
    \le t^{-\alpha},
\end{align*}
using the concentration assumption~\eqref{eq: MAB concentration}.
\end{proof}

\begin{lemma}
\label{lemma: VOI-UCB bounded probability lower tail}
For any $t \ge 1$ and $1 \le s_* \le t$,
\[
\mathbb{P}\!\left(U_{*}(s_*,t) \le \mu_*^U\right)
\le (1+\kappa_{*})^{\xi} t^{-\alpha},
\]
provided $\beta$ is chosen large enough so that
$\beta^{1/\xi} \ge 1+\kappa_{*}$.
\end{lemma}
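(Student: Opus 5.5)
The approach mirrors Lemma~\ref{lemma: VOI-UCB bounded probability}, now for the lower tail of the optimal arm. As there, write $f(x)=x-\kappa_*|x|$, so that $\mu_*^U=f(\mu_{i^*})$ and
\[
U_*(s_*,t)=f(\bar X_{*,s_*})+B_{t,s_*}.
\]
The event $U_*(s_*,t)\le\mu_*^U$ is then exactly
\[
f(\mu_{i^*})-f(\bar X_{*,s_*})\ge B_{t,s_*}.
\]

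The key steps, in order, are as follows.
\begin{enumerate}
\item Since $B_{t,s_*}>0$ and $f$ is non-decreasing, the event forces $\bar X_{*,s_*}<\mu_{i^*}$.
\item On that side, use the Lipschitz bound already derived in Lemma~\ref{lemma: VOI-UCB bounded probability}: $f(y)-f(x)\le(1+\kappa_*)(y-x)$ for $y>x$. Applied with $y=\mu_{i^*}$ and $x=\bar X_{*,s_*}$, this gives the inclusion
\[
\{U_*(s_*,t)\le \mu_*^U\}\subseteq\Bigl\{\mu_{i^*}-\bar X_{*,s_*}\ge \tfrac{B_{t,s_*}}{1+\kappa_*}\Bigr\}.
\]
\item Rewrite the right-hand event in the form of concentration assumption B:
\[
s_*(\bar X_{*,s_*}-\mu_{i^*})\le -s_*^{\eta}z,
\qquad z=\frac{s_*^{1-\eta}B_{t,s_*}}{1+\kappa_*}=\frac{\beta^{1/\xi}t^{\alpha/\xi}}{1+\kappa_*}.
\]
\item Check that $z\ge1$, as assumption B requires. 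This holds because $t\ge1$ and by the hypothesis $\beta^{1/\xi}\ge 1+\kappa_*$.
\item Apply the lower-tail inequality in \eqref{eq: MAB concentration}:
\[
\mathbb{P}\bigl(U_*(s_*,t)\le\mu_*^U\bigr)\le \beta z^{-\xi}=\beta\,\frac{(1+\kappa_*)^{\xi}}{\beta\, t^{\alpha}}=(1+\kappa_*)^{\xi}t^{-\alpha}.
\]
\end{enumerate}

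The main obstacle is Step~2: handling the nonlinearity of $f$ when $\bar X_{*,s_*}$ and $\mu_{i^*}$ may have different signs. I would treat it through the piecewise-linear structure of $f$. Its slope is $1-\kappa_*\ge0$ on the positive reals and $1+\kappa_*$ on the negative reals, so $f$ is non-decreasing and $(1+\kappa_*)$-Lipschitz. Monotonicity fixes the direction of the deviation, and the Lipschitz constant controls its magnitude. The factor $(1+\kappa_*)^\xi$ then appears precisely as the price of this Lipschitz constant inside the polynomial tail. A secondary point is that the concentration assumption is stated about $\mu_i$, the limit of the means, and not about $\mu_{i,n}$. It therefore applies directly, with no $\delta_{*,n}$ correction needed.
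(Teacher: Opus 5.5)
Your proposal is correct and follows the paper's proof essentially step for step. Both arguments use the nondecreasing, $(1+\kappa_*)$-Lipschitz map $f(x)=x-\kappa_*|x|$ to reduce the event to $\bar X_{*,s_*}-\mu_{*}\le -B_{t,s_*}/(1+\kappa_*)$, then take $z=\beta^{1/\xi}t^{\alpha/\xi}/(1+\kappa_*)$, ensured to satisfy $z\ge 1$ by $\beta^{1/\xi}\ge 1+\kappa_*$, and apply the lower-tail concentration assumption.
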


\begin{proof}
Define
\[
f_{*}(x) \equiv x - \kappa_{*}|x|.
\]
Then $f_{*}$ is nondecreasing on $\mathbb{R}$ and
\[
U_{*}(s_*,t) = f_{*}(\bar X_{*,s_*}) + B_{t,s_*},
\qquad
\mu_*^U = f_{*}(\mu_{*}).
\]
Hence
\[
\mathbb{P}\!\left(U_{*}(s_*,t) \le \mu_*^U\right)
=
\mathbb{P}\!\left(f_{*}(\bar X_{*,s_*}) + B_{t,s_*}
\le f_{*}(\mu_{*})\right).
\]
On this event, since $B_{t,s_*}>0$, we must have
$f_{*}(\bar X_{*,s_*}) < f_{*}(\mu_{*})$, and therefore,
by monotonicity of $f_{*}$,
\[
\bar X_{*,s_*} \le \mu_{*}.
\]
Now, for any $x \le y$,
\[
f_{*}(y) - f_{*}(x)
\le (1+\kappa_{*})(y-x),
\]
since the maximum slope of $f_{*}$ is $1+\kappa_{*}$.
Applying this with $x=\bar X_{*,s_*}$ and $y=\mu_{*}$ gives
\[
f_{*}(\mu_{*}) - f_{*}(\bar X_{*,s_*})
\le (1+\kappa_{*})(\mu_{*}-\bar X_{*,s_*}).
\]
Therefore, whenever
$f_{*}(\bar X_{*,s_*}) + B_{t,s_*} \le f_{*}(\mu_{*})$,
we have
\[
B_{t,s_*}
\le f_{*}(\mu_{*}) - f_{*}(\bar X_{*,s_*})
\le (1+\kappa_{*})(\mu_{*}-\bar X_{*,s_*}),
\]
which implies
\[
\bar X_{*,s_*} - \mu_{*}
\le -\frac{B_{t,s_*}}{1+\kappa_{*}}.
\]
Thus,
\[
\mathbb{P}\!\left(U_{*}(s_*,t) \le \mu_*^U\right)
\le
\mathbb{P}\!\left(
\bar X_{*,s_*} - \mu_{*}
\le -\frac{B_{t,s_*}}{1+\kappa_{*}}
\right).
\]
Multiplying by $s_*$ and matching the concentration form in \eqref{eq: MAB concentration},
set
\[
z
=
s_*^{\,1-\eta}\frac{B_{t,s_*}}{1+\kappa_{*}}
=
s_*^{\,1-\eta}
\frac{\beta^{1/\xi} t^{\alpha/\xi}}{s_*^{\,1-\eta}(1+\kappa_{*})}
=
\frac{\beta^{1/\xi}}{1+\kappa_{*}} t^{\alpha/\xi}.
\]
If $\beta^{1/\xi} \ge 1+\kappa_{*}$, then $z \ge 1$ for all $t \ge 1$.
Applying the lower-tail concentration assumption \eqref{eq: MAB concentration},
\begin{align*}
    \mathbb{P}\!\left(U_{*}(s_*,t) \le \mu_*^U\right)
    \le \beta z^{-\xi}
    = &
    \beta
    \left(
    \frac{\beta^{1/\xi}}{1+\kappa_{*}} t^{\alpha/\xi}
    \right)^{-\xi} \\
    = &
    (1+\kappa_{*})^\xi t^{-\alpha}.
\end{align*}

\end{proof}

Lemma~\ref{lemma: VOI-UCB bounded probability} and \ref{lemma: VOI-UCB bounded probability lower tail} allow us to upper bound the expected number of suboptimal arm pulls as follows.

\begin{lemma}[Adapted from Lemma $2$ of \cite{shah2022polyuct}]
\label{lemma: VOI-UCB bounded expected pulls}
Let $i \in [K]$, $i \neq i^*$ be a suboptimal arm. Then, for all $n \ge 1$, the expected number of pulls of arm $i$ satisfies
\begin{align*}
\mathbb{E}[T_i(n)]
    &\le \Big( \frac{2(1+\kappa_i)}{\Delta_i^U} \beta^{1/\xi} \Big)^{1/(1-\eta)} n^{\alpha/(\xi(1-\eta))} \\ & \quad +  \frac{1+(1+\kappa_{i^*})^\xi}{\alpha-2} + 1.
\end{align*}
\end{lemma}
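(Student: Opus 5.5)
We follow the standard UCB counting argument, as in Lemma~2 of \citet{shah2022polyuct}. Fix a suboptimal arm $i \neq i^*$ and set $A_n \equiv A_i(n)$ from Lemma~\ref{lemma: VOI-UCB bounded probability}. Since $A_i(t)$ is nondecreasing in $t$, we have $A_i(t) \le A_n$ for all $t \le n$. We decompose
\[
T_i(n) \le A_n + \sum_{t=A_n+1}^{n} \mathbf{1}\{I_t = i,\; T_i(t-1) \ge A_n\}.
\]
Whenever arm $i$ is pulled at time $t$, we have $U_i(T_i(t-1),t) \ge U_*(T_*(t-1),t)$. For this to hold, either $U_i(T_i(t-1),t) > \mu_*^U$ or $U_*(T_*(t-1),t) \le \mu_*^U$ (ties can be absorbed by using the non-strict version of one of the events). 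So
\[
\mathbf{1}\{I_t=i,\,T_i(t-1)\ge A_n\} \le \mathbf{1}\{\exists\, s_i\in[A_n,t]:\, U_i(s_i,t)>\mu^U_*\} + \mathbf{1}\{\exists\, s_*\in[1,t]:\, U_*(s_*,t)\le \mu^U_*\}.
\]

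Next we take expectations and apply a union bound over the random pull counts. In the first event, every $s_i \ge A_n \ge A_i(t)$, so Lemma~\ref{lemma: VOI-UCB bounded probability} gives probability at most $t^{-\alpha}$ for each $s_i$. Summing over at most $t$ values of $s_i$ gives $t^{1-\alpha}$. In the second event, Lemma~\ref{lemma: VOI-UCB bounded probability lower tail} (with $\beta^{1/\xi}\ge 1+\kappa_*$) gives $(1+\kappa_{i^*})^\xi t^{-\alpha}$ for each $s_*$. Summing over $s_*\le t$ gives $(1+\kappa_{i^*})^\xi t^{1-\alpha}$. Therefore
\[
\mathbb{E}[T_i(n)] \le A_n + \bigl(1+(1+\kappa_{i^*})^\xi\bigr)\sum_{t\ge 1} t^{1-\alpha}.
\]
For $\alpha>2$, the integral comparison gives $\sum_{t\ge1} t^{1-\alpha} \le 1 + \int_1^\infty x^{1-\alpha}dx = 1+\frac{1}{\alpha-2}$. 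The ``$+1$'' from the first term of this sum has to be reconciled with the stated constant. Following Shah et al., the plan is to start the sum at $t\ge 2$, or to bound the $t=1$ term trivially by $1$, since at most one pull happens at $t=1$. This produces $\frac{1+(1+\kappa_{i^*})^\xi}{\alpha-2}+1$.

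Finally, we bound $A_n$ using the closed form in Lemma~\ref{lemma: VOI-UCB bounded probability}. The ceiling satisfies $\lceil x\rceil \le x+1$, so
\[
A_n \le \Bigl(\tfrac{2(1+\kappa_i)}{\Delta_i^U}\beta^{1/\xi}\Bigr)^{1/(1-\eta)} n^{\alpha/(\xi(1-\eta))} + 1.
\]
The main obstacle is bookkeeping the additive constants. The ceiling's $+1$ and the $t=1$ term would together give $+2$ rather than the stated $+1$. To absorb one of them, I would first try defining the counting threshold as $\lceil\cdot\rceil$ but counting pulls only when $T_i(t-1)\ge A_n$ strictly after the $A_n$-th pull, which removes one unit. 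I would also restrict the union over $s_*$ to $s_* \le t-1$, which gives $\sum_{t\ge2}(t-1)t^{-\alpha}\le \sum_{t\ge 2} t^{1-\alpha}\le \frac{1}{\alpha-2}$. With both adjustments, the first-event and second-event sums each contribute at most $\frac{1}{\alpha-2}$ times their prefactors, and the ceiling contributes the single $+1$. That matches the claim, under the standing assumption $\alpha>2$.
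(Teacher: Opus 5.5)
Your proposal is correct and matches the paper's proof. The paper reruns the counting argument of Lemma~2 in \citet{shah2022polyuct} with the threshold $A_i(t)$ from \eqref{eq:newAi}, the upper-tail bound of Lemma~\ref{lemma: VOI-UCB bounded probability}, and the lower-tail bound $(1+\kappa_{i^*})^\xi t^{-\alpha}$ of Lemma~\ref{lemma: VOI-UCB bounded probability lower tail}, exactly as you do. Your worry about an extra $+1$ is unfounded: in your own decomposition the sum over $t$ already starts at $t=A_n+1\ge 2$ because $A_n\ge 1$. Each tail sum is therefore at most $\sum_{t\ge 2}t^{1-\alpha}\le \frac{1}{\alpha-2}$, and only the ceiling contributes the $+1$.
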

\begin{proof}
    The proof follows \citet{shah2022polyuct}(cf. Lemma 2) after replacing Shah’s threshold
    $A_i(t)$ by our threshold in \eqref{eq:newAi}, replacing Shah’s suboptimal-arm upper-tail
    bound by Lemma~\ref{lemma: VOI-UCB bounded probability}, and replacing Shah’s optimal-arm lower-tail
    bound by Lemma~\ref{lemma: VOI-UCB bounded probability lower tail}. With these substitutions, the
    decomposition at $T_i(t)\ge A_i(n)$, the summation over polynomial tails, and the conversion
    to the expected-pulls bound proceed identically up to constants.
\end{proof}

Let $\bar X_n \equiv \frac{1}{n} \sum_{i=1}^K T_i(n)\bar X_{i,T_i(n)}$ denote the empirical average under the VOI-UCB algorithm \eqref{eq:action_selection_policy}. Then, $\bar X_n$ satisfies the following convergence and concentration properties.
\begin{theorem}[Adapted from Theorem $3$ of \citep{shah2022polyuct}]
\label{theorem: MAB VOI-UCB Convergence}
Consider a nonstationary MAB satisfying~\eqref{eq: MAB convergence} and~\eqref{eq: MAB concentration}. 
Suppose that Algorithm~\eqref{eq:action_selection_policy} is applied with parameter 
$\alpha$ such that $\xi \eta (1-\eta) \le \alpha < \xi(1-\eta)$ and $\alpha > 2$. 
Then, the following holds:

\noindent \textbf{A. Convergence}:
\begin{align*}
\big|\mathbb{E}[\bar X_n] & - \mu_{*} \big|
\le |\delta_{*,n}| \; + \\
& \hspace{-4mm}\frac{2R(K-1)}{n} 
\Big(
\big(\frac{4}{\Delta^U_{min}} \beta^{\frac{1}{\xi}} \big)^{\frac{1}{1-\eta}} \cdot n^{\frac{\alpha}{(\xi(1-\eta))}} + \frac{1+2^\xi}{\alpha-2} + 1\Big),
\end{align*}

\noindent \textbf{B. Concentration}: there exists constants, $\beta' > 1$ and $\xi' > 0$ and $1/2 \leq \eta' < 1$ such that for every $n \geq 1$ and every $z \geq 1$,
\begin{align*}
    \mathbb{P}(n\bar{X}_n - n\mu_{*} \geq n^{\eta'}z) \leq \frac{\beta'}{z^{\xi'}},\\
    \mathbb{P}(n\bar{X}_n - n\mu_{*} \leq -n^{\eta'}z) \leq \frac{\beta'}{z^{\xi'}},
\end{align*}
where $\eta' = \frac{\alpha}{\xi(1-\eta)}, \xi'=\alpha - 1, \beta'$ depends on $R, K, \Delta^U_{min}, \beta, \xi, \alpha, \eta, \kappa$. 
\end{theorem}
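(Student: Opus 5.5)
We follow the structure of the proof of Theorem 3 in \citet{shah2022polyuct}, using Lemma~\ref{lemma: VOI-UCB bounded expected pulls} in place of Shah's Lemma~2. One convention must be fixed first. Part~A compares $\mathbb{E}[\bar X_n]$ with $\mu_*$, which we read as $\mu_{i^*}$: the unbiased converged mean of the VOI-optimal arm $i^*\in\arg\max_i\mu_i^U$. This is exactly the quantity the VOI-POMDP backup \eqref{eq:voipomdp_value_backup} propagates, since $\hat V'_d(b)=Q'_d(b,\hat\pi^*_d(b))$ is unpenalized. Also, $\kappa_i\le 1$ gives $1+\kappa_i\le 2$, which turns $2(1+\kappa_i)$ into $4$ and $(1+\kappa_{i^*})^\xi$ into $2^\xi$ in the stated constants.

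\textbf{Part A (convergence).} First decompose
\[
\mathbb{E}[\bar X_n]-\mu_{*} = \frac1n\mathbb{E}\Big[\sum_i T_i(n)\big(\bar X_{i,T_i(n)}-\mu_{*}\big)\Big].
\]
The optimal arm's contribution is controlled by $\delta_{*,n}$, as in Shah, using the same bookkeeping to bound $\frac1n\mathbb{E}[T_{*}(n)(\bar X_{*,T_*(n)}-\mu_*)]$ by $|\delta_{*,n}|$ plus terms absorbed below. Each suboptimal arm contributes at most $2R\,\mathbb{E}[T_i(n)]/n$, because rewards lie in $[-R,R]$. Substituting Lemma~\ref{lemma: VOI-UCB bounded expected pulls}, using $\Delta_i^U\ge\Delta^U_{min}$, and summing over the $K-1$ suboptimal arms gives the stated bound.

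\textbf{Part B (concentration).} Again adapt Shah. Write
\[
n\bar X_n-n\mu_* = T_*(n)\big(\bar X_{*,T_*(n)}-\mu_*\big) + \sum_{i\ne i^*}T_i(n)\big(\bar X_{i,T_i(n)}-\mu_*\big).
\]
The second sum is at most $2R\sum_{i\neq i^*}T_i(n)$ in absolute value. We need a high-probability, not only expected, bound on $T_i(n)$. The proof of Lemma~\ref{lemma: VOI-UCB bounded expected pulls} shows that $T_i(n)$ exceeds $A_i(n)$ plus $u$ only on events whose probabilities are polynomial tails bounded via Lemmas~\ref{lemma: VOI-UCB bounded probability} and~\ref{lemma: VOI-UCB bounded probability lower tail}. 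Summing those tails gives
\[
\mathbb{P}\big(T_i(n)\ge A_i(n)+u\big)\lesssim u^{-(\alpha-1)},
\]
which produces $\xi'=\alpha-1$. Since $A_i(n)=O(n^{\alpha/(\xi(1-\eta))})$, we set $\eta'=\alpha/(\xi(1-\eta))$. The constraint $\xi\eta(1-\eta)\le\alpha$ gives $\eta'\ge\eta$, so the optimal arm's own fluctuation, of order $T_*(n)^\eta\le n^\eta\le n^{\eta'}$ by assumption~B, also fits the $n^{\eta'}z$ scale. The constraint $\alpha<\xi(1-\eta)$ gives $\eta'<1$.

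To finish Part B, a union bound splits the event $\{n\bar X_n-n\mu_*\ge n^{\eta'}z\}$ into a deviation of the optimal arm of size $n^{\eta'}z/2$ and excessive suboptimal pulls summing to more than $n^{\eta'}z/(4R)$. Each piece is bounded by $\beta'' z^{-\xi'}$, and the constants are collected into $\beta'$, which then depends on $R,K,\Delta^U_{min},\beta,\xi,\alpha,\eta,\kappa$. The lower tail is handled symmetrically.

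\textbf{Main obstacle.} The optimal arm's term is the hardest step. $T_*(n)$ is random and dependent on the sample path, so assumption~B, stated for deterministic $n$, cannot be applied directly. Shah handles this with a peeling or union argument over possible values of $T_*(n)$, and I would reuse it verbatim. The point to check is that the $\kappa$-biased selection rule enters only through $T_i(n)$ for suboptimal arms. Since Lemmas~\ref{lemma: VOI-UCB bounded probability}--\ref{lemma: VOI-UCB bounded expected pulls} already carry every $\kappa$-dependence, the rest should go through with only constant changes.
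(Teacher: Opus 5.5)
Your proposal is essentially the paper's proof. The paper likewise reruns Theorem~3 of \citet{shah2022polyuct} with the threshold $A(t)=\max_i A_i(t)$, whose constant $4/\Delta^U_{\min}$ comes from $1+\kappa_i\le 2$, substitutes Lemmas~\ref{lemma: VOI-UCB bounded probability}, \ref{lemma: VOI-UCB bounded probability lower tail} and~\ref{lemma: VOI-UCB bounded expected pulls} for Shah's tail and expected-pull bounds, and reads $\mu_*$ as the raw mean of the VOI-optimal arm, exactly as you do.

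One small sharpening concerns the random-$T_*(n)$ step you flag: write $\sum_{t\le T_*(n)}(X_{*,t}-\mu_*)=\sum_{t\le n}(X_{*,t}-\mu_*)-\sum_{t=T_*(n)+1}^{n}(X_{*,t}-\mu_*)$, apply assumption~B at the deterministic index $n$, and pair the $n-T_*(n)=\sum_{i\ne i^*}T_i(n)$ leftover terms with the suboptimal-arm rewards so the $\mu_*$ offsets cancel. This pairing is what gives $2R$ rather than the $4R$ you get by bounding the two pieces separately in Part~A, and it also avoids a plain union bound over the values of $T_*(n)$, which is not uniform in $n$ when $\eta'=\eta$.
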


\begin{proof}
The proof follows the proof of Theorem~3 in \citet{shah2022polyuct} after replacing Shah’s threshold by
\[
A(t)=\max_{i\in[K]}A_i(t)
=
\left\lceil
\left(\frac{4}{\Delta^U_{\min}}\beta^{1/\xi}\right)^{\frac{1}{1-\eta}}
t^{\frac{\alpha}{\xi(1-\eta)}}
\right\rceil,
\]
replacing Shah’s suboptimal-arm upper-tail bound by Lemma~\ref{lemma: VOI-UCB bounded probability}, replacing Shah’s optimal-arm lower-tail bound by Lemma~\ref{lemma: VOI-UCB bounded probability lower tail}, and using Lemma~\ref{lemma: VOI-UCB bounded expected pulls} in place of Shah’s expected-pulls bound. With these substitutions, the remainder of the convergence and concentration argument is identical up to constants, yielding the stated result.
\end{proof}

We aim to prove that under our VOI-based MCTS (adapted from Algorithm $1$ in \citet{shah2022polyuct}), the mean of the empirical reward at the root node of the MCTS tree is within $O(n^{\eta-1})$ of the mean reward obtained as the number of samples tends to infinity. To construct the VOI-based MCTS, we replace the standard action selection policy (cf. (5) of \citet{shah2022polyuct}) with the following $\kappa$-biased policy:

\begin{equation}
    \begin{aligned}
        a^{(h+1)} = & \arg \max_{a\in \mathcal{A}} Q^h_N(s,a) - \kappa_a|Q^h_N(s,a)| + B^h_N(s,a)
    \end{aligned}
\end{equation}
where
\begin{align}
    Q^h_N(s,a) = \frac{q^{(h+1)}(s^{(h)},a) + \gamma \tilde v^{(h+1)}(s^{(h)}\circ a)}{N^{(h+1)}(s^{(h)}\circ a)}
\end{align}
and
\begin{align}
    B^h_N(s,a) = \frac{(\beta^{(h+1)})^{1/\xi^{(h+1)}} \cdot (N^{(h)}(s^{(h)}))^{\alpha^{(h+1)}/\xi^{(h+1)}}}{(N^{(h+1)}(s^{(h)}\circ a))^{1-\eta^{(h+1)}}}
\end{align}
Note, $\kappa_a$ is parameterized by action $a$ where for each $a\in[K]$, $\kappa_a\in[0,1]$

\subsubsection{Analyzing Leaf Level $d = D$}
Let $n_{D-1}$ denote the number of nodes at depth $D-1$, corresponding to states $s_{1,D-1},\dots,s_{n_{D-1},D-1} \in \mathcal{S}$. Consider node $i\in[n_{D-1}]$ at level $D-1$ corresponding to state $s_{i,D-1}$. Under the VOI-MCTS algorithm, whenever this node is visited, an action $a\in[K]$ will be taken, reaching the leaf node ${s}_{D}' = s_{i,D-1} \circ a$. This results in the cumulative reward $\mathcal{R}(s_{i,D-1},a) + \gamma \tilde v^{(D)}({s}_{D}')$, where $\tilde{v}^{(D)}(\cdot)$ denotes an initial value estimate $V_{\epsilon_0} = 0$. Similar to \eqref{eq:voi_mean} and \eqref{eq:optimal_voi_mean} for the bandit, we define the following VOI-based terms for the tree:

\begin{equation}
    \begin{aligned}
        \mu_a^{(D\mhyphen1)}(s_{i,D\mhyphen1}) = & \mathbb{E}\left[ \mathcal{R}(s_{i,D\mhyphen1}, a)\right] + \gamma \tilde v^{(D)}({s}_{i,D\mhyphen1} \circ a), \\
        \mu_a^{U,(D\mhyphen1)}(s_{i,D\mhyphen1}) = &\mu_a^{(D\mhyphen1)}(s_{i,D\mhyphen1}) - \kappa_a^{(D\mhyphen1)}|\mu_a^{(D\mhyphen1)}(s_{i,D\mhyphen1})|,\\
        a_*^{(D\mhyphen1)}(s_{i,D\mhyphen1}) \in & \arg \max_{a\in [K]} \mu_a^{U,(D\mhyphen1)}(s_{i,D\mhyphen1}), \\
        \mu_*^{U,(D\mhyphen1)}(s_{i,D\mhyphen1}) = & \max_{a\in[K]} \mu_a^{U,(D\mhyphen1)}(s_{i,D\mhyphen1}),\\
        \mu_*^{(D\mhyphen1)}(s_{i,D\mhyphen1}) = & \mu_{a_*^{(D\mhyphen1)}(s_{i,D\mhyphen1})}^{(D\mhyphen1)}(s_{i,D\mhyphen1}),\\
        \Delta_{min}^{U,(D \mhyphen 1)}(s_{i,D\mhyphen1}) = & \mu_*^{U,(D\mhyphen1)}(s_{i,D\mhyphen1}) \\
        & - \max_{a\neq a_*^{(D\mhyphen1)}(s_{i,D\mhyphen1})} \mu_a^{U,(D\mhyphen1)}(s_{i,D\mhyphen1}), \\
    \end{aligned}
    \label{eq:leaf_node_stats}
\end{equation}
where all rewards belong to $[-\tilde{R}_{\max}^{(D-1)}, \tilde{R}_{\max}^{(D-1)}]$ consistent with \citet{shah2022polyuct}.

By applying Lemma 4 from \citet{shah2022polyuct}, we obtain that for $\eta^{(D)} \in \left[\tfrac{1}{2}, 1\right)$, and sufficiently large $\xi^{(D)}$, 
there exists a constant $\beta^{(D)}$ such that the collected rewards at 
$s_{i,D-1}$ (i.e., the sum of the i.i.d. stochastic reward and the value 
evaluation) satisfy the convergence property (cf.~\eqref{eq: MAB convergence}) and the concentration 
property (cf.~\eqref{eq: MAB concentration}). Therefore, by an application of Theorem~\ref{theorem: MAB VOI-UCB Convergence}, we have the following Lemma.

\begin{lemma}[Adapted from Lemma $5$ of \citet{shah2022polyuct}]
    \label{lemma: leaf convergence}
    Consider a node corresponding to state $s_{i,D-1}$ at level $d = D-1$ within the VOI-MCTS for $i \in [n_{D-1}]$. Let $\tilde{v}^{(D-1)}(s_{i,D-1})_n$ be the total discounted reward collected at $s_{i,D-1}$ during $n \ge 1$ visits of it, to one of its $K$ leaf nodes under the VOI-UCB policy. Then, for the choice of appropriately large $\beta^{(D)} > 0$, for a given $\xi^{(D)} > 0$, $\eta^{(D)} \in [\frac{1}{2}, 1)$ and $\alpha^{(D)} > 2$, we have

\noindent \textbf{A. Convergence:}
{\small
\begin{align*}
    &\left|\mathbb{E}\left[\frac{1}{n} \tilde{v}^{\!(D-1)}(s_{i,D-1})_n\right] - \mu^{(D-1)}_*(s_{i,D-1})\right| \\
    &\le \psi^{(D-1)}_{n}\!\!\!\left( \left(\frac{4(\beta^{(D))^\frac{1}{\xi^{(D)}}}}{\Delta_{min}^{U,(D-1)}(s_{i,D-1})}\right)^{\frac{1}{1 - \eta^{(D)}}} \!\!\!\!\!\!\cdot n^{\frac{\alpha^{(D)}}{\xi^{(D)}(1-\eta^{(D)})}} + C^{(D)}\right),
\end{align*}
}
where $\psi^{(D-1)}_{n} = \frac{2\tilde{R}_{\max}^{(D-1)} (K-1)}{n} $ and $C^{(D)} = \frac{1+2^{\xi^{(D)}}}{\alpha^{(D)}-2} + 1 $.

\noindent \textbf{B. Concentration:} there exist constants, $\beta' > 1$ and $\xi' > 1$ and $1/2 \le \eta' < 1$ such that for every $n \ge 1$ and every $z \ge 1$,
\begin{align*}
    \mathbb{P}(\tilde{v}^{(D-1)}(s_{i,D-1})_n - n\mu_{*}^{(D-1)}(s_{i,D-1}) &\ge n^{\eta'} z) \le \frac{\beta'}{z^{\xi'}}, \\
    \mathbb{P}(\tilde{v}^{(D-1)}(s_{i,D-1})_n - n\mu_{*}^{(D-1)}(s_{i,D-1}) &\le -n^{\eta'} z) \le \frac{\beta'}{z^{\xi'}},
\end{align*}
where $\eta' = \frac{\alpha^{(D)}}{\xi^{(D)}(1-\eta^{(D)})}$, $\xi' = \alpha^{(D)} - 1$, and $\beta'$ is a large enough constant that is function of parameters $\alpha^{(D)}, \beta^{(D)}, \xi^{(D)}, \eta^{(D)}, \tilde{R}_{\max}^{(D-1)}, K$, \text{and} $\Delta_{\min}^{U,(D-1)}(s_{i,D\mhyphen1})$.
\end{lemma}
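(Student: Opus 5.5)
The plan is to treat the node $s_{i,D-1}$ as an instance of the nonstationary $\kappa$-biased bandit of Theorem~\ref{theorem: MAB VOI-UCB Convergence} and read off both claims directly.

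\textbf{Step 1: identify the bandit.} The $K$ arms are the actions $a\in[K]$ available at $s_{i,D-1}$. Pulling arm $a$ for the $t$-th time returns
\[
X_{a,t}=\mathcal{R}(s_{i,D-1},a)+\gamma\,\tilde v^{(D)}(s_{i,D-1}\circ a).
\]
This reward lies in $[-\tilde R^{(D-1)}_{\max},\tilde R^{(D-1)}_{\max}]$. At the leaf level, $\tilde v^{(D)}$ is a fixed initial estimate $V_{\epsilon_0}=0$, so the rewards for each arm are i.i.d. Hence $\mu_{a,n}=\mu_a^{(D-1)}(s_{i,D-1})$ for every $n$. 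Property A (convergence) then holds trivially with $\delta_{*,n}=0$.

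\textbf{Step 2: verify concentration.} Property B follows from Lemma~4 of \citet{shah2022polyuct}, as already noted before the statement. For $\eta^{(D)}\in[\tfrac12,1)$ and $\xi^{(D)}$ large enough, there is a $\beta^{(D)}$ for which the bounded i.i.d. sums satisfy the two-sided polynomial tail bounds. I would enlarge $\beta^{(D)}$ further so that $(\beta^{(D)})^{1/\xi^{(D)}}\ge 1+\kappa_{a}$ for all arms, which needs $(\beta^{(D)})^{1/\xi^{(D)}}\ge 2$ since $\kappa_a\le1$. This is the hypothesis of Lemma~\ref{lemma: VOI-UCB bounded probability lower tail}. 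I would also pick $\alpha^{(D)}$ with $\xi^{(D)}\eta^{(D)}(1-\eta^{(D)})\le\alpha^{(D)}<\xi^{(D)}(1-\eta^{(D)})$ and $\alpha^{(D)}>2$. Such an $\alpha^{(D)}$ exists once $\xi^{(D)}(1-\eta^{(D)})>2$, and that is exactly the sense of "sufficiently large $\xi^{(D)}$". The selection rule at the node is the $\kappa$-biased policy, i.e. precisely \eqref{eq:action_selection_policy}, with $t=N^{(D-1)}(s_{i,D-1})$ and $s_a=N^{(D)}(s_{i,D-1}\circ a)$.

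\textbf{Step 3: apply Theorem~\ref{theorem: MAB VOI-UCB Convergence}.} Note that $\tfrac1n\tilde v^{(D-1)}(s_{i,D-1})_n=\bar X_n$ and that $\mu_*$ in the theorem is $\mu^{(D-1)}_*(s_{i,D-1})$, the unbiased mean of the $\kappa$-optimal arm. Part A then gives the stated bound with $\delta_{*,n}=0$. Its constant $C^{(D)}=\frac{1+2^{\xi^{(D)}}}{\alpha^{(D)}-2}+1$ comes from bounding $(1+\kappa)^\xi\le 2^\xi$. Part B gives the concentration claim with $\eta'$ and $\xi'$ as stated, after multiplying through by $n$.

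\textbf{Main obstacle.} The step I expect to need the most care is checking that the constants in Theorem~\ref{theorem: MAB VOI-UCB Convergence} are uniform in $\kappa$. Two things must be confirmed. First, $A(t)$ uses $2(1+\kappa_i)\le 4$. Second, the lower-tail factor $(1+\kappa_*)^\xi$ is bounded by $2^\xi$. With both checks done, the constants become functions only of $\alpha^{(D)},\beta^{(D)},\xi^{(D)},\eta^{(D)},\tilde R^{(D-1)}_{\max},K,\Delta^{U,(D-1)}_{\min}$. A second point to check is the uniqueness assumption on the $\kappa$-optimal arm, which is needed for $\Delta^{U,(D-1)}_{\min}>0$. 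I would take this as a standing assumption, as in the bandit section.
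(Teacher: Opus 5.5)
Your proposal is correct and follows the paper's route. The paper invokes Lemma~4 of \citet{shah2022polyuct} to get the convergence and concentration properties for the i.i.d.\ leaf rewards, where $\delta_{*,n}=0$ explains why no $\delta$ term appears. It then applies Theorem~\ref{theorem: MAB VOI-UCB Convergence} with $R=\tilde R^{(D-1)}_{\max}$, and the bounds $2(1+\kappa_i)\le 4$ and $(1+\kappa_*)^{\xi}\le 2^{\xi}$ produce the stated constants; your explicit choices of $\alpha^{(D)}$, and of $\beta^{(D)}$ large enough for Lemma~\ref{lemma: VOI-UCB bounded probability lower tail}, spell out what the paper leaves implicit in ``appropriately large $\beta^{(D)}$''.
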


\subsubsection{Recursion: Going from level $d$ to $d-1$}

The proof for the convergence and concentration properties of the value estimates across all levels $d$ of the VOI-MCTS tree proceeds via mathematical induction, following the structure established by \citet{shah2022polyuct}. The mechanism of the induction is identical to the original paper, but the final, global convergence rate differs due to how we defined the sample threshold $A_{i}(t)$.

Assume the estimated value $\tilde{v}^{(d)}(\cdot)$ at level $d$ satisfies properties \eqref{eq: MAB convergence} and \eqref{eq: MAB concentration} with parameters $\alpha^{(d)}$, $\xi^{(d)}$, $\eta^{(d)}$, and an appropriately large constant $\beta^{(d)}$. For any node $s_{i,d-1}$ at level $d-1$, the effective reward received is the combined value $\mathcal{R}(s_{i,d-1}, a) + \gamma \tilde{v}^{(d)}(s'_d)$ where ${s}_{d}' = s_{i,d-1} \circ a$, and $\tilde{v}^{(d)}(\cdot)$ returns the collected rewards recursively until the leaf node at depth $D$. By the inductive hypothesis and an application of Lemma $4$ \citep{shah2022polyuct}, this combined reward sequence continues to satisfy assumptions \eqref{eq: MAB convergence} and \eqref{eq: MAB concentration} with the same $\alpha, \xi, \eta$ dependencies and a re-adjusted constant $\beta^{(d-1)}$.

Note, $\mu_*^{(d-1)}(s_{i,d-1})$, $\mu^{U,(d-1)}_{a}(s_{i,d-1}), \mu^{U,(d-1)}_{*}(s_{i,d-1})$, $a_*^{(d-1)}(s_{i,d-1})$, and $\Delta_{\min}^{U,(d-1)}(s_{i,d-1})$ are defined similarly to \eqref{eq:leaf_node_stats}, and rewards here belong to $[-\tilde{R}_{\max}^{(d-1)}, \tilde{R}_{\max}^{(d-1)}]$. By an application of Theorem~\ref{theorem: MAB VOI-UCB Convergence}, we obtain the following result regarding the desired convergence and concentration properties of the combined reward sequence at level $d-1$, which completes the induction and proves the results for all $d \geq 0$. 

\begin{lemma}[Adapted from Lemma $6$ of \citet{shah2022polyuct}]
    \label{lemma: recursive convergence}
    Consider a node corresponding to state $s_{i,d-1}$ at level $d-1$ within the VOIMCP for $i \in [n_{d-1}]$. Let $\tilde{v}^{(d-1)}(s_{i,d-1})_n$ be the total discounted reward collected at $s_{i,d-1}$ during $n \ge 1$ visits. Then, for the choice of appropriately large $\beta^{(d)} > 0$, for a given $\xi^{(d)} > 0$, $\eta^{(d)} \in [\frac{1}{2}, 1)$ and $\alpha^{(d)} > 2$, we have

\noindent \textbf{A. Convergence:}
{\small
\begin{align*}
    &\left|\mathbb{E}\left[\frac{1}{n} \tilde{v}^{(d-1)}(s_{i,d-1})_n\right] - \mu^{(d-1)}_{*}(s_{i,d-1})\right| \\
    &\le \psi^{(d-1)}_{n}\left( \left(\frac{4(\beta^{(d)})^{\frac{1}{\xi^{(d)}}}}{\Delta_{\min}^{U,(d-1)}(s_{i,d-1})}\right)^{\frac{1}{1-\eta^{(d)}}} \cdot n^{\frac{\alpha^{(d)}}{\xi^{(d)}(1-\eta^{(d)})}} + C^{(d)} \right),
\end{align*}
}
where $\psi^{(d-1)}_{n} = \frac{2 \tilde{R}_{\max}^{(d-1)} (K-1)}{n}$ and $C^{(d)} = \frac{1+2^{\xi^{(d)}}}{\alpha^{(d)} - 2} + 1$.

\noindent \textbf{B. Concentration:} there exist constants, $\beta' > 1$ and $\xi' > 1$ and $1/2 \le \eta' < 1$ such that for $n \ge 1$, $z \ge 1$,
\begin{align*}
    \mathbb{P}(\tilde{v}^{(d-1)}(s_{i,d-1})_n - n\mu^{(d-1)}_{*}(s_{i,d-1}) &\ge n^{\eta'} z) \le \frac{\beta'}{z^{\xi'}}, \\
    \mathbb{P}(\tilde{v}^{(d-1)}(s_{i,d-1})_n - n\mu^{(d-1)}_{*}(s_{i,d-1}) &\le -n^{\eta'} z) \le \frac{\beta'}{z^{\xi'}},
\end{align*}
where $\eta' = \frac{\alpha^{(d)}}{\xi^{(d)}(1-\eta^{(d)})}$, $\xi' = \alpha^{(d)} - 1$, and $\beta'$ is a large enough constant that is function of parameters $\alpha^{(d)}, \beta^{(d)}, \xi^{(d)}, \eta^{(d)}, \tilde{R}_{\max}^{(d-1)}, K, \Delta_{\min}^{U,(d-1)}(s_{i,d-1})$.
\end{lemma}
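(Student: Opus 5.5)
The plan is to obtain the lemma as a direct application of Theorem~\ref{theorem: MAB VOI-UCB Convergence}. The bandit lives at node $s_{i,d-1}$, its arms are the $K$ actions, and the reward of arm $a$ at its $t$-th pull is $X_{a,t} = \mathcal{R}(s_{i,d-1},a) + \gamma\,\tilde v^{(d)}(s_{i,d-1}\circ a)_{\text{increment}}$. Here the second term is the discounted return propagated back from the child node. By construction, $\frac{1}{n}\tilde v^{(d-1)}(s_{i,d-1})_n$ is exactly the empirical average $\bar X_n$ of this bandit under the $\kappa$-biased selection rule. That rule coincides with the VOI-UCB \eqref{eq:action_selection_policy} with $t = N^{(d-1)}(s_{i,d-1})$ and $s_a = N^{(d)}(s_{i,d-1}\circ a)$. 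So the work reduces to verifying the two bandit hypotheses \eqref{eq: MAB convergence} and \eqref{eq: MAB concentration} for each arm, and then checking the parameter constraints.

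For the hypotheses I would use the inductive hypothesis at level $d$. Each child $s_{i,d-1}\circ a$ is itself a node whose collected reward satisfies convergence to $\mu_*^{(d)}(\cdot)$ and polynomial concentration with exponents $(\eta',\xi')$ from the previous application of the lemma; at the base case these come from Lemma~\ref{lemma: leaf convergence}. The immediate reward $\mathcal{R}$ is i.i.d. and bounded, so Hoeffding gives exponential, hence polynomial, concentration. Lemma~4 of \citet{shah2022polyuct} states that the sum of a bounded i.i.d. sequence and a sequence satisfying \eqref{eq: MAB convergence}--\eqref{eq: MAB concentration} again satisfies both. Applying it gives, for each arm, convergence to $\mu_a^{(d-1)}(s_{i,d-1}) = \mathbb{E}[\mathcal{R}] + \gamma \mu_*^{(d)}(s_{i,d-1}\circ a)$ and concentration with $\eta^{(d)} = \eta'$, $\xi^{(d)} = \xi'$ and an enlarged $\beta^{(d)}$. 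Rewards lie in $[-\tilde R^{(d-1)}_{\max}, \tilde R^{(d-1)}_{\max}]$ since $\tilde R^{(d-1)}_{\max} = R_{\max} + \gamma \tilde R^{(d)}_{\max}$. The $\kappa$-bias never enters this step, because the concentration is of the unbiased child averages.

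Next I would choose the parameters so that Theorem~\ref{theorem: MAB VOI-UCB Convergence} applies: $\alpha^{(d)}>2$ and $\xi^{(d)}\eta^{(d)}(1-\eta^{(d)}) \le \alpha^{(d)} < \xi^{(d)}(1-\eta^{(d)})$. This is feasible by taking $\xi^{(d)}$ large, as in \citet{shah2022polyuct}. I would also take $\beta^{(d)}$ large enough that $(\beta^{(d)})^{1/\xi^{(d)}} \ge 1+\kappa_{a^*} \le 2$, as required by Lemma~\ref{lemma: VOI-UCB bounded probability lower tail}. Part A then follows with $\mu_*$ replaced by $\mu_*^{(d-1)}(s_{i,d-1})$, $\Delta^U_{\min}$ by $\Delta_{\min}^{U,(d-1)}(s_{i,d-1})$, $R$ by $\tilde R^{(d-1)}_{\max}$, and the $(1+\kappa)^\xi \le 2^\xi$ factor absorbed into $C^{(d)}$. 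The $|\delta_{*,n}|$ term has to be absorbed as well. It is the convergence rate of the optimal child's mean, which by the inductive hypothesis is $O(n^{\eta^{(d)}-1})$. It is at most of the same order as the displayed $\psi_n^{(d-1)}(\cdot)$ term, provided $\eta' \ge \eta^{(d)}$, i.e. $\alpha^{(d)} \ge \xi^{(d)}\eta^{(d)}(1-\eta^{(d)})$, which is exactly the lower constraint; so it can be absorbed by enlarging the constant. Part B is Theorem~\ref{theorem: MAB VOI-UCB Convergence}B verbatim, with $\eta' = \alpha^{(d)}/(\xi^{(d)}(1-\eta^{(d)}))$ and $\xi' = \alpha^{(d)}-1 > 1$. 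It also gives $\eta' \in [1/2,1)$, since $\eta' \ge \eta^{(d)} \ge 1/2$ and $\alpha^{(d)} < \xi^{(d)}(1-\eta^{(d)})$.

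The main obstacle I expect is consistency of the parameter chain across levels. The output exponents $(\eta',\xi')$ at level $d-1$ must serve as admissible inputs at the next level up, so that $\xi' = \alpha^{(d)}-1$ is still large enough to permit a new $\alpha^{(d-1)}>2$ inside the required interval. I would handle this as \citet{shah2022polyuct} do, by fixing $\eta$ and choosing $\xi^{(D)}$ large and decreasing geometrically toward the root. A second subtlety is uniqueness of the $\kappa$-optimal arm, i.e. $\Delta^{U,(d-1)}_{\min}>0$. I would assume it as in the bandit setting, or note that under ties the bound degrades only in constants.
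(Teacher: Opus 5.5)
Your proposal follows the paper's argument: assume the level-$d$ estimates satisfy convergence and concentration, transfer these properties to the combined rewards $\mathcal{R}(s_{i,d-1},a)+\gamma\tilde v^{(d)}(s_{i,d-1}\circ a)$ via Lemma~4 of \citet{shah2022polyuct}, and apply Theorem~\ref{theorem: MAB VOI-UCB Convergence}. Your explicit bookkeeping (the interval constraint on $\alpha^{(d)}$, $(\beta^{(d)})^{1/\xi^{(d)}}\ge 1+\kappa$, $(1+\kappa)^{\xi^{(d)}}\le 2^{\xi^{(d)}}$ inside $C^{(d)}$, and the geometric parameter chain) is more careful than the paper's one-line appeal, with two small caveats. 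First, the $|\delta_{*,n}|$ term, which the paper drops silently, is not absorbed into the displayed bound by enlarging $\beta^{(d)}$, because that bound grows with the same $\beta^{(d)}$ the algorithm uses; what you actually prove is the displayed bound plus $O(n^{\eta^{(d)}-1})\subseteq O(n^{\eta'-1})$, which is all the root argument uses. Second, your fallback on ties is wrong: an OL/CL tie $\mu_{OL}=\mu_{CL}-\kappa\lvert\mu_{CL}\rvert$ pairs arms with different raw means and makes the target $\mu_*^{(d-1)}(s_{i,d-1})$ ill-defined, so uniqueness of the $\kappa$-optimal arm must be assumed, as the paper does.
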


\subsubsection{Completing Proof of Theorem~\ref{thm:VOIMCP convergence}}

Using Lemma~\ref{lemma: recursive convergence}, we conclude that the recursive
relationship going from $d$ to $d-1$ holds for all $d\geq 1$, with level $0$
being the root. Thus, for the root state $s^{(0)}$, under $n$ total simulations
of VOI-MCTS, the empirical average of the rewards $\frac{1}{n}\tilde v^{(0)}(s_0)_n$
satisfies
\begin{align}
    \left|
    \mathbb{E}\!\left[\frac{1}{n}\tilde v^{(0)}(s_0)_n\right]
    - \mu_*^{(0)}(s_0)
    \right|
    &\le
    O\!\left(
    n^{\frac{\alpha^{(0)}}{\xi^{(0)}(1-\eta^{(0)})}-1}
    \right)
    \nonumber\\
    &= O(n^{\eta-1}),
\end{align}
where $\mu_*^{(0)}(s_0)$ is the raw depth-$D$ value obtained at the root by the
$\kappa$-adaptive VOI-MCTS recursion.

To transfer this root-level VOI-MCTS result to the VOIMCP estimator
$\bar{V}_n(b_0)=\frac{1}{n}\sum_{a\in\mathcal{A}'}N(h_0a)\bar Q(h_0a)$, we introduce the derived history-MDP
for the transformed VOI-POMDP $\mathcal P'$. Let
\[
\tilde{\mathcal M}' = (\mathcal H', \mathcal A', \bar{\mathcal T}, \bar{\mathcal R}, D, h_0, \gamma),
\]
where $\mathcal H'$ is the set of reachable histories of $\mathcal P'$, and for any
$h\in\mathcal H'$ and $a\in\mathcal A'$,
\begin{align}
    \bar{\mathcal R}(h,a)
    &:= \sum_{s\in\mathcal S} B(s,h)\,\mathcal R'(s,a), \\
    \bar{\mathcal T}(h,a,hao)
    &:= \sum_{s\in\mathcal S}\sum_{s'\in\mathcal S}
    B(s,h)\,\mathcal T'(s,a,s')\,\mathcal Z'(s',o,a),
\end{align}
with $B(\cdot,h)$ denoting the exact belief induced by $h$ in $\mathcal P'$. Define $\kappa_i = \kappa$ for closed-loop actions, and $\kappa_i = 0$ for open-loop actions.

The deterministic-transition analysis above extends to stochastic transitions by
the same meta-node construction as in Lemma~A.1 of \citet{shah2022polyuct}.
Since $\mathcal P'$ is a finite-horizon POMDP and $B(\cdot,h)$ is the exact belief,
the derived history-MDP equivalence of Lemmas~1 and~2 in \citet{silver2010pomcp}
applies.

Let $\tilde V_D^\kappa(h_0)$ denote the depth-$D$ value at the root of
$\tilde{\mathcal M}'$ under the same $\kappa$-adaptive selection rule induced by the VOI-MCTS recursion. Since Lemma~\ref{lemma: recursive convergence} is centered at the
raw mean of the action selected by this $\kappa$-adaptive rule, the root target satisfies
\[
\mu_*^{(0)}(s_0)=\tilde V_D^\kappa(h_0).
\]
By Lemma~1 of \citet{silver2010pomcp}, the value of any history-dependent policy in
$\mathcal P'$ is equal to its value in the derived history-MDP $\tilde{\mathcal M}'$.
Therefore,
\[
\tilde V_D^\kappa(h_0)=\hat V_D'(b_0).
\]
Furthermore, by Proposition~\ref{prop: pprime optimal value},
\[
\hat V_D'(b_0)=\hat V_D^*(b_0).
\]

Moreover, by Lemma~2 of \citet{silver2010pomcp}, the rollout distribution in
$\mathcal P'$ is equal to the rollout distribution in $\tilde{\mathcal M}'$ for any
history-dependent simulation policy. Since VOIMCP in $\mathcal P'$ and VOI-MCTS in
$\tilde{\mathcal M}'$ use the same history-dependent selection rule, their simulation
traces are equal in distribution. Consequently, the VOIMCP root estimator
\[
\bar V_n(b_0)=\frac{1}{n}\sum_{a\in\mathcal A'} N(h_0a)\bar Q(h_0a)
\]
has the same distribution as the root empirical average
\[
\frac{1}{n}\tilde v^{(0)}(s_0)_n
\]
analyzed above. Hence,
\[
\left|\mathbb E[\bar V_n(b_0)]-\hat V_D^*(b_0)\right|
\le O(n^{\eta-1}),
\]
which proves Theorem~\ref{thm:VOIMCP convergence}.

Theorem~\ref{thm:VOIMCP convergence} establishes that VOIMCP converges to the
adaptive VOI value $\hat V_D^*$, while Theorem~\ref{thm:suboptimality} bounds how
far $\hat V_D^*$ can deviate from the true optimal value $V_D^*$. Combining the
two yields the following bound.

\begin{corollary}
\[
\left|
\mathbb E[\bar V_n(b_0)]-V_D^*(b_0)
\right|
\le
O\!\left(n^{\eta-1}\right)
+
\kappa \frac{R_{\max}}{1-\gamma}
\left(\frac{1-\gamma^D}{1-\gamma}\right).
\]
\end{corollary}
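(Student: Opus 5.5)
The corollary follows from a single application of the triangle inequality, inserting the adaptive VOI value $\hat V_D^*(b_0)$ as an intermediate quantity:
\begin{align*}
\left|\mathbb E[\bar V_n(b_0)]-V_D^*(b_0)\right|
\le \left|\mathbb E[\bar V_n(b_0)]-\hat V_D^*(b_0)\right| + \left|\hat V_D^*(b_0)-V_D^*(b_0)\right|.
\end{align*}
I would then bound each term using a result already established.

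For the first term, I invoke Theorem~\ref{thm:VOIMCP convergence}. The corollary runs VOIMCP with the same fixed $\kappa\in[0,1]$ and the same depth-dependent parameter configuration for $\xi,\alpha,\beta$ with $\eta\in[0.5,1)$. Under those conditions the theorem gives $|\mathbb E[\bar V_n(b_0)]-\hat V_D^*(b_0)|\le O(n^{\eta-1})$ directly.

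For the second term, note that it is exactly the regret $\rho_D(b_0)$ defined in \eqref{eq:regret}. Theorem~\ref{thm:suboptimality} applies with $d=D\ge 1$ and $b=b_0$, yielding
\[
\rho_D(b_0)\le \kappa \frac{R_{\max}}{1-\gamma}\left(\frac{1-\gamma^D}{1-\gamma}\right).
\]
Adding the two bounds gives the stated inequality.

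The only real care point is consistency of hypotheses, which I would check explicitly:
\begin{itemize}
\item the fixed $\kappa$ used inside VOIMCP must be the same $\kappa$ that defines $\hat V^*$ in \eqref{eq:optimal_value_cond};
\item the target of Theorem~\ref{thm:VOIMCP convergence} must be this same $\hat V_D^*$, which Proposition~\ref{prop: pprime optimal value} guarantees;
\item $\gamma<1$ is needed for the regret constant to be finite.
\end{itemize}
The $O(\cdot)$ constant may depend on $\kappa$ and the problem parameters, but since $\kappa$ is fixed this does not affect the additive form of the bound.
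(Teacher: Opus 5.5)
Your proposal is correct and is the same as the paper's proof: insert $\hat V_D^*(b_0)$ via the triangle inequality, bound the first term by Theorem~\ref{thm:VOIMCP convergence}, and bound the second term, the regret $\rho_D(b_0)$, by Theorem~\ref{thm:suboptimality} with $d=D$. Your consistency checks are sensible additions that the paper leaves implicit, in particular the point that $\gamma<1$ is needed for the regret constant to be finite even though the paper allows $\gamma\in[0,1]$.
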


\begin{proof}
By the triangle inequality,
\begin{align*}
    \left|
    \mathbb E[\bar V_n(b_0)]-V_D^*(b_0)
    \right|
    \le &
    \left|
    \mathbb E[\bar V_n(b_0)]-\hat V_D^*(b_0)
    \right|
     \nonumber \\ & +
    \left|
    \hat V_D^*(b_0)-V_D^*(b_0)
    \right|.
\end{align*}
The first term is $O(n^{\eta-1})$ by Theorem~\ref{thm:VOIMCP convergence}, and
the second term is bounded by Theorem~\ref{thm:suboptimality}.
\end{proof}

\section{Proof of Theorem~\ref{thm: voimcp global convergence}}

The proof of Theorem~\ref{thm: voimcp global convergence} follows the same structure as Theorem \ref{thm:VOIMCP convergence}. Thus, our analysis relies on a sequence of theoretical results analogous to Lemmas $1-7$ and Theorems $1$ and $3$ in \citet{shah2022polyuct}. From Theorem~\ref{thm:VOIMCP convergence}, the main modifications are the optimality criterion and a scheduled decaying $\kappa_i$. We provide the statements with updated variables below, but detail the full proofs only where the deviation is non-trivial.

We begin again by considering the same non-stationary multi-armed bandit defined in \citet{shah2022polyuct}. Let there be $K \geq 1$ arms or actions. Let $\bar{X}_{i,n} = \frac{1}{n}\sum_{t=1}^{n}X_{i,t}$ denote the empirical average of choosing arm $i$ for $n$ times, and let $\mu_{i,n}$ be its expectation. For each arm $i \in [K]$, the reward $X_{i,t}$ is bounded in $[-R, R]$ for some $R > 0$, with the reward sequence being a nonstationary process satisfying \eqref{eq: MAB convergence} and \eqref{eq: MAB concentration}.

Distinct from Theorem~\ref{thm:VOIMCP convergence}, we now consider the optimal value with respect to the converged expectation of each arm,
\begin{align}
    \mu_* &= \max_{i \in [K]} \mu_i
\end{align}
with the suboptimality gap
\begin{align}
    \Delta_i = \mu_* - \mu_i.
\end{align}

As stated in the main text, any annealing schedule satisfying $0 \leq \kappa_{i}(t, s_i) \leq \frac{1}{c_\kappa R}B_{t, s_i}$ suffices. Define the scheduled VOI-UCB dependent on a time-varying $\kappa$ as
\begin{align}
    \label{eq: scheduled kappa UCB}
    U'(s_i, t) \equiv \bar X_{i,s_i} - \kappa_i(t,s_i) \lvert \bar X_{i,s_i} \rvert+ B_{t,s_i},
\end{align}
Then we define the annealed VOI-UCB algorithm as:

\begin{align}
    I_t \in \arg\max_{i \in [K]} U'_i(s_i,t),
    \label{eq:annealed_action_selection_policy}
\end{align}
where a tie between arms is broken arbitrarily. All other terms are as defined in \citet{shah2022polyuct}.

With the above, we have that the probability that a suboptimal arm or action has a large upper confidence is polynomially small.

\begin{lemma}
\label{lemma: VOI-UCB bounded probability kappa scheduled}
Let $i \in [K]$ be a suboptimal arm. Define
\begin{align}
    A'_i(t) \equiv \min_{u \in \mathbb{N}}\Big\{B_{t,u} \le \frac{\Delta_i}{2} \Big\} 
    = \left\lceil \left(\frac{2}{\Delta_i} \cdot \beta^{1/\xi} \cdot t^{\alpha/\xi}\right)^{\frac{1}{1-\eta}} \right\rceil
\end{align}
For any $t \ge 1$, $t \ge s_i \ge A'_i(t)$,
\begin{align}
    \mathbb{P}\big(U'_i(s_i,t) > \mu_*\big) \le t^{-\alpha}.
\end{align}
\end{lemma}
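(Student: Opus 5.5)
Since $\kappa_i(t,s_i)\ge 0$ and $\lvert\bar X_{i,s_i}\rvert\ge 0$, the VOI penalty can only lower the index. For every realization,
\[
U'_i(s_i,t) = \bar X_{i,s_i} - \kappa_i(t,s_i)\lvert \bar X_{i,s_i}\rvert + B_{t,s_i} \le \bar X_{i,s_i} + B_{t,s_i}.
\]
The event $\{U'_i(s_i,t)>\mu_*\}$ is therefore contained in $\{\bar X_{i,s_i} + B_{t,s_i} > \mu_*\}$. This is exactly the event controlled in the unbiased polynomial-UCB analysis of \citet{shah2022polyuct}, now with the unbiased gap $\Delta_i=\mu_*-\mu_i$ in place of $\Delta_i^U$. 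The upper bound $\kappa_i\le B_{t,s_i}/(c_\kappa R)$ is not needed here. It matters only for the optimal arm's lower tail, which is the companion lemma.

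Next I reduce to a deviation event. Write $\bar X_{i,s_i}+B_{t,s_i}-\mu_* = (\bar X_{i,s_i}-\mu_i) + B_{t,s_i} - \Delta_i$. Because $s_i\ge A'_i(t)$ and $B_{t,u}$ is decreasing in $u$, the definition of $A'_i(t)$ gives $B_{t,s_i}\le \Delta_i/2$, so $B_{t,s_i}-\Delta_i\le -\Delta_i/2$. Hence the event requires
\[
\bar X_{i,s_i}-\mu_i > \frac{\Delta_i}{2} \ge B_{t,s_i}.
\]
This is the same chain as in Lemma~\ref{lemma: VOI-UCB bounded probability}, but simpler, since no Lipschitz factor $(1+\kappa_i)$ arises once the penalty has been dropped.

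Finally I apply the upper-tail concentration assumption \eqref{eq: MAB concentration}. Multiply by $s_i$ and set $z = s_i^{1-\eta}B_{t,s_i} = \beta^{1/\xi}t^{\alpha/\xi}$. This gives $z\ge 1$ because $\beta>1$ and $t\ge 1$. Then
\[
\mathbb P\big(\bar X_{i,s_i}-\mu_i > B_{t,s_i}\big) \le \mathbb P\big(s_i(\bar X_{i,s_i}-\mu_i)\ge s_i^{\eta}z\big) \le \beta z^{-\xi} = t^{-\alpha}.
\]

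The only delicate point is bookkeeping. The explicit ceiling formula for $A'_i(t)$ must match the monotonicity of $B_{t,u}$ in $u$, and this holds because $1-\eta>0$. Also, $\kappa_i(t,s_i)$ may depend on the sample counts, but the pointwise domination in the first step holds regardless of how $\kappa_i$ depends on $t$ and $s_i$. No genuine obstacle remains.
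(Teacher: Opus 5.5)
Your proof is correct and matches the paper's argument step for step. You drop the nonnegative penalty to get $U'_i(s_i,t)\le \bar X_{i,s_i}+B_{t,s_i}$, use $s_i\ge A'_i(t)$ to obtain $\Delta_i-B_{t,s_i}\ge \Delta_i/2\ge B_{t,s_i}$, and finish with the upper-tail concentration assumption; your explicit check that $z=\beta^{1/\xi}t^{\alpha/\xi}\ge 1$ and your remark that the upper bound on $\kappa_i$ is needed only for the optimal arm's lower tail are accurate details the paper leaves implicit.
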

\begin{proof}
Fix $t \ge 1$ and $s_i \ge A'_i(t)$. Since $\kappa_i(t,s_i) \ge 0$, the penalty 
term $\kappa_i(t,s_i)|\bar{X}_{i,s_i}|$ is non-negative, so
\begin{align*}
    U'_i(s_i,t) = \bar{X}_{i,s_i} - \kappa_i(t,s_i)|\bar{X}_{i,s_i}| + B_{t,s_i} 
    \le \bar{X}_{i,s_i} + B_{t,s_i}.
\end{align*}
Therefore,
\begin{align*}
    \mathbb{P}\big(U'_i(s_i,t) > \mu_*\big) 
    &\le \mathbb{P}\big(\bar{X}_{i,s_i} + B_{t,s_i} > \mu_*\big) \\
    &= \mathbb{P}\big(\bar{X}_{i,s_i} - \mu_i > \Delta_i - B_{t,s_i}\big).
\end{align*}
Since $s_i \ge A'_i(t)$, by definition $B_{t,s_i} \le \Delta_i/2$, so 
$\Delta_i - B_{t,s_i} \ge \Delta_i/2 \ge B_{t,s_i}$. Hence,
\begin{align*}
    \mathbb{P}\big(U'_i(s_i,t) > \mu_*\big) 
    \le \mathbb{P}\big(\bar{X}_{i,s_i} - \mu_i > B_{t,s_i}\big) 
    \le t^{-\alpha},
\end{align*}
where the last step uses the concentration assumption~\eqref{eq: MAB concentration}.
\end{proof}

As in Theorem \ref{theorem: MAB VOI-UCB Convergence}, let $i^* \in \arg\max_{i \in [K]} \mu_i$ be the optimal arm. Then we have the following lemma.

\begin{lemma}
\label{lemma: optimal arm lower bound}
For any $t \geq 1$ and $1 \leq s \leq t$,
\begin{align*}
    \mathbb{P}\big(U'_*(s_*,t) \leq \mu_*\big) \leq \Big(\frac{c_\kappa}{c_\kappa-1}\Big)^\xi t^{-\alpha}.
\end{align*}
\end{lemma}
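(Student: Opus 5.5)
The plan mirrors the proof of Lemma~\ref{lemma: VOI-UCB bounded probability lower tail}, but replaces the Lipschitz argument with the annealing constraint $\kappa_{*}(t,s_*) \le \frac{1}{c_\kappa R} B_{t,s_*}$. The idea is to show that the penalty can absorb at most a fixed fraction $1/c_\kappa$ of the bonus. We then apply the lower-tail concentration assumption~\eqref{eq: MAB concentration} with a rescaled threshold.

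\textbf{Step 1: bound the penalty by a fraction of the bonus.} Since rewards lie in $[-R,R]$, the empirical mean satisfies $|\bar X_{*,s_*}| \le R$. Hence
\[
\kappa_*(t,s_*)\,|\bar X_{*,s_*}| \le \frac{B_{t,s_*}}{c_\kappa R}\cdot R = \frac{B_{t,s_*}}{c_\kappa},
\]
which gives
\[
U'_*(s_*,t) \ge \bar X_{*,s_*} + \Big(1-\frac{1}{c_\kappa}\Big)B_{t,s_*}.
\]
It follows that the event $\{U'_*(s_*,t)\le \mu_*\}$ is contained in
\[
\Big\{\bar X_{*,s_*}-\mu_* \le -\tfrac{c_\kappa-1}{c_\kappa}B_{t,s_*}\Big\}.
\]
Here $c_\kappa>1$ guarantees that the coefficient is positive.

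\textbf{Step 2: match the concentration form.} Multiply the deviation by $s_*$ and write the threshold as $s_*^{\eta} z$ with
\[
z = s_*^{1-\eta}\,\frac{c_\kappa-1}{c_\kappa}\,B_{t,s_*} = \frac{c_\kappa-1}{c_\kappa}\,\beta^{1/\xi}\,t^{\alpha/\xi}.
\]
Applying~\eqref{eq: MAB concentration} gives the bound
\[
\beta z^{-\xi} = \Big(\frac{c_\kappa}{c_\kappa-1}\Big)^{\xi} t^{-\alpha},
\]
which is exactly the claimed constant.

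\textbf{Main obstacle: ensuring $z \ge 1$.} The concentration assumption requires $z\ge 1$. I would handle this exactly as in Lemma~\ref{lemma: VOI-UCB bounded probability lower tail}, by requiring $\beta^{1/\xi}\ge c_\kappa/(c_\kappa-1)$, which is a valid choice of the depth-dependent $\beta$. Failing that, note that when $z<1$ the claimed bound exceeds $\beta\ge 1$ and therefore holds trivially, since $\beta z^{-\xi}>\beta>1$. Either route closes the case.

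One further point needs care: the constraint uses the bandit's reward range $R$, whereas the main text states it with $R_{max}$. At each tree level I would take $R$ to be $\tilde R^{(d)}_{\max}$, so that $|\bar X_{*,s_*}|\le R$ holds.
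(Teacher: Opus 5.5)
Your proposal is correct and follows the paper's proof essentially line for line: you bound the penalty by $B_{t,s_*}/c_\kappa$ using $|\bar X_{*,s_*}|\le R$, reduce to the lower-tail event with threshold $\frac{c_\kappa-1}{c_\kappa}B_{t,s_*}$, and apply the concentration assumption with $z=\frac{c_\kappa-1}{c_\kappa}\beta^{1/\xi}t^{\alpha/\xi}$ under $\beta^{1/\xi}\ge c_\kappa/(c_\kappa-1)$. Your fallback for $z<1$ is also valid: $\beta z^{-\xi}$ equals the claimed bound identically, so it then exceeds $\beta>1$ and the bound holds trivially, which shows that the paper's ``without loss of generality'' assumption on $\beta$ is not actually needed.
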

\begin{proof}
Since $\kappa_*(t,s_*) \leq \frac{1}{c_\kappa R} B_{t,s_*}$ and 
$|\bar{X}_{*,s_*}| \leq R$, the penalty satisfies
\begin{align*}
    \kappa_*(t,s_*)|\bar{X}_{*,s_*}| \leq \frac{1}{c_\kappa R} B_{t,s_*} \cdot R 
    = \frac{B_{t,s_*}}{c_\kappa}.
\end{align*}
Therefore,

\begin{align*}
    B_{t,s_*} - \kappa_*(t,s_*) \lvert \bar X_{*,s_*} \rvert \geq B_{t,s_*} \Big(\frac{c_\kappa-1}{c_\kappa}\Big)
\end{align*}
Hence,

\begin{align*}
    \mathbb{P}\big(U'_*(s_*,t) &\leq \mu_*\big) \\
    &=\mathbb{P}\big(\bar{X}_{*,s_*} + B_{t,s_*} - \kappa_*(t,s_*) \lvert \bar X_{*,s_*} \rvert \leq \mu_*\big) \\
    & \leq \mathbb{P}\big(\bar{X}_{*,s_*} + B_{t,s_*}\Big(\frac{c_\kappa-1}{c_\kappa}\Big) \leq \mu_* \big) \\
    & = \mathbb{P}\big(s(\bar{X}_{*,s_*} - \mu_*) \leq - s\cdot B_{t,s_*} \Big(\frac{c_\kappa-1}{c_\kappa}\Big)\big)
\end{align*}
To apply the concentration assumption~\eqref{eq: MAB concentration}, we match $s^\eta z = s\cdot B_{t,s_*} ((c_\kappa-1)/c_\kappa)$, giving
\begin{align*}
    z &= s^{1-\eta} B_{t,s_*}\Big(\frac{c_\kappa-1}{c_\kappa}\Big) = s^{1-\eta} \cdot \frac{\beta^{1/\xi} t^{\alpha/\xi}}{s^{1-\eta}} \Big(\frac{c_\kappa-1}{c_\kappa}\Big) \\
    &= \beta^{1/\xi} t^{\alpha/\xi}\Big(\frac{c_\kappa-1}{c_\kappa}\Big).
\end{align*}
To ensure $z \geq 1$, without loss of generality, let $\beta$ be large enough that $\beta^{1/\xi} \geq (c_\kappa/(c_\kappa-1))$ (cf.\ the discussion below Eq.~(20) in \citet{shah2022polyuct}). Applying~\eqref{eq: MAB concentration}:
\begin{align*}
    \mathbb{P}\big(U'_*(s_*,t) \leq \mu_*\big) 
    \leq \beta z^{-\xi} 
    &= \beta \cdot \left(\beta^{1/\xi} t^{\alpha/\xi}\Big(\frac{c_\kappa-1}{c_\kappa}\Big)\right)^{-\xi} \\
    &= \Big(\frac{c_\kappa}{c_\kappa-1}\Big)^\xi t^{-\alpha}.
\end{align*}
\end{proof}

Lemma~\ref{lemma: VOI-UCB bounded probability kappa scheduled} allows us to upper bound the expected number of suboptimal arm pulls as follows.

\begin{lemma}
\label{lemma: scheduled alpha VOI-UCB bounded expected pulls}
Let $i \in [K]$, $i \neq i^*$ be a suboptimal arm. Then, for all $n \ge 1$, the expected number of pulls of arm $i$ satisfies
\begin{align*}
\mathbb{E}[T_i(n)]
    &\le \left(\frac{2}{\Delta_i} \cdot \beta^{1/\xi}\right)^{\frac{1}{1-\eta}} n^\frac{\alpha}{\xi(1-\eta)}  + \frac{1 + \left(\frac{c_\kappa}{c_\kappa - 1}\right)^\xi}{\alpha-2} + 1.
\end{align*}
\end{lemma}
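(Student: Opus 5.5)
The argument follows the standard UCB pull-count decomposition of Lemma~2 in \citet{shah2022polyuct}. We swap in the two tail bounds just proved: Lemma~\ref{lemma: VOI-UCB bounded probability kappa scheduled} for the suboptimal arm and Lemma~\ref{lemma: optimal arm lower bound} for the optimal arm. Fix a suboptimal arm $i$ and $n\ge1$, and set the threshold $A := A'_i(n)$. Since $B_{t,u}$ is nondecreasing in $t$, we have $A'_i(t)\le A'_i(n)$ for all $t\le n$. We split the count as
\begin{align*}
T_i(n) \le A + \sum_{t=A+1}^{n} \mathbf{1}\{I_t = i,\; T_i(t-1)\ge A\}.
\end{align*}
On the event $\{I_t=i\}$ the selection rule~\eqref{eq:annealed_action_selection_policy} gives $U'_i(T_i(t-1),t)\ge U'_*(T_*(t-1),t)$. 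Hence either $U'_*(T_*(t-1),t)\le\mu_*$ or $U'_i(T_i(t-1),t)>\mu_*$.

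Next we bound the expectation by a union over the possible pull counts. The first event is contained in the union over $s_*\in\{1,\dots,t\}$ of $\{U'_*(s_*,t)\le\mu_*\}$. By Lemma~\ref{lemma: optimal arm lower bound}, each term has probability at most $(c_\kappa/(c_\kappa-1))^\xi t^{-\alpha}$, so the union has probability at most $(c_\kappa/(c_\kappa-1))^\xi t^{1-\alpha}$. The second event, intersected with $T_i(t-1)\ge A\ge A'_i(t)$, is contained in the union over $s_i\in\{A'_i(t),\dots,t\}$ of $\{U'_i(s_i,t)>\mu_*\}$. By Lemma~\ref{lemma: VOI-UCB bounded probability kappa scheduled}, each term is at most $t^{-\alpha}$, so this union is at most $t^{1-\alpha}$. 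Summing over $t$ and using $\alpha>2$,
\begin{align*}
\sum_{t\ge 2} t^{1-\alpha} \le \int_1^\infty x^{1-\alpha}\,dx = \frac{1}{\alpha-2}.
\end{align*}
This yields the term $\bigl(1+(c_\kappa/(c_\kappa-1))^\xi\bigr)/(\alpha-2)$.

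Finally, we bound the threshold using the closed form of $A'_i(n)$. The ceiling contributes at most $+1$, which gives
\begin{align*}
A'_i(n) \le \Bigl(\tfrac{2}{\Delta_i}\beta^{1/\xi}\Bigr)^{\frac{1}{1-\eta}} n^{\frac{\alpha}{\xi(1-\eta)}} + 1.
\end{align*}
Adding the two contributions gives the claimed bound.

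The step needing most care is the bookkeeping in the union bound. When $\kappa$ depends on $(t,s)$, the index $U'_i(s,t)$ must be evaluated with the counts fixed at $s$, so that each fixed-$(s,t)$ lemma applies. We must also check that the monotonicity of $A'_i(t)$ in $t$ is used correctly. The annealed penalty enters only through the sign argument (penalty $\ge0$ for arm $i$) and the factor $(c_\kappa-1)/c_\kappa$ for the optimal arm, both of which the two lemmas already handle. The condition $\beta^{1/\xi}\ge c_\kappa/(c_\kappa-1)$ is assumed as in Lemma~\ref{lemma: optimal arm lower bound}.
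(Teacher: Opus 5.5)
Your proposal is correct and takes the same route as the paper. The paper's proof just defers to the pull-count decomposition of Lemma~2 in \citet{shah2022polyuct}, with Lemma~\ref{lemma: VOI-UCB bounded probability kappa scheduled} and Lemma~\ref{lemma: optimal arm lower bound} substituted for the two tail events. You write out the steps it leaves implicit: the threshold $A'_i(n)$ with its monotonicity in $t$, the union over pull counts, the bound $\sum_{t\ge 2} t^{1-\alpha}\le 1/(\alpha-2)$, and the ceiling's $+1$. The resulting constants match the stated bound exactly.
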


\begin{proof}
    The proof follows the structure of Lemma~\ref{lemma: VOI-UCB bounded expected pulls}. 
    Event $E_1$ (suboptimal arm's VOI-UCB exceeds $\mu_*$) is bounded 
    using Lemma~\ref{lemma: VOI-UCB bounded probability kappa scheduled}. 
    Event $E_2$ (optimal arm's VOI-UCB falls below $\mu_*$) is bounded 
    using Lemma~\ref{lemma: optimal arm lower bound}. 
    The remainder follows identically, with the constant $(1 + (c_\kappa/(c_\kappa - 1))^\xi) / (\alpha-2)$ 
    absorbing the factor from the scaled bonus.
\end{proof}

Now we can state the convergence and concentration results for the annealed VOI nonstationary multi-armed bandit. Let $\bar X_n \equiv \frac{1}{n} \sum_{i=1}^K T_i(n)\bar X_{i,T_i(n)}$ denote the empirical average under the annealed VOI-UCB algorithm \eqref{eq:annealed_action_selection_policy}. Then, $\bar X_n$ satisfies the following convergence and concentration properties.
\begin{theorem}
\label{theorem: MAB VOI-UCB scheduled kappa Convergence}
Consider a nonstationary MAB satisfying~\eqref{eq: MAB convergence} and~\eqref{eq: MAB concentration}. 
Suppose that Algorithm~\eqref{eq:annealed_action_selection_policy} is applied with parameter 
$\alpha$ such that $\xi \eta (1-\eta) \le \alpha < \xi(1-\eta)$ and $\alpha > 2$. 
Then, the following holds:

\noindent \textbf{A. Convergence}:
{\small
\begin{align*}
&\big|\mathbb{E}[\bar X_n] - \mu^*\big|\\
&
\le \lvert \delta^*_{n} \rvert \\
&+ \frac{2R(K-1)}{n} 
\!\left(
\!\!\Big(\frac{2}{\Delta_{\min}} \beta^{\frac{1}{\xi}}\Big)^{\frac{1}{1-\eta}} \!\cdot\! n^{\frac{\alpha}{\xi(1-\eta)}} \!+ \!\frac{1 + \left(\frac{c_\kappa}{c_\kappa - 1}\right)^\xi}{\alpha-2} \!+\! 1
\!\right).
\end{align*}
}
\noindent \textbf{B. Concentration}: there exists constants, $\beta' > 1$ and $\xi' > 0$ and $1/2 \leq \eta' < 1$ such that for every $n \geq 1$ and every $z \geq 1$,
\begin{align*}
    \mathbb{P}(n\bar{X}_n - n\mu_* \geq n^{\eta'}z) \leq \frac{\beta'}{z^{\xi'}},\\
    \mathbb{P}(n\bar{X}_n - n\mu_* \leq -n^{\eta'}z) \leq \frac{\beta'}{z^{\xi'}},
\end{align*}
where $\eta' = \frac{\alpha}{\xi(1-\eta)}, \xi' = \alpha - 1, \beta'$ is a large enough constant that depends on $R, K, \Delta_{min}, \beta, \xi, \alpha, \eta$. 
\end{theorem}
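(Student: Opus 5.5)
The proof follows that of Theorem~3 in \citet{shah2022polyuct}, in the same way Theorem~\ref{theorem: MAB VOI-UCB Convergence} did. This time we use the unbiased gaps $\Delta_i=\mu_*-\mu_i$ and the threshold
\[
A'(t)=\max_{i\in[K]}A'_i(t)=\left\lceil\Big(\tfrac{2}{\Delta_{\min}}\beta^{1/\xi}\Big)^{\frac{1}{1-\eta}}t^{\frac{\alpha}{\xi(1-\eta)}}\right\rceil .
\]
The tail bounds are supplied by Lemma~\ref{lemma: VOI-UCB bounded probability kappa scheduled} for suboptimal arms and Lemma~\ref{lemma: optimal arm lower bound} for the optimal arm. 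The expected-pull bound comes from Lemma~\ref{lemma: scheduled alpha VOI-UCB bounded expected pulls}.

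\textbf{Part A (convergence).} Start from
\[
n\,\mathbb{E}[\bar X_n]-n\mu_*=\sum_i \mathbb{E}\big[T_i(n)\bar X_{i,T_i(n)}\big]-n\mu_* .
\]
Split this into the contribution of the optimal arm $i^*$ and that of the suboptimal arms. The optimal arm is pulled $n-\sum_{i\ne i^*}T_i(n)$ times. Its empirical mean has expectation $\mu_{*,\cdot}$, which differs from $\mu_*$ by $\delta_{*,\cdot}$, and this gives the $|\delta^*_n|$ term. Every reward lies in $[-R,R]$, so each suboptimal pull changes the sum by at most $2R$. Summing Lemma~\ref{lemma: scheduled alpha VOI-UCB bounded expected pulls} over the $K-1$ suboptimal arms, with $\Delta_i$ replaced by $\Delta_{\min}$, then yields the second term. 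Dividing by $n$ gives the stated bound.

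The annealed penalty never affects the comparison target $\mu_*$. In both lemmas it only enters through $U'_i\le \bar X_{i,s_i}+B_{t,s_i}$ and through the shrunken bonus $(1-1/c_\kappa)B_{t,s_*}$. For this reason no $\kappa$-dependent bias appears in the centering, unlike in Theorem~\ref{theorem: MAB VOI-UCB Convergence}.

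\textbf{Part B (concentration).} This follows Shah's Theorem~3 argument. For the upper tail, bound the event $n\bar X_n-n\mu_*\ge n^{\eta'}z$ by a union of two events:
\begin{itemize}
\item the optimal arm's partial sums deviate by order $n^{\eta'}z$, which is controlled by assumption~\eqref{eq: MAB concentration} applied to arm $i^*$ and uniformly over its pull counts;
\item the suboptimal arms are pulled more than order $n^{\eta'}z/R$ times in total.
\end{itemize}
The second event is handled by the standard high-probability version of the pull-count argument: $T_i(n)>A'(n)+u$ forces, at some round $t$, either the event of Lemma~\ref{lemma: VOI-UCB bounded probability kappa scheduled} or that of Lemma~\ref{lemma: optimal arm lower bound}. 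Summing $t^{-\alpha}$ over $t\ge u$ gives a tail of order $u^{-(\alpha-1)}$, which is why $\xi'=\alpha-1$. Choosing $\eta'=\alpha/(\xi(1-\eta))$ makes $A'(n)\le C n^{\eta'}$, so for $z\ge1$ the deterministic part $A'(n)$ is absorbed into $n^{\eta'}z$. The lower tail is symmetric. The condition $\xi\eta(1-\eta)\le\alpha<\xi(1-\eta)$ guarantees $\eta\le\eta'<1$, so $1/2\le\eta'<1$.

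The constant $\beta'$ collects $R$, $K$, $\Delta_{\min}$, $\beta$, $\xi$, $\alpha$, $\eta$, and the factor $(c_\kappa/(c_\kappa-1))^\xi$. That last factor depends only on the fixed $c_\kappa$, and the statement does not list $c_\kappa$ among the dependencies. I would check this point and, if necessary, absorb the factor by taking $\beta$ large, as Lemma~\ref{lemma: optimal arm lower bound} already requires.

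The main obstacle is Part B. Its high-probability pull-count argument must hold with the time-varying penalty $\kappa_i(t,s_i)$, which makes the index non-monotone in a way Shah's bonus is not. The plan handles this by never using monotonicity of $U'_i$ directly. The argument relies only on the two pointwise sandwich bounds $U'_i\le\bar X_{i,s_i}+B_{t,s_i}$ and $U'_*\ge\bar X_{*,s_*}+(1-1/c_\kappa)B_{t,s_*}$, so it reduces to Shah's setting with a rescaled bonus for the optimal arm.
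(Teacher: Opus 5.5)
Your proposal is correct and takes the same route as the paper, whose proof reruns the fixed-$\kappa$ argument (itself Shah et al.'s Theorem~3) with the threshold $A'(t)$, the two scheduled tail lemmas, and the scheduled expected-pulls bound. Your sandwich-bound remark is a worthwhile addition that the paper leaves implicit; when you write out Part~B, pad the optimal arm's sum to exactly $n$ pulls as in Part~A rather than controlling it uniformly over pull counts (the fixed-$n$ assumption does not give that), and use the fact that the surrogate event $\{\bar X_{*,s}+(1-1/c_\kappa)B_{t,s}\le\mu_*\}$ shrinks as $t$ grows to handle the random count $T_*(t)$, since a plain union over $s\le t$ yields only a $u^{-(\alpha-2)}$ tail rather than the $u^{-(\alpha-1)}$ needed for $\xi'=\alpha-1$.
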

\begin{proof}
    The proof follows directly from the proof of Theorem~\ref{theorem: MAB VOI-UCB Convergence}, by substituting the global sample threshold:

    \begin{equation}
        \begin{aligned}
            A'(t) &= \max_{i \in [K]}A'_i(t) 
            = \left\lceil \left( \frac{2}{\Delta_{\min}} \cdot \beta^{1/\xi} \right)^{1/(1-\eta)} t^{\alpha/(\xi(1-\eta))} \right\rceil.
        \end{aligned}
    \end{equation}
\end{proof}

\subsubsection{Analyzing Leaf Level $d = D$}

By applying Lemma 4 from \citet{shah2022polyuct}, we obtain that for $\eta^{(D)} \in \left[\tfrac{1}{2}, 1\right)$, and sufficiently large $\xi^{(D)}$, 
there exists a constant $\beta^{(D)}$ such that the collected rewards at 
$s_{i,D-1}$ (i.e., the sum of the i.i.d. stochastic reward and the value 
evaluation) satisfy the convergence property (cf.~\eqref{eq: MAB convergence}) and the concentration 
property (cf.~\eqref{eq: MAB concentration}). The optimal mean target $\mu_*^{(D\mhyphen1)}(s_{i,D\mhyphen1})$ and $\Delta_{min}^{(D \mhyphen 1)}(s_{i,D\mhyphen1})$ are as defined in \citet{shah2022polyuct}. Then, by an application of Theorem~\ref{theorem: MAB VOI-UCB scheduled kappa Convergence}, we have the following Lemma. 

\begin{lemma}
    Consider a node corresponding to history $s_{i,D-1}$ at level $D-1$ within the VOIMCP for $i \in [n_{D-1}]$. Let $\tilde{v}^{(D-1)}(s_{i,D-1})_n$ be the total discounted reward collected at $s_{i,D-1}$ during $n \ge 1$ visits. Then, for the choice of appropriately large $\beta^{(D)} > 0$, for a given $\xi^{(D)} > 0$, $\eta^{(D)} \in [\frac{1}{2}, 1)$ and $\alpha^{(D)} > 2$, we have

\noindent \textbf{A. Convergence:}
{\small
\begin{align*}
    &\left|E\left[\frac{1}{n} \tilde{v}^{(D-1)}(s_{i,D-1})_n\right] - \mu^{(D-1)}_{*}(s_{i,D-1})\right| \\
    &\le \psi^{(D-1)}_n\!\! \left(\left( \frac{2(\beta^{(D)})^{\frac{1}{\xi^{(D)}}}}{\Delta_{\min}^{(D-1)}(s_{i,D-1})}\right)^{\frac{1}{1-\eta^{(D)}}} \!\!\cdot n^{\frac{\alpha^{(D)}}{\xi^{(D)}(1-\eta^{(D)})}} + C^{(D)} \right),
\end{align*}
}
where $\psi^{(D-1)}_n = \frac{2 \tilde{R}_{\max}^{(D-1)} (K-1)}{n}$ and $C^{(D)} = \frac{1 + \left(\frac{c_\kappa}{c_\kappa - 1}\right)^{\xi^{(D)}}}{\alpha^{(D)}-2} + 1$.

\noindent \textbf{B. Concentration:} there exist constants, $\beta' > 1$ and $\xi' > 1$ and $1/2 \le \eta' < 1$ such that for $n \ge 1$, $z \ge 1$,
\begin{align*}
    \mathbb{P}(\tilde{v}^{(D-1)}(s_{i,D-1})_n - n\mu^{(D-1)}_{*}(s_{i,D-1}) &\ge n^{\eta'} z) \le \frac{\beta'}{z^{\xi'}}, \\
    \mathbb{P}(\tilde{v}^{(D-1)}(s_{i,D-1})_n - n\mu^{(D-1)}_{*}(s_{i,D-1}) &\le -n^{\eta'} z) \le \frac{\beta'}{z^{\xi'}},
\end{align*}
where $\eta' = \frac{\alpha^{(D)}}{\xi^{(D)}(1-\eta^{(D)})}$, $\xi' = \alpha^{(D)} - 1$, and $\beta'$ is a large enough constant that is function of parameters $\alpha^{(D)}, \beta^{(D)}, \xi^{(D)}, \eta^{(D)}, \tilde{R}_{\max}^{(D-1)}, K, \Delta_{\min}^{(D-1)}(s_{i,D-1})$.
\end{lemma}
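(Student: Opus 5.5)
The plan is to treat the node $s_{i,D-1}$ as a $K$-armed nonstationary bandit and apply Theorem~\ref{theorem: MAB VOI-UCB scheduled kappa Convergence} directly, as in Lemma~\ref{lemma: leaf convergence} but with the annealed selection rule \eqref{eq:annealed_action_selection_policy} in place of the fixed-$\kappa$ rule.

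\emph{Step 1: the arm rewards satisfy the bandit assumptions.} When action $a$ is selected at $s_{i,D-1}$, the reward is $X_{a,t}=\mathcal R(s_{i,D-1},a)+\gamma\tilde v^{(D)}(s_{i,D-1}\circ a)$. Here $\tilde v^{(D)}\equiv V_{\epsilon_0}=0$ at the leaves, so these are i.i.d.\ bounded rewards in $[-\tilde R^{(D-1)}_{\max},\tilde R^{(D-1)}_{\max}]$. Lemma~4 of \citet{shah2022polyuct} then gives, for $\eta^{(D)}\in[\tfrac12,1)$, sufficiently large $\xi^{(D)}$, and suitable $\beta^{(D)}$, the convergence property \eqref{eq: MAB convergence} with $\mu_a=\mu_a^{(D-1)}(s_{i,D-1})$ and the concentration property \eqref{eq: MAB concentration}. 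These properties concern only the sampled rewards and not the selection rule, so the annealed penalty does not affect them. The unpenalized target $\mu_*^{(D-1)}$ and gap $\Delta_{\min}^{(D-1)}$ are exactly those of \citet{shah2022polyuct}; I will assume a unique optimal arm so that $\Delta_{\min}^{(D-1)}>0$.

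\emph{Step 2: check the parameter conditions of Theorem~\ref{theorem: MAB VOI-UCB scheduled kappa Convergence}.} These are $\alpha^{(D)}>2$ and $\xi\eta(1-\eta)\le\alpha<\xi(1-\eta)$, which hold once $\xi^{(D)}$ is large. We also need $\beta^{(D)}$ large enough that $(\beta^{(D)})^{1/\xi^{(D)}}\ge c_\kappa/(c_\kappa-1)$, as required in Lemma~\ref{lemma: optimal arm lower bound}. Finally, the schedule must satisfy $\kappa_a(t,s_a)\le B_{t,s_a}/(c_\kappa \tilde R^{(D-1)}_{\max})$, with the bound $R$ in the schedule read as $\tilde R^{(D-1)}_{\max}$. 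This reading is what gives $\kappa|\bar X|\le B/c_\kappa$.

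\emph{Step 3: read off the conclusions.} Apply part A of the theorem with $R=\tilde R^{(D-1)}_{\max}$. The term $\delta_{*,n}=0$ because the rewards are i.i.d.\ at the leaf level, so we obtain the convergence bound with $\psi^{(D-1)}_n$ and $C^{(D)}$ exactly as stated. Part B gives the concentration bound with $\eta'=\alpha^{(D)}/(\xi^{(D)}(1-\eta^{(D)}))$ and $\xi'=\alpha^{(D)}-1$. Since $\alpha^{(D)}>2$, we get $\xi'>1$. The lower bound $\alpha\ge\xi\eta(1-\eta)$ gives $\eta'\ge\eta\ge\tfrac12$, and $\alpha<\xi(1-\eta)$ gives $\eta'<1$.

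The main obstacle is reconciling the constant in part A: the stated lemma has $2(\beta^{(D)})^{1/\xi}/\Delta_{\min}$, matching the threshold $A'(t)$ from Lemma~\ref{lemma: VOI-UCB bounded probability kappa scheduled}. I would check that the expected-pulls bound of Lemma~\ref{lemma: scheduled alpha VOI-UCB bounded expected pulls}, summed over the $K-1$ suboptimal arms and weighted by $2\tilde R^{(D-1)}_{\max}/n$, reproduces this form. This check also covers verifying that the annealed penalty enters only through the $(c_\kappa/(c_\kappa-1))^{\xi}$ term in $C^{(D)}$.
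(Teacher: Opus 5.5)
Your proposal is correct and follows the paper's own route: use Lemma~4 of \citet{shah2022polyuct} to certify the convergence and concentration properties of the i.i.d.\ leaf rewards, then apply Theorem~\ref{theorem: MAB VOI-UCB scheduled kappa Convergence} with $R=\tilde R^{(D-1)}_{\max}$ and $\delta_{*,n}=0$. The constant you flag as the main obstacle needs no separate check: part A of that theorem already carries the factor $2\beta^{1/\xi}/\Delta_{\min}$ and the $(c_\kappa/(c_\kappa-1))^{\xi}$ term in $C^{(D)}$, so the bound can be read off directly.
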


\subsubsection{Recursion: Going from level $d$ to $d-1$}

The proof for the convergence and concentration properties of the value estimates across all levels $d$ of the VOIMCP tree proceeds via mathematical induction, following the structure in \citet{shah2022polyuct}. The mechanism of the induction is identical to the original paper, but the final, global convergence rate differs due to how we defined the sample threshold $A'_{i}(t)$.

Assume the estimated value $\tilde{v}^{(d)}(\cdot)$ at level $d$ satisfies properties \eqref{eq: MAB convergence} and \eqref{eq: MAB concentration} with parameters $\alpha^{(d)}$, $\xi^{(d)}$, $\eta^{(d)}$, and an appropriately large constant $\beta^{(d)}$. For any node $s_{i,d-1}$ at level $d-1$, the effective reward received is the combined value $R(s_{i,d-1}, a) + \gamma \tilde{v}^{(d)}(s'_{d})$. By the inductive hypothesis and an application of Lemma $4$ of \citet{shah2022polyuct}, this combined reward sequence continues to satisfy assumptions \eqref{eq: MAB convergence} and \eqref{eq: MAB concentration} with the same $\alpha, \xi, \eta$ dependencies and a re-adjusted constant $\beta^{(d-1)}$.

Under the same bounded reward, value definitions, and objects $\mu_*^{(d\mhyphen1)}(s_{i,d\mhyphen1})$ and $\Delta_{min}^{(d \mhyphen 1)}(s_{i,d\mhyphen1})$ defined in \citet{shah2022polyuct}, we apply Theorem~\ref{theorem: MAB VOI-UCB scheduled kappa Convergence} and obtain the following result regarding the desired convergence and concentration properties of the combined reward sequence at level $d-1$, which completes the induction and proves the results for all $d \geq 0$.

\begin{lemma}
    \label{lemma:anneal_d}
    Consider a node corresponding to state $s_{i,d-1}$ at level $d-1$ within the VOIMCP for $i \in [n_{d-1}]$. Let $\tilde{v}^{(d-1)}(s_{i,d-1})_n$ be the total discounted reward collected at $s_{i,d-1}$ during $n \ge 1$ visits. Then, for the choice of appropriately large $\beta^{(d)} > 0$, for a given $\xi^{(d)} > 0$, $\eta^{(d)} \in [\frac{1}{2}, 1)$ and $\alpha^{(d)} > 2$, we have

\noindent \textbf{A. Convergence:}
{\small
\begin{align*}
    &\left|E\left[\frac{1}{n} \tilde{v}^{(d-1)}(s_{i,d-1})_n\right] - \mu^{(d-1)}_{*}(s_{i,d-1})\right| \\
    &\le \psi^{(d-1)}_{n}\!\! \left( \left( \frac{2(\beta^{(d)})^{\frac{1}{\xi^{(d)}}}}{\Delta_{\min}^{(d-1)}(s_{i,d-1})}\right)^{\frac{1}{1-\eta^{(d)}}} \!\!\cdot n^{\frac{\alpha^{(d)}}{\xi^{(d)}(1-\eta^{(d)})}} + C^{(d)} \right),
\end{align*}
}
where $\psi^{(d-1)}_{n} = \frac{2 \tilde{R}_{\max}^{(d-1)} (K-1)}{n}$ and $C^{(d)} = \frac{1 + \left(\frac{c_\kappa}{c_\kappa - 1}\right)^{\xi^{(d)}}}{\alpha^{(d)}-2} + 1$.

\noindent \textbf{B. Concentration:} there exist constants, $\beta' > 1$ and $\xi' > 1$ and $1/2 \le \eta' < 1$ such that for $n \ge 1$, $z \ge 1$,
\begin{align*}
    \mathbb{P}(\tilde{v}^{(d-1)}(s_{i,d-1})_n - n\mu^{(d-1)}_{*}(s_{i,d-1}) &\ge n^{\eta'} z) \le \frac{\beta'}{z^{\xi'}}, \\
    \mathbb{P}(\tilde{v}^{(d-1)}(s_{i,d-1})_n - n\mu^{(d-1)}_{*}(s_{i,d-1}) &\le -n^{\eta'} z) \le \frac{\beta'}{z^{\xi'}},
\end{align*}
where $\eta' = \frac{\alpha^{(d)}}{\xi^{(d)}(1-\eta^{(d)})}$, $\xi' = \alpha^{(d)} - 1$, and $\beta'$ is a large enough constant that is function of parameters $\alpha^{(d)}, \beta^{(d)}, \xi^{(d)}, \eta^{(d)}, \tilde{R}_{\max}^{(d-1)}, K, \Delta_{\min}^{(d-1)}(s_{i,d-1})$.
\end{lemma}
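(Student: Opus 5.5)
The statement to prove is Lemma~\ref{lemma:anneal_d}: the level-$(d-1)$ convergence and concentration properties under the annealed selection rule. I would argue by backward induction on the level, with Lemma~\ref{lemma:anneal_d} at $d=D$ (the leaf lemma just above) as the base case. For the inductive step, assume $\tilde v^{(d)}(\cdot)$ satisfies the convergence property \eqref{eq: MAB convergence} and the concentration property \eqref{eq: MAB concentration} at every node of level $d$. The constants are $\alpha^{(d)},\xi^{(d)},\eta^{(d)}$ and a suitably large $\beta^{(d)}$.

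Fix a node $s_{i,d-1}$ and view it as a $K$-armed nonstationary bandit. Arm $a$ returns the combined reward $\mathcal R(s_{i,d-1},a)+\gamma\tilde v^{(d)}(s_{i,d-1}\circ a)$. This reward is bounded in $[-\tilde R^{(d-1)}_{\max},\tilde R^{(d-1)}_{\max}]$.

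\textbf{Step 1.} The combined reward is an i.i.d.\ bounded reward plus a discounted process that already satisfies \eqref{eq: MAB convergence} and \eqref{eq: MAB concentration} by the inductive hypothesis. Lemma~4 of \citet{shah2022polyuct} then gives that each arm's reward sequence satisfies the same two properties. The exponents are unchanged and the constant is re-adjusted. The arm limits are $\mu_a^{(d-1)}(s_{i,d-1})$. Note that the annealing enters only the selection rule, not the reward process, so this step is verbatim from Shah et al.

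\textbf{Step 2.} Check the standing hypothesis of the annealed bandit analysis at this node, namely
\[
0\le \kappa_a(t,s_a)\le \frac{B_{t,s_a}}{c_\kappa \tilde R^{(d-1)}_{\max}}.
\]
The main-text schedule is stated with $R_{\max}$, so I would either define the schedule per level with $\tilde R^{(d-1)}_{\max}$, or note that $\tilde R^{(d-1)}_{\max}\ge R_{\max}$ makes the per-level condition stronger and must be imposed. I expect this to be the main obstacle. The concern is whether the bound on the penalty, $\kappa|\bar X|\le B/c_\kappa$, in Lemma~\ref{lemma: optimal arm lower bound} still holds at internal nodes where $|\bar X|$ can reach $\tilde R^{(d-1)}_{\max}\approx R_{\max}/(1-\gamma)$. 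I would resolve it by stating the schedule with the level-dependent bound. Open-loop arms have $\kappa_a=0$, which is trivially admissible.

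\textbf{Step 3.} Apply Theorem~\ref{theorem: MAB VOI-UCB scheduled kappa Convergence} at this node with $R=\tilde R^{(d-1)}_{\max}$, the unbiased gap $\Delta^{(d-1)}_{\min}(s_{i,d-1})$, and the optimal target $\mu_*^{(d-1)}(s_{i,d-1})=\max_a\mu_a^{(d-1)}$.

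\begin{itemize}
\item Part~A of that theorem gives the convergence bound. The threshold $A'(t)$ yields the factor $(2\beta^{1/\xi}/\Delta_{\min})^{1/(1-\eta)}$, and Lemma~\ref{lemma: scheduled alpha VOI-UCB bounded expected pulls} supplies the constant $C^{(d)}$. The extra term $|\delta_{*,n}|$ is itself controlled by the inductive convergence rate, so it is absorbed into the same order, exactly as in Lemma~6 of \citet{shah2022polyuct}.
\item Part~B gives concentration with $\eta'=\alpha^{(d)}/(\xi^{(d)}(1-\eta^{(d)}))$ and $\xi'=\alpha^{(d)}-1$. The required conditions $\xi'>1$ and $\eta'\in[1/2,1)$ follow from $\alpha^{(d)}>2$ and $\xi\eta(1-\eta)\le\alpha<\xi(1-\eta)$.
\end{itemize}

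\textbf{Step 4.} Confirm that the exponents produced at level $d-1$ match the parameters that the next induction step at the parent level requires. This is the depth-dependent choice of $\xi,\alpha,\beta$ deferred to Shah et al. The $c_\kappa$ factor only changes $\beta'$ through $C^{(d)}$, which closes the induction.
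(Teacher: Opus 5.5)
Your proposal is correct and follows the paper's argument. The inductive hypothesis together with Lemma~4 of \citet{shah2022polyuct} gives convergence and concentration for the combined reward $\mathcal R(s_{i,d-1},a)+\gamma\tilde v^{(d)}(s_{i,d-1}\circ a)$, and Theorem~\ref{theorem: MAB VOI-UCB scheduled kappa Convergence} applied at the node yields both parts. The paper leaves two points implicit that you handle correctly: your Step~2, which imposes the schedule with the level-dependent bound $\tilde R^{(d-1)}_{\max}$ as Lemma~\ref{lemma: optimal arm lower bound} requires (since $|\bar X_{*,s_*}|$ can reach $\tilde R^{(d-1)}_{\max}$ at internal nodes), and your remark that $|\delta_{*,n}|$ must be absorbed via the inductive rate.
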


\subsection{Completing Proof of Theorem 3.}
Using Lemma \ref{lemma:anneal_d}, we conclude that the recursive relationship going from $d$ to $d-1$ holds for all $d\geq1$ with level $0$ being the root. Thus, for the root state $s^{(0)}$, under $n$ total simulations of annealed VOI-MCTS, the empirical average of the rewards $\frac{1}{n}\tilde v^{(0)}(s_0)_n$ is such that (using the fact that $\alpha^{(0)} \equiv \xi^{(0)} (1 - \eta^{(0)}) / \eta^{(0)}$)

\begin{align}
    \left|\mathbb{E}\left[\frac{1}{n} \tilde{v}^{(0)}(s_{0})_n\right] - \mu^{(0)}_{*}(s_{0})\right| &\le O\left( n^{\frac{\alpha^{(0)}}{\xi^{(0)} (1-\eta^{(0)})} - 1} \right) \nonumber \\
    &= O(n^{\eta-1})
\end{align}
where $\mu^{(0)}_{*}(s_{0})$ is the value for $s_0$ after $D$ iterations of value iteration. We seek to transfer this root-level VOI-MCTS result to the VOIMCP estimator
$\bar{V}_n(b_0)=\frac{1}{n}\sum_{a\in\mathcal{A}'}N(h_0a)\bar Q(h_0a)$. 

The deterministic-transition analysis above extends to stochastic transitions by
the same meta-node construction as in Lemma~A.1 of \citet{shah2022polyuct}.
Since $\mathcal P'$ is a finite-horizon POMDP, Lemmas~1 and~2 of
\citet{silver2010pomcp} apply directly. Recall the derived history-MDP $\tilde{\mathcal M}'$ for $\mathcal{P}'$ defined in the proof of Theorem \ref{thm:VOIMCP convergence}. 

Let $\tilde V_D^*(h_0)$ denote the ordinary depth-$D$ optimal value at the root of
$\tilde{\mathcal M}'$. Since Lemma~\ref{lemma:anneal_d} is centered at the raw
optimal mean, the root target satisfies
\[
\mu_*^{(0)}(s_0)=\tilde V_D^*(h_0).
\]

By Lemma~1 of \citet{silver2010pomcp}, the value of any history-dependent policy
in $\mathcal P'$ equals its value in the derived history-MDP $\tilde{\mathcal M}'$.
Moreover, because the augmented action space $\mathcal A'=\mathcal A^{OL}\cup
\mathcal A^{CL}$ contains a closed-loop counterpart of every original action and
closed-loop backups weakly dominate open-loop backups, the ordinary optimal value
in $\mathcal P'$ is exactly the original POMDP optimal value. Hence
\[
\tilde V_D^*(h_0)=V_D^*(b_0).
\]

By Lemma 2 of \citet{silver2010pomcp}, the simulation traces generated by annealed VOIMCP in 
$\mathcal{P}'$ are equal in distribution to those generated by annealed VOI-MCTS in 
$\tilde{\mathcal{M}}'$. Therefore, the root estimator produced by annealed VOIMCP after $n$ simulations 
has the same distribution as the root estimator analyzed above. Thus, we conclude 
that the output $\bar{V}_n(b_0)$ of annealed VOIMCP with $n$ simulations satisfies
\begin{align}
    \left| \mathbb{E}[\bar{V}_n(b_0)] - V^*_D(b_0) \right| \leq \mathcal{O}(n^{\eta - 1}),
\end{align}
thus proving the theorem.

\section{Supplement on Empirical Evaluation}
\subsection{Implementation Details of VOIMCP}

While the main text presents the theoretical recursive formulation of VOIMCP for convergence analysis, our practical implementation utilizes the standard PO-UCT expansion and rollout strategy for computational efficiency. Algorithm \ref{alg:voi-mcp-impl} details this implementation. The primary distinction is the termination of the \textsc{simulate} procedure and returning a value estimate (e.g., with a heuristic rollout policy) in Line $17$ upon expanding a new history node $h$.

\subsection{Experimental Protocol}

Each experimental trial refers to one full environment episode rollout under an online replanning framework, starting from an initial state/belief, and terminating under the task's terminal state or when the horizon is reached. We plan by interleaving policy planning and execution, using a standard SIR particle filter to update the belief before each replanning step. Each trial uses an independent seed, controlling any randomness in the trial, e.g., sampling from the generative model, particle filtering.

\begin{algorithm}[t]
\caption{VOIMCP (Implementation)}
\label{alg:voi-mcp-impl}
\small
\vspace{0pt}
\begin{algorithmic}[1]
\Procedure{Search}{$b$}
  \ForAll{$i = 1,\dots,n$}
    \State $s\sim b$
    \State $h_0 = \{b\}$
    \State \Call{Simulate}{$s,h_0,0$}
  \EndFor
  \State \Return $\displaystyle
    \arg\max_{a\in\mathcal{A}'}\bar Q(ha)$   
  
\EndProcedure
\vspace{2pt}
\Procedure{Simulate}{$s,h,depth$}
  \If{$depth=D$} \State \Return $0$ \EndIf
  \If{$h\notin T$}
    \ForAll{$a\in\mathcal{A'}$}
      \State $T(ha)\leftarrow(0,0)$
    \EndFor
    \State \Return \Call{ValueEstimate}{$s,h,depth$}
  \EndIf
  \State $a^*\!\in\underset{a\in\mathcal{A'}}{\!\arg\max}\; UCB_{VOI}(ha)$
  \If{$a^* \in \mathcal{A}_{CL}$}
  \State $(s',o,r)\sim \mathcal{G}(s,a^*)$
  \Else
  \State $(s',r)\sim \mathcal{G}(s,a^*)$
  \State $o\leftarrow o_{null}$ 
  \EndIf
  \State $N(h)\leftarrow N(h)+1$; $N(ha^*)\leftarrow N(ha^*)+1$
  \State $R\leftarrow r+\gamma\Call{Simulate}{s',ha^*o,depth{+}1}$
  \State $\bar Q(ha^*)\leftarrow \bar Q(ha^*)+\dfrac{R - \bar Q(ha^*)}{N(ha^*)}$
  \State \Return $R$
\EndProcedure
\end{algorithmic}
\end{algorithm}

\subsection{Hyperparameters}

Table~\ref{tab:hyperparams} summarizes the best-performing hyperparameters identified through grid search. We observed that VOIMCP is relatively robust to the exploration constant $c$. Across all three benchmark domains, we found that VOIMCP performance is relatively robust to the choice of $\kappa$, with values in the range $[0.01,0.2]$ consistently outperforming the baselines.

\begin{table}[ht!]
    \centering
    \setlength{\tabcolsep}{4pt}
    \renewcommand{\arraystretch}{1.3}
    \begin{tabular}{l l c c c}
        \hline
        Domain & Algorithm & $c$ & $\kappa$ & $D$ \\
        \hline
        Tracking              & VOIMCP    & 100 & 0.03 & 20 \\
                      & PO-UCT   & 100 & - & 20 \\
                      & I-UCB POMCP & 100 & - & 20 \\
                      & Open-Loop & 100 & - & 20 \\
        \hline
        FVRockSample  & VOIMCP    & 10 & 0.02 & 20 \\
                      & PO-UCT    & 10 & - & 20 \\
                      & I-UCB POMCP & 10 & - & 20 \\
                      & Open-Loop & 10 & - & 20 \\
        \hline
        Laser Tag               & VOIMCP    & 100 & 0.01 & 20 \\
                       & PO-UCT   & 100 & - & 20 \\
                       & I-UCB POMCP & 100 & - & 20 \\
                      & Open-Loop & 100 & - & 20 \\
        \hline
    \end{tabular}
    \caption{Summary of hyperparameters used in experiments.}
    \label{tab:hyperparams}
\end{table}

\subsection{Annealing Results}

\begin{figure}[t!]
  \centering
  \begin{subfigure}[b]{\linewidth}
    \centering
    \includegraphics[width=0.73\linewidth]{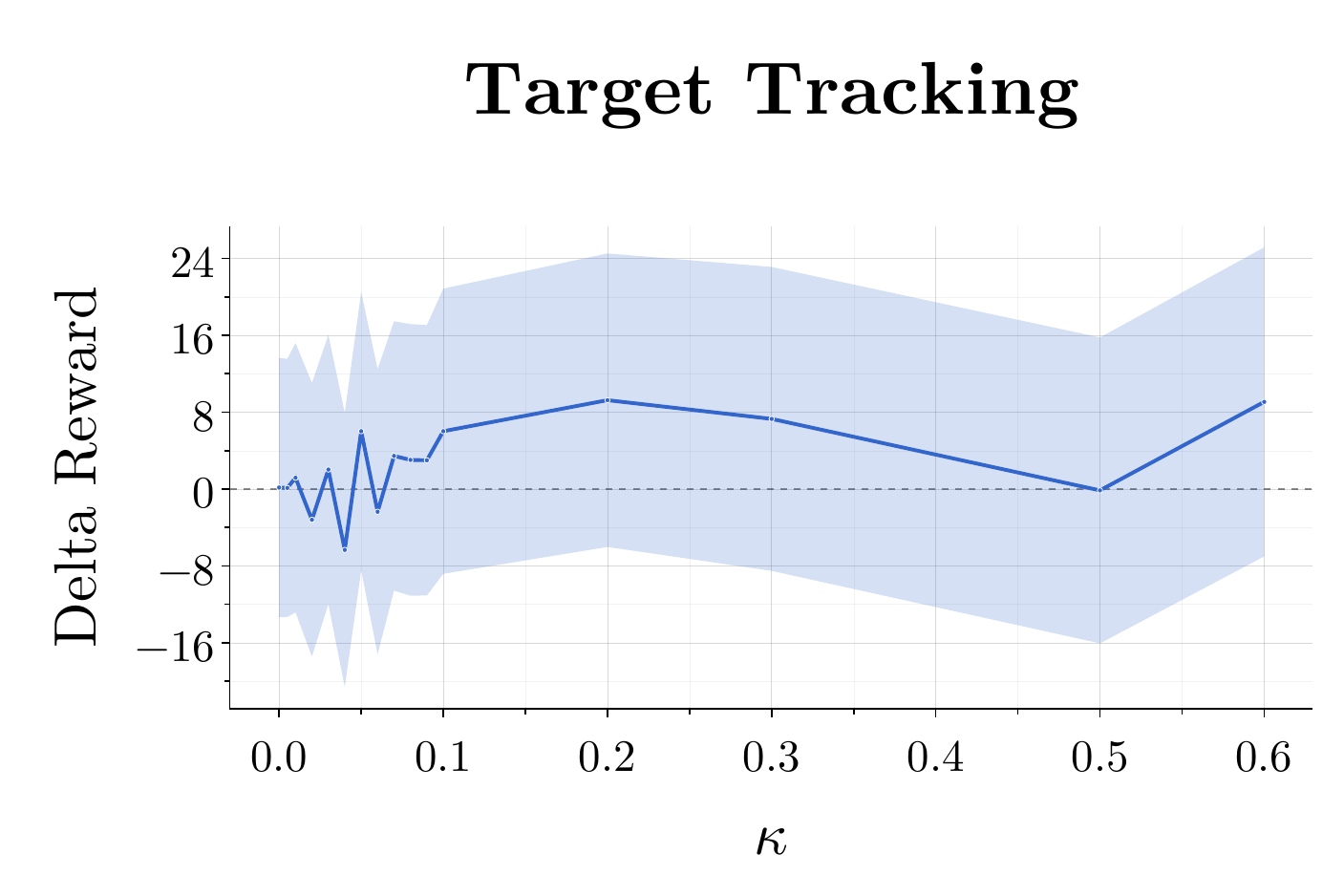}
    \end{subfigure}
  \begin{subfigure}[b]{\linewidth}
    \centering
    \includegraphics[width=0.73\linewidth]{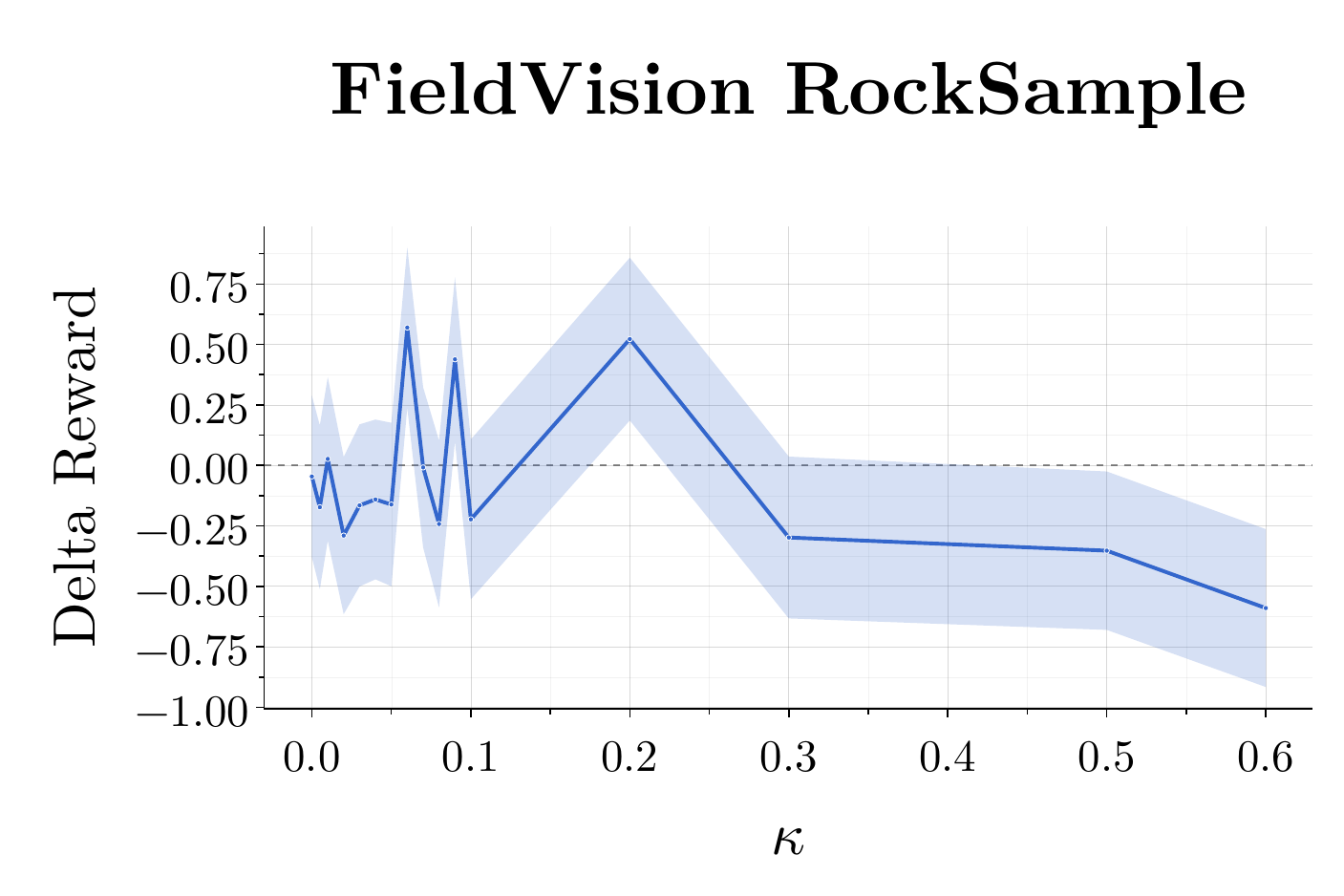}
    \end{subfigure}
  \begin{subfigure}[b]{\linewidth}
    \centering
    \includegraphics[width=0.73\linewidth]{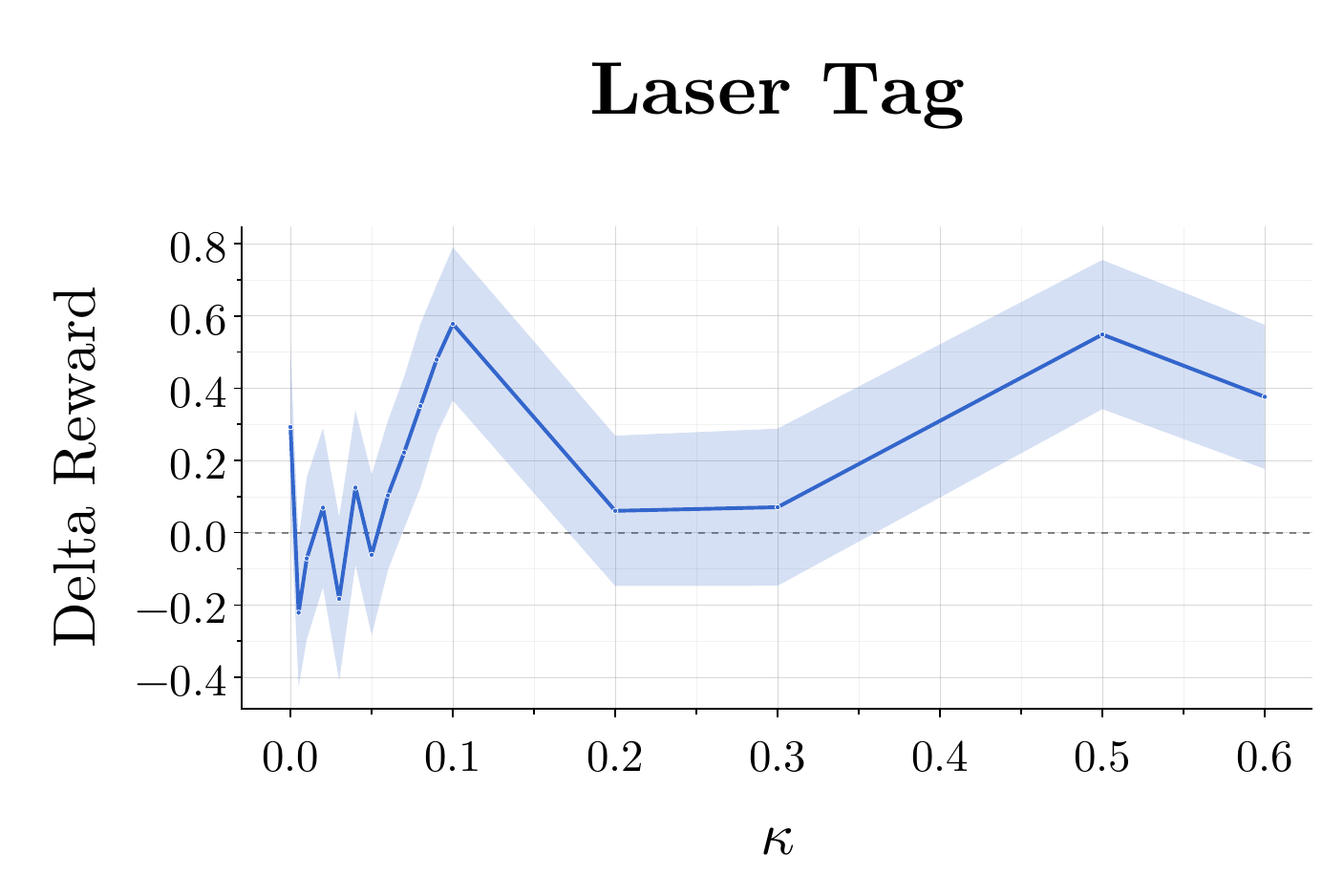}
    \end{subfigure}
  \caption{Results for VOIMCP difference between the annealed and standard algorithms, presenting delta discounted cumulative reward over $1000$ trials. The lighter-colored ribbons around the Monte Carlo mean display the $95\%$ confidence interval.}
  \label{fig:anneal results}
\end{figure}

We additionally evaluate the performance of VOIMCP using a simplified count-based schedule where $\kappa$ decays as $\kappa_N=\kappa_{max}\frac{x}{1+x}$ where $x=\frac{\kappa}{\kappa_{max}}B_N$. As visit counts grow, $\kappa$ decays proportionally to $B_N$ and is smoothly bounded in [0, $\kappa_{max}$). Figure~\ref{fig:anneal results} compares the discounted cumulative reward difference between the fixed-$\kappa$ implementation and the annealed variant across all three benchmarks (annealed minus standard).

These results show that this annealing schedule is not only theoretically rigorous but also practically beneficial. Over a range of $\kappa_{max}$, when compared to a fixed $\kappa_{max}$ baseline, the annealed VOIMCP variant achieved peak performance improvements of approximately 2.52\%, 4.06\%, and 3.44\% for the Target Tracking, FieldVision RockSample, and Laser Tag domains, respectively. While these gains are modest, these results serve as a valuable proof of concept: annealing can provide a path to asymptotic optimality while reducing reliance on careful tuning of a fixed $\kappa$. A more exhaustive analysis of complex annealing schedules and their empirical trade-offs remains a subject for future work.

\end{document}